\documentclass[twoside,11pt]{article}
\usepackage{jmlr2e}

\usepackage{amsmath}
\usepackage{mathrsfs}
\usepackage{dsfont}
\usepackage{enumerate}
\usepackage{subfig}
\usepackage{float}

\newtheorem{assumption}{Assumption}

\DeclareMathOperator*{\argmin}{arg\,min}

\hypersetup{
colorlinks = true,
urlcolor = blue,
linkcolor = blue,
citecolor = blue,
}

\ShortHeadings{Some Theoretical Insights into Wasserstein GANs}{Biau, Sangnier and Tanielian}
\firstpageno{1}

\usepackage{lastpage}
\jmlrheading{22}{2021}{1-\pageref{LastPage}}{6/20}{5/21}{20-553}{G\'erard Biau, Maxime Sangnier and Ugo Tanielian}
\ShortHeadings{Some Theoretical Insights into Wasserstein GANs}{Biau, Sangnier and Tanielian}

\begin{document}
\title{Some Theoretical Insights into Wasserstein GANs}

\author{\name G\'erard Biau 
        \email gerard.biau@sorbonne-universite.fr\\
       \addr Laboratoire de Probabilit\'es, Statistique et Mod\'elisation\\
       Sorbonne Universit\'e\\
       4 place Jussieu\\
       75005 Paris, France
       \AND
       \name Maxime Sangnier \email maxime.sangnier@sorbonne-universite.fr \\
       \addr Laboratoire de Probabilit\'es, Statistique et Mod\'elisation\\
       Sorbonne Universit\'e\\
       4 place Jussieu\\
       75005 Paris, France
      \AND
       \name Ugo Tanielian \email u.tanielian@criteo.com \\
       \addr Laboratoire de Probabilit\'es, Statistique et Mod\'elisation \& Criteo AI Lab\\
       Criteo AI Lab\\
       32 rue Blanche\\
       75009 Paris, France}

\editor{Nicolas Vayatis}
\maketitle

\begin{abstract}
Generative Adversarial Networks (GANs) have been successful in producing outstanding results in areas as diverse as image, video, and text generation. Building on these successes, a large number of empirical studies have validated the benefits of the cousin approach called Wasserstein GANs (WGANs), which brings stabilization in the training process. In the present paper, we add a new stone to the edifice by proposing some theoretical advances in the properties of WGANs. First, we properly define the architecture of WGANs in the context of integral probability metrics parameterized by neural networks and highlight some of their basic mathematical features. We stress in particular interesting optimization properties arising from the use of a parametric $1$-Lipschitz discriminator. Then, in a statistically-driven approach, we study the convergence of empirical WGANs as the sample size tends to infinity, and clarify the adversarial effects of the generator and the discriminator by underlining some trade-off properties. These features are finally illustrated with experiments using both synthetic and real-world datasets.
\end{abstract}

\begin{keywords}
  Generative Adversarial Networks, Wasserstein distances, deep learning theory, Lipschitz functions, trade-off properties
\end{keywords}

\section{Introduction}\label{section:introduction}
Generative Adversarial Networks (GANs) is a generative framework proposed by \citet{GANs}, in which two models (a generator and a discriminator) act as adversaries in a zero-sum game. Leveraging the recent advances in deep learning, and specifically convolutional neural networks \citep{lecun98gradientbasedlearning}, a large number of empirical studies have shown the impressive possibilities of GANs in the field of image generation \citep{radford2015unsupervised,ledig2017photo,karras2017progressive,brock2018large}. Lately, \citet{karras2018style} proposed an architecture able to generate hyper-realistic fake human faces that cannot be differentiated from real ones (see the website \href{http://www.thispersondoesnotexist.com}{thispersondoesnotexist.com}). The recent surge of interest in the domain also led to breakthroughs in video \citep{acharya2018towards}, music \citep{mogren2016c}, and text generation \citep{SeqGANs,MaskGANs}, among many other potential applications.

The aim of GANs is to generate data that look ``similar'' to samples collected from some unknown probability measure $\mu^\star$, defined on a Borel subset $E$ of $\mathds{R}^D$. In the targeted applications of GANs, $E$ is typically a submanifold (possibly hard to describe) of a high-dimensional $\mathds{R}^D$, which therefore prohibits the use of classical density estimation techniques. GANs approach the problem by making two models compete: the generator, which tries to imitate $\mu^\star$ using the collected data, vs.~the discriminator, which learns to distinguish the outputs of the generator from the samples, thereby forcing the generator to improve its strategy.

Formally, the generator has the form of a parameterized class of Borel functions from $\mathds{R}^d$ to $E$, say $\mathscr{G}= \{G_\theta: \theta \in \Theta \}$, where $\Theta \subseteq \mathds{R}^P$ is the set of parameters describing the model. Each function $G_\theta$ takes as input a $d$-dimensional random variable $Z$---it is typically uniform or Gaussian, with $d$ usually small---and outputs the ``fake'' observation $G_{\theta}(Z)$ with distribution $\mu_\theta$. Thus, the collection of probability measures $\mathscr{P}= \{ \mu_\theta : \theta \in \Theta \}$ is the natural class of distributions associated with the generator, and the objective of GANs is to find inside this class the distribution that generates the most realistic samples, closest to the ones collected from the unknown $\mu^\star$. On the other hand, the discriminator is described by a family of Borel functions from $E$ to $[0,1]$, say $\mathscr{D} = \{D_\alpha: \alpha \in \Lambda \}$, $\Lambda \subseteq \mathds{R}^Q$, where each $D_{\alpha}$ must be thought of as the probability that an observation comes from $\mu^\star$ (the higher $D(x)$, the higher the probability that $x$ is drawn from $\mu^\star$).

In the original formulation of \citet{GANs}, GANs make $\mathscr G$ and $\mathscr D$ fight each other through the following objective:
\begin{equation}\label{standard_GANs}
    \underset{\theta \in \Theta}{\inf} \ \underset{\alpha \in \Lambda}{\sup} \Big[ \mathds{E} \log(D_\alpha(X)) + \mathds{E} \log(1-D_{\alpha}(G_\theta(Z)))\Big],
\end{equation}
where $X$ is a random variable with distribution $\mu^{\star}$ and the symbol $\mathds{E}$ denotes expectation. Since one does not have access to the true distribution, $\mu^\star$  is replaced in practice with the empirical measure $\mu_n$ based on independent and identically distributed (i.i.d.) samples $X_1, \hdots, X_n$ distributed as $X$, and the practical objective becomes
\begin{equation}\label{empirical_standard_GANs}
    \underset{\theta \in \Theta}{\inf} \ \underset{\alpha \in \Lambda}{\sup} \Big[ \frac{1}{n} \sum_{i=1}^n \log(D_\alpha(X_i)) + \mathds{E} \log(1-D_{\alpha}(G_\theta(Z)))\Big].
\end{equation}
In the literature on GANs, both $\mathscr{G}$ and $\mathscr{D}$ take the form of neural networks (either feed-forward or convolutional, when dealing with image-related applications). This is also the case in the present paper, in which the generator and the discriminator will be parameterized by feed-forward neural networks with, respectively, rectifier \citep{glorot2011deep} and GroupSort \citep{chernodub2016norm} activation functions. We also note that from an optimization standpoint, the minimax optimum in \eqref{empirical_standard_GANs} is found by using stochastic gradient descent alternatively on the generator's and the discriminator's parameters.

In the initial version \eqref{standard_GANs}, GANs were shown to reduce, under appropriate conditions, the Jensen-Shanon divergence between the true distribution and the class of parameterized distributions \citep{GANs}. This characteristic was further explored by \citet{biau2018some} and \cite{Moulines}, who stressed some theoretical guarantees regarding the approximation and statistical properties of problems \eqref{standard_GANs} and \eqref{empirical_standard_GANs}. However, many empirical studies \citep[e.g.,][]{metz2016unrolled,salimans2016improved} have described cases where the optimal generative distribution computed by solving \eqref{empirical_standard_GANs} collapses to a few modes of the distribution $\mu^\star$. This phenomenon is known under the term of mode collapse and has been theoretically explained by \citet{ArBo17}. As a striking result, in cases where both $\mu^\star$ and $\mu_\theta$ lie on disjoint supports, these authors proved the existence of a perfect discriminator with null gradient on both supports, which consequently does not convey meaningful information to the generator.

To cancel this drawback and stabilize training, \citet{arjovsky2017wasserstein} proposed a modification of criterion \eqref{standard_GANs}, with a framework called Wasserstein GANs (WGANs). In a nutshell, the objective of WGANs is to find, inside the class of parameterized distributions $\mathscr P$, the one that is the closest to the true $\mu^\star$ with respect to the Wasserstein distance \citep{villani2008optimal}. In its dual form, the Wasserstein distance can be considered as an integral probability metric \citep[IPM,][]{IPMsMuller} defined on the set of $1$-Lipschitz functions. Therefore, the proposal of \citet{arjovsky2017wasserstein} is to replace the $1$-Lipschitz functions with a discriminator parameterized by neural networks. To practically enforce this discriminator to be a subset of $1$-Lipschitz functions, the authors use a weight clipping technique on the set of parameters. A decisive step has been taken by \citet{gulrajani2017improved}, who stressed the empirical advantage of the WGANs architecture by replacing the weight clipping with a gradient penalty. Since then, WGANs have been largely recognized and studied by the Machine Learning community \citep[e.g.,][]{roth2017stabilizing,petzka2018regularization,wei2018improving,karras2018style}.

A natural question regards the theoretical ability of WGANs to learn $\mu^{\star}$, considering that one only has access to the parametric models of generative distributions and discriminative functions. Previous works in this direction are those of \citet{liang2018well} and \citet{zhang2018discrimination}, who explore generalization properties of WGANs. In the present paper, we make one step further in the analysis of mathematical forces driving WGANs and contribute to the literature in the following ways:
\begin{enumerate}[$(i)$]
    \item We properly define the architecture of WGANs parameterized by neural networks. Then, we highlight some properties of the IPM induced by the discriminator, and finally stress some basic mathematical features of the WGANs framework (Section~\ref{section:framework}).
    \item We emphasize the impact of operating with a parametric discriminator contained in the set of $1$-Lipschitz functions. We introduce in particular the notion of monotonous equivalence and discuss its meaning in the mechanism of WGANs. We also highlight the essential role played by piecewise linear functions (Section~\ref{section:approximation_properties}).
    \item In a statistically-driven approach, we derive convergence rates for the IPM induced by the discriminator, between the target distribution $\mu^{\star}$ and the distribution output by the WGANs based on i.i.d.~samples (Section~\ref{section:asymptotic_properties}). We show in particular that when studying such IPMs, the smaller the network is, the faster the empirical measure converges towards $\mu^\star$. 
    \item Building upon the above, we clarify the adversarial effects of the generator and the discriminator by underlining some trade-off properties. These features are illustrated with experiments using both synthetic and real-world datasets (Section~\ref{section:trade_off_properties}).
\end{enumerate}
For the sake of clarity, proofs of the most technical results are gathered in the Appendix.
\section{Wasserstein GANs}\label{section:framework}
The present section is devoted to the presentation of the WGANs framework. After having given a first set of definitions and results, we stress the essential role played by IPMs and study some optimality properties of WGANs.
\subsection{Notation and definitions}\label{section:notation}
Throughout the paper, $E$ is a Borel subset of $\mathds R^D$, equipped with the Euclidean norm $\|\cdot\|$, on which $\mu^\star$ (the target probability measure) and the $\mu_{\theta}$'s (the candidate probability measures) are defined. Depending on the practical context, $E$ can be equal to $\mathds R^D$, but it can also be a submanifold of it. We emphasize that there is no compactness assumption on $E$.

For $K\subseteq E$, we let $C(K)$ (respectively, $C_b(K)$) be the set of continuous (respectively, continuous bounded) functions from $K$ to $\mathds{R}$. We denote by $\text{Lip}_1$ the set of $1$-Lipschitz real-valued functions on $E$, i.e.,
\begin{equation*}
    \text{Lip}_1 = \big\{ f :E\to \mathds R: |f(x)-f(y)|\leqslant {\|x-y\|}, \ (x,y) \in E^2\big\}.
\end{equation*}
The notation $P(E)$ stands for the collection of Borel probability measures on $E$, and ${P}_1 (E)$ for the subset of probability measures with finite first moment, i.e.,
\begin{equation*}
    {P}_1 (E) = \big\{ \mu \in {P} (E) : \int_{E} \|x_0-x\| \mu(\rm{d}x) < \infty \big\},
\end{equation*}
where $x_0 \in E$ is arbitrary (this set does not depend on the choice of the point $x_0$). Until the end, it is assumed that $\mu^\star \in {P}_1 (E)$. It is also assumed throughout that the random variable $Z \in \mathds R^d$ is a sub-Gaussian random vector \citep{jin2019short}, i.e., $Z$ is integrable and there exists $\gamma>0$ such that
    \begin{equation*}
        \forall v \in \mathds{R}^d, \ \mathds{E} e^{v \cdot (Z -\mathds{E} Z)} \leqslant e^{\frac{\gamma^2 \|v\|^2}{2}},
    \end{equation*}
where $\cdot$ denotes the dot product in $\mathds{R}^d$ and $\|\cdot\|$ the Euclidean norm. The sub-Gaussian property is a constraint on the tail of the probability distribution. As an example, Gaussian random variables on the real line are sub-Gaussian and so are bounded random vectors. We note that $Z$ has finite moments of all nonnegative orders \citep[][Lemma 2]{jin2019short}. Assuming that $Z$ is sub-Gaussian is a mild requirement since, in practice, its distribution is most of the time uniform or Gaussian.

As highlighted earlier, both the generator and the discriminator are assumed to be parameterized by feed-forward neural networks, that is,
\begin{equation*}
    \mathscr{G} = \{G_\theta: \theta \in \Theta \}
    \qquad \text{and} \qquad
    \mathscr{D} = \{D_\alpha: \alpha \in \Lambda \}
\end{equation*}
with $\Theta \subseteq \mathds{R}^P$, $\Lambda \subseteq \mathds{R}^Q$, and, for all $z \in \mathds R^d$,
\begin{equation}\label{eq:def_generators}
  G_{\theta}(z)=\underset{D \times u_{p-1}}{U_{p}} \sigma \big(\underset{u_{p-1} \times u_{p-2}}{U_{p-1}} \cdots \sigma(\underset{u_2 \times u_1}{U_2}\sigma(\underset{u_1 \times d}{U_1} z + \underset{u_1 \times 1}{b_1}) + \underset{u_2 \times 1}{b_2}) \cdots + \underset{u_{p-1} \times 1}{b_{p-1}} \big) + \underset{D\times 1}{b_p},
\end{equation}
for all $x\in E$,
\begin{equation}\label{eq:def_discriminators}
    D_{\alpha}(x)= \underset{1 \times v_{q-1}}{V_{q}} \Tilde{\sigma} \big(\underset{v_{q-1} \times v_{q-2}}{V_{q-1}} \cdots \Tilde{\sigma} (\underset{v_2 \times v_1}{V_2} \Tilde{\sigma} (\underset{v_1 \times D}{V_1} x + \underset{v_1\times 1}{c_1}) + \underset{v_2 \times 1}{c_2}) + \cdots +\underset{v_{q-1}\times 1}{c_{q-1}}\big) + \underset{1 \times 1}{c_q},
\end{equation}
where $p,q \geqslant 2$ and the characters below the matrices indicate their dimensions ($\mbox{lines} \times \mbox{columns}$). Some comments on the notation are in order. Networks in $\mathscr G$ and $\mathscr D$ have, respectively, $(p-1)$ and $(q-1)$ hidden layers. Hidden layers from depth $1$ to $(p-1)$ (for the generator) and from depth $1$ to $(q-1)$ (for the discriminator) are assumed to be of respective even widths $u_i$, $i=1, \hdots, p-1$, and $v_i$, $i=1, \hdots, q-1$. The matrices $U_i$ (respectively, $V_i$) are the matrices of weights between layer $i$ and layer $(i+1)$ of the generator (respectively, the discriminator), and the $b_i$'s (respectively, the $c_i$'s) are the corresponding offset vectors (in column format). We let $\sigma(x) = \text{max}(x, 0)$ be the rectifier activation function (applied componentwise) and
be the GroupSort activation function with a grouping size equal to 2 (applied on pairs of components, which makes sense in \eqref{eq:def_discriminators} since the widths of the hidden layers are assumed to be even). GroupSort has been introduced in \citet{chernodub2016norm} as a $1$-Lipschitz activation function that preserves the gradient norm of the input. This activation can recover the rectifier, in the sense that $\Tilde{\sigma}(x, 0) = (\sigma(x), -\sigma(-x))$, but the converse is not true. The presence of GroupSort is critical to guarantee approximation properties of Lipschitz neural networks \citep{Anil2018SortingOL,Huster2018LimitationsOT}, as we will see later.

Therefore, denoting by $\mathscr{M}_{(j,k)}$ the space of matrices with $j$ rows and $k$ columns, we have $U_1 \in \mathscr{M}_{(u_1,d)}$, $V_1 \in \mathscr{M}_{(v_1,D)}$, $b_1 \in \mathscr{M}_{(u_1,1)}$, $c_1 \in \mathscr{M}_{(v_1,1)}$, $U_p \in \mathscr{M}_{(D,u_{p-1})}$, $V_q \in \mathscr{M}_{(1,v_{q-1})}$, $b_p \in \mathscr{M}_{(D,1)}$, $c_q \in \mathscr{M}_{(1,1)}$. All the other matrices $U_i$, $i = 2, \hdots, p-1$, and $V_i$, $i =2, \hdots, q-1$, belong to $\mathscr{M}_{(u_i,u_{i-1})}$ and $\mathscr{M}_{(v_i,v_{i-1})}$, and vectors $b_i$, $i = 2, \hdots, p-1$, and $c_i$, $i =2, \hdots, q-1$, belong to $\mathscr{M}_{(u_i,1)}$ and $\mathscr{M}_{(v_i,1)}$. So, altogether, the vectors $\theta=(U_1, \hdots, U_p, b_1, \hdots, b_p)$ (respectively, the vectors $\alpha=(V_1, \hdots, V_q, c_1, \hdots, c_q)$) represent the parameter space $\Theta$ of the generator $\mathscr G$ (respectively, the parameter space $\Lambda$ of the discriminator $\mathscr D$). We stress the fact that the outputs of networks in $\mathscr{D}$ are not restricted to $[0,1]$ anymore, as is the case for the original GANs of \citet{GANs}. We also recall the notation $\mathscr P=\{\mu_{\theta}: \theta \in \Theta\}$, where, for each $\theta$, $\mu_\theta$ is the probability distribution of $G_\theta(Z)$. Since $Z$ has finite first moment and each $G_{\theta}$ is piecewise linear, it is easy to see that $\mathscr{P} \subset P_1(E)$.

Throughout the manuscript, the notation $\|\cdot\|$ (respectively, $\|\cdot\|_{\infty}$) means the Euclidean (respectively, the supremum) norm on $\mathds R^k$, with no reference to $k$ as the context is clear. For $W=(w_{i,j})$ a matrix in $\mathscr{M}_{(k_1,k_2)}$, we let $\|W\|_2 = \sup_{\|x\|=1} \|Wx\|$ be the $2$-norm of $W$. Similarly, the $\infty$-norm of $W$ is $\|W\|_\infty = \sup_{\|x\|_{\infty}=1} \|Wx\|_{\infty}=\max_{i=1, \hdots, k_1} \sum_{j=1}^{k_2} |w_{i,j}|$. We will also use the $(2,\infty)$-norm of $W$, i.e., $\|W\|_{2, \infty} = \sup_{\|x\|=1} \|Wx\|_\infty$. We shall constantly need the following assumption:
\begin{assumption}[Compactness]\label{ass:compactness}
For all $\theta=(U_1, \hdots, U_p, b_1, \hdots, b_p) \in \Theta$,
\begin{equation*}
    \max(\|U_i\|_2, \|b_i\| : i =1,\hdots,p) \leqslant K_1,
\end{equation*}
where $K_1>0$ is a constant. Besides, for all $\alpha=(V_1, \hdots, V_q, c_1, \hdots, c_q) \in \Lambda$,
\begin{equation*}
    \|V_1\|_{2, \infty} \leqslant 1, \ \max(\|V_i\|_\infty : i =2,\hdots,q) \leqslant 1, \ \emph{and} \ \max(\|c_i\|_\infty: i = 1,\hdots,q) \leqslant K_2,
\end{equation*}
    where $K_2 \geqslant 0$ is a constant.
\end{assumption}

This compactness requirement is classical when parameterizing WGANs \citep[e.g.,][]{arjovsky2017wasserstein,zhang2018discrimination, Anil2018SortingOL}. In practice, one can satisfy Assumption \ref{ass:compactness} by clipping the parameters of neural networks as proposed by \citet{arjovsky2017wasserstein}. An alternative approach to enforce $\mathscr{D} \subseteq \text{Lip}_1$ consists in penalizing the gradient of the discriminative functions, as proposed by \citet{gulrajani2017improved}, \citet{kodali2017convergence}, \citet{wei2018improving}, and \citet{zhou2019lipschitz}. This solution was  empirically found to be more stable. The usefulness of Assumption \ref{ass:compactness} is captured by the following lemma.
\begin{lemma}\label{lem:uniformly_lipschitz_neural_nets}
    Assume that Assumption \ref{ass:compactness} is satisfied. Then, for each $\theta \in \Theta$, the function $G_\theta$ is $K_1^p$-Lipschitz on $\mathds R^{d}$. In addition, $\mathscr D \subseteq \emph{Lip}_1$.
\end{lemma}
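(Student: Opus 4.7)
The plan is to handle the two statements separately, since they rely on different activation functions and different norm structures.

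For the generator, I would write $G_\theta$ as an alternating composition of affine maps $\phi_i: x \mapsto U_i x + b_i$ (for $i=1,\dots,p$) and ReLU activations $\sigma$, all viewed between Euclidean spaces. Each $\phi_i$ is $\|U_i\|_2$-Lipschitz with respect to the Euclidean norm, and it is standard that the componentwise ReLU is $1$-Lipschitz with respect to the Euclidean norm (since $|\max(a,0)-\max(b,0)|\leqslant |a-b|$ and squaring preserves the inequality componentwise). The Lipschitz constant of a composition is bounded by the product of the individual Lipschitz constants, so $G_\theta$ is $\prod_{i=1}^{p}\|U_i\|_2$-Lipschitz, and by Assumption~\ref{ass:compactness} this is at most $K_1^p$.

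For the discriminator, the key idea is to track the norms carefully: the input lives in a Euclidean space, but from the first hidden layer on we measure everything in the $\infty$-norm, which is exactly what the assumptions $\|V_1\|_{2,\infty}\leqslant 1$ and $\|V_i\|_\infty\leqslant 1$ are designed for. First I would decompose $D_\alpha$ as an alternating composition of affine maps $\psi_i: x\mapsto V_i x + c_i$ and GroupSort activations $\tilde\sigma$. The first affine map $\psi_1$ has Lipschitz constant $\|V_1\|_{2,\infty}\leqslant 1$ viewed as a map $(\mathbb{R}^D,\|\cdot\|)\to(\mathbb{R}^{v_1},\|\cdot\|_\infty)$ (the translation by $c_1$ does not matter). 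Each subsequent $\psi_i$ for $i=2,\dots,q$ has Lipschitz constant $\|V_i\|_\infty\leqslant 1$ viewed from $\|\cdot\|_\infty$ to $\|\cdot\|_\infty$, by definition of the $\infty$-norm of a matrix.

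The main thing to verify is that GroupSort is $1$-Lipschitz with respect to $\|\cdot\|_\infty$. This follows directly from the fact that $\tilde\sigma$ simply sorts each consecutive pair, hence for any two inputs $x,y\in\mathbb{R}^{2n}$ and any pair of indices $(2k-1,2k)$, the components $(\max(x_{2k-1},x_{2k}),\min(x_{2k-1},x_{2k}))$ and $(\max(y_{2k-1},y_{2k}),\min(y_{2k-1},y_{2k}))$ differ, componentwise, by at most $\max(|x_{2k-1}-y_{2k-1}|,|x_{2k}-y_{2k}|)\leqslant \|x-y\|_\infty$; one can verify this by a short case analysis on the relative orderings of the two pairs. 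Composing all the pieces then yields $|D_\alpha(x)-D_\alpha(y)|\leqslant \|x-y\|$, i.e.\ $\mathscr{D}\subseteq\text{Lip}_1$. The only genuinely delicate step is this GroupSort estimate; everything else is bookkeeping on operator norms and the fact that composition multiplies Lipschitz constants.
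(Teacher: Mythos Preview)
Your proposal is correct and follows essentially the same approach as the paper: decompose each network as a composition of affine maps and activations, use that ReLU is $1$-Lipschitz in the Euclidean norm for the generator, and for the discriminator switch from the Euclidean norm to the $\infty$-norm after the first layer (via $\|V_1\|_{2,\infty}\leqslant 1$) and stay in the $\infty$-norm thereafter (via $\|V_i\|_\infty\leqslant 1$), using that GroupSort is $1$-Lipschitz in $\|\cdot\|_\infty$. The paper simply asserts the latter fact, whereas you sketch a case analysis for it; otherwise the arguments are identical.
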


Recall \citep[e.g.,][]{dudley_2002} that a sequence of probability measures $(\mu_k)$ on $E$ is said to converge weakly to a probability measure $\mu$ on $E$ if, for all $\varphi \in C_b(E)$,
\begin{equation*}
    \int_E \varphi \ {\rm d}\mu_k \underset{k \to \infty}{\to} \int_E \varphi \ {\rm d}\mu.
\end{equation*}
In addition, the sequence of probability measures $(\mu_k)$ in $P_1(E)$ is said to converge weakly in $P_1(E)$ to a probability measure $\mu$ in $P_1(E)$ if $(i)$ $(\mu_k)$ converges weakly to $\mu$ and if $(ii)$ $\int_{E} \|x_0-x\| \mu_k({\rm d} x) \to \int_{E} \|x_0-x\| \mu ({\rm d} x)$, where $x_0 \in E$ is arbitrary \citep[][Definition 6.7]{villani2008optimal}. The next proposition offers a characterization of our collection of generative distributions $\mathscr{P}$ in terms of compactness with respect to the weak topology in $P_1(E)$. This result is interesting as it gives some insight into the class of probability measures generated by neural networks.
\begin{proposition}\label{prop:neural_nets_tightness}
Assume that Assumption \ref{ass:compactness} is satisfied. Then the function $\Theta \ni \theta \mapsto \mu_\theta$ is continuous with respect to the weak topology in $P_1(E)$, and the set of generative distributions $\mathscr{P}$ is compact with respect to the weak topology in $P_1(E)$.
\end{proposition}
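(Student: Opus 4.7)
The plan is to prove continuity first and then derive compactness as a direct corollary, using that $\Theta$ is a compact subset of $\mathds{R}^P$ (under Assumption~\ref{ass:compactness}, which constrains each parameter block in a closed ball, so we may assume $\Theta$ is closed and bounded, hence compact).

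For continuity, fix $\theta\in\Theta$ and a sequence $\theta_k\to\theta$ in $\Theta$. Since $G_\theta(z)$ is built from matrix products, vector additions, and the continuous rectifier $\sigma$, the map $\theta\mapsto G_\theta(z)$ is continuous for every fixed $z\in\mathds{R}^d$, so $G_{\theta_k}(z)\to G_\theta(z)$ pointwise. For any $\varphi\in C_b(E)$, dominated convergence yields
\begin{equation*}
\int_E\varphi\,\mathrm d\mu_{\theta_k}=\mathds E\,\varphi(G_{\theta_k}(Z))\;\longrightarrow\;\mathds E\,\varphi(G_\theta(Z))=\int_E\varphi\,\mathrm d\mu_\theta,
\end{equation*}
which gives weak convergence $\mu_{\theta_k}\Rightarrow\mu_\theta$ in $P(E)$.

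To upgrade this to weak convergence in $P_1(E)$, I need the additional first-moment convergence $\int\|x_0-x\|\,\mathrm d\mu_{\theta_k}\to\int\|x_0-x\|\,\mathrm d\mu_\theta$. By Lemma~\ref{lem:uniformly_lipschitz_neural_nets}, each $G_\theta$ is $K_1^p$-Lipschitz on $\mathds R^d$; moreover, unwinding the recursion \eqref{eq:def_generators} and using Assumption~\ref{ass:compactness} together with $\sigma(0)=0$, one checks that $\|G_\theta(0)\|$ is bounded by a constant $M$ depending only on $K_1$ and $p$. Therefore, uniformly over $\theta\in\Theta$,
\begin{equation*}
\|G_\theta(z)\|\leqslant K_1^p\|z\|+M,\qquad z\in\mathds R^d.
\end{equation*}
Combining this with the sub-Gaussianity of $Z$ (which, by the cited Lemma~2 of \citealp{jin2019short}, ensures all moments of $\|Z\|$ are finite) yields $\sup_{\theta\in\Theta}\mathds E\,\|G_\theta(Z)\|^2<\infty$. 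This uniform $L^2$ bound makes the family $\{\|G_{\theta_k}(Z)\|\}_k$ uniformly integrable, so pointwise convergence $G_{\theta_k}(Z)\to G_\theta(Z)$ (almost surely) lifts to convergence in $L^1$, i.e. $\mathds E\|x_0-G_{\theta_k}(Z)\|\to\mathds E\|x_0-G_\theta(Z)\|$ for any fixed $x_0\in E$. This completes the proof of continuity in the weak $P_1(E)$ topology.

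Compactness is then immediate: $\mathscr P$ is the image of the compact set $\Theta$ under the continuous map $\theta\mapsto\mu_\theta$, hence compact in the weak topology of $P_1(E)$. The main technical subtlety is really the $P_1$ step: plain weak convergence comes for free from continuity of the network in its parameters, but avoiding a loss of mass at infinity requires the uniform second-moment bound on $\|G_\theta(Z)\|$ obtained from Assumption~\ref{ass:compactness} and sub-Gaussianity of $Z$. A small point to watch is that the above argument needs $\Theta$ itself to be closed; if the paper allows $\Theta$ to be merely bounded, one should apply the same reasoning to its closure inside the compact box defined by Assumption~\ref{ass:compactness} and note that $\mathscr P$ is then the continuous image of a compact set.
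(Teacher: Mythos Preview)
Your argument is correct and follows the same overall architecture as the paper: establish continuity of $\theta\mapsto\mu_\theta$ (weak convergence via pointwise convergence of $G_\theta(z)$ plus dominated convergence, then convergence of first moments), and deduce compactness of $\mathscr P$ as the continuous image of the compact set $\Theta$. The one genuine technical difference is in how the first-moment step is handled. The paper proves an explicit pointwise Lipschitz estimate in the parameter,
\[
\|G_\theta(z)-G_{\theta'}(z)\|\leqslant (\ell_1\|z\|+\ell_2)\,\|\theta-\theta'\|,
\]
with $\ell_1,\ell_2$ depending only on $p$ and $K_1$, and integrates it against the law of $Z$ to control the moments directly. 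You instead use the uniform envelope $\|G_\theta(z)\|\leqslant K_1^p\|z\|+M$, the finite second moment of the sub-Gaussian $Z$, and uniform integrability (Vitali). Both routes work; yours is slightly softer and avoids the telescoping computation, while the paper's quantitative inequality (labelled \eqref{eq:G_lipschitz_inequality}) is reused verbatim in the proofs of Lemma~\ref{lem:d_star_not_empty} and Theorem~\ref{th:continuity}, so establishing it here pays off later. Your closing remark about closedness of $\Theta$ is well taken; the paper tacitly treats $\Theta$ as the full compact box determined by Assumption~\ref{ass:compactness}.
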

\subsection{The WGANs and T-WGANs problems}
We are now in a position to formally define the WGANs problem. The Wasserstein distance (of order $1$) between two probability measures $\mu$ and $\nu$ in $P_1(E)$ is defined by
\begin{equation*}
W_1(\mu, \nu)=\inf_{\pi \in \Pi (\mu, \nu)}\int_{E \times E} \|x-y\|\pi({\rm d}x,{\rm d}y),
\end{equation*}
where $\Pi(\mu, \nu)$ denotes the collection of all joint probability measures on $E \times E$ with marginals $\mu$ and $\nu$ \citep[e.g.,][]{villani2008optimal}. It is a finite quantity. In the present article, we will use the dual representation of $W_1(\mu, \nu)$, which comes from the duality theorem of \citet{kantorovich1958space}:
\begin{equation*}
    W_1(\mu, \nu) = \underset{f \in  {\text{Lip}}_1}{\sup} |\mathds{E}_{\mu} f - \mathds{E}_{\nu} f|,
\end{equation*}
where, for a probability measure $\pi$, $\mathds E_{\pi} f=\int_{E} f {\rm d} \pi$ (note that for $f \in \text{Lip}_1$ and $\pi \in P_1(E)$, the function $f$ is Lebesgue integrable with respect to $\pi$).

In this context, it is natural to define the theoretical-WGANs (T-WGANs) problem as minimizing over $\Theta$ the Wasserstein distance between $\mu^{\star}$ and the $\mu_{\theta}$'s, i.e.,
\begin{equation}
\label{eq:theoretical_wgans}
    \underset{\theta \in \Theta}{\inf} \ W_1(\mu^\star, \mu_\theta)
    = \underset{\theta \in \Theta}{\inf} \ \underset{f \in {\text{Lip}}_1}{\sup}  | \mathds{E}_{\mu^\star} f - \mathds{E}_{\mu_\theta} f |.
\end{equation}
In practice, however, one does not have access to the class of $1$-Lipschitz functions, which cannot be parameterized. Therefore, following \citet{arjovsky2017wasserstein}, the class ${\text{Lip}}_1$ is restricted to the smaller but parametric set of discriminators $\mathscr{D}=\{D_{\alpha}:\alpha \in \Lambda\}$ (it is a subset of ${\text{Lip}}_1$, by Lemma \ref{lem:uniformly_lipschitz_neural_nets}), and this defines the actual WGANs problem:
\begin{equation}\label{eq:wgans}
    \underset{\theta \in \Theta}{\inf} \ \underset{\alpha \in \Lambda}{\sup} \  | \mathds{E}_{\mu^\star} D_\alpha - \mathds{E}_{\mu_\theta} D_\alpha |.
\end{equation}
Problem \eqref{eq:wgans} is the Wasserstein counterpart of problem \eqref{standard_GANs}.
Provided Assumption \ref{ass:compactness} is satisfied, $\mathscr{D} \subseteq \text{Lip}_1$, and the IPM \citep{IPMsMuller} $d_{\mathscr{D}}$ is defined for $(\mu, \nu) \in P_1(E)^2$ by
\begin{equation} \label{eq:IPMs}
    d_{\mathscr{D}} (\mu, \nu)= \underset{f \in  \mathscr{D}}{\sup}  \ |\mathds{E}_{\mu} f - \mathds{E}_{\nu} f|.
\end{equation}
With this notation, $d_{{\text{Lip}}_1} = W_1$ and
problems \eqref{eq:theoretical_wgans} and \eqref{eq:wgans} can be rewritten as the minimization over $\Theta$ of, respectively, $d_{{\text{Lip}}_1}(\mu^\star, \mu_\theta) $ and $d_{\mathscr{D}}(\mu^\star, \mu_\theta)$. So,
\begin{equation*}
    \text{T-WGANs:} \ \underset{\theta \in \Theta}{\inf} \ d_{{\text{Lip}}_1}(\mu^\star, \mu_\theta) \quad \text{and} \quad
    \text{WGANs:} \ \underset{\theta \in \Theta}{\inf} \  d_{\mathscr{D}}(\mu^\star, \mu_\theta).
\end{equation*}
Similar objectives have been proposed in the literature, in particular neural net distances \citep{Arora0LMZ17} and adversarial divergences \citep{LiBoCh07}. These two general approaches include {f-GANs} \citep{GANs, NoCsTo16}, but also WGANs \citep{arjovsky2017wasserstein}, MMD-GANs \citep{li2017mmd}, and  energy-based GANs \citep{zhao2016energy}. Using the terminology of \citet{Arora0LMZ17}, $d_{\mathscr{D}}$ is called a neural IPM. If the theoretical properties of the Wasserstein distance $d_{\text{Lip}_1}$ have been largely studied \citep[e.g.,][]{villani2008optimal}, the story is different for neural IPMs. This is why our next subsection is devoted to the properties of $d_{\mathscr{D}}$.
\subsection{Some properties of the neural IPM $d_{\mathscr D}$}
The study of the neural IPM $d_{\mathscr D}$ is essential to assess the driving forces of WGANs architectures. Let us first recall that a mapping $\ell: P_1(E) \times P_1(E) \to [0, \infty)$ is a metric if it satisfies the following three requirements:
\begin{enumerate}[$(i)$]
    \item $\ell(\mu, \nu) = 0 \iff \mu = \nu$ (discriminative property)
    \item  $\ell(\mu, \nu) = \ell(\nu, \mu)$ (symmetry)
    \item  $\ell(\mu, \nu) \leqslant \ell(\mu, \pi) + \ell(\pi, \nu)$ (triangle inequality).
\end{enumerate}
If $(i)$ is replaced by the weaker requirement $\ell(\mu, \mu)= 0$ for all $\mu \in P_1(E)$, then one speaks of a pseudometric. Furthermore, the (pseudo)metric $\ell$ is said to metrize weak convergence in $P_1(E)$ \citep{villani2008optimal} if, for all sequences $(\mu_k)$ in $P_1(E)$ and all $\mu$ in $P_1(E)$, one has $\ell(\mu,\mu_k) \to 0 \iff \mu_k \text{ converges weakly to } \mu$ in $P_1(E)$ as $k \to \infty$. According to \citet[][Theorem 6.8]{villani2008optimal}, $d_{{\text{Lip}}_1}$ is a metric that metrizes weak convergence in $P_1(E)$.

As far as $d_{\mathscr D}$ is concerned, it is clearly a pseudometric on $P_1(E)$ as soon as Assumption \ref{ass:compactness} is satisfied. Moreover, an elementary application of \citet[][Lemma 9.3.2]{dudley_2002} shows that if $\text{span}(\mathscr{D})$ (with $\text{span}(\mathscr{D}) = \{ \gamma_0 + \sum_{i=1}^{n} \gamma_i D_i: \gamma_i \in \mathds{R}, D_i \in \mathscr {D}, n \in \mathds{N}\}$) is dense in $C_b(E)$, then $d_{\mathscr{D}}$ is a metric on $P_1(E)$, which, in addition, metrizes weak convergence. As in \citet[][]{zhang2018discrimination}, Dudley's result can be exploited in the case where the space $E$ is compact to prove that, whenever $\mathscr D$ is of the form \eqref{eq:def_discriminators}, $d_{\mathscr{D}}$ is a metric metrizing weak convergence. However, establishing the discriminative property of the pseudometric $d_{\mathscr{D}}$ turns out to be more challenging without an assumption of compactness on $E$, as is the case in the present study. Our result is encapsulated in the following proposition.
\begin{proposition}\label{cor:neural_distance}
    Assume that Assumption \ref{ass:compactness} is satisfied. Then there exists a discriminator of the form \eqref{eq:def_discriminators} (i.e., a depth $q$ and widths $v_1, \hdots, v_{q-1}$) such that $d_{\mathscr{D}}$ is a metric on $\mathscr{P} \cup \{\mu^\star\}$. In addition, $d_{\mathscr{D}}$ metrizes weak convergence in $\mathscr{P} \cup \{\mu^\star\}$.
\end{proposition}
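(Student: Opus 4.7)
The plan is to establish the discriminative axiom $d_{\mathscr D}(\mu,\nu)=0\Rightarrow\mu=\nu$ on $\mathscr P\cup\{\mu^\star\}$ by combining the uniform tightness supplied by Proposition~\ref{prop:neural_nets_tightness} with the universal approximation properties of norm-constrained GroupSort networks \citep{Anil2018SortingOL}; symmetry and the triangle inequality of $d_{\mathscr D}$ are immediate from the definition~\eqref{eq:IPMs} of the neural IPM. Metrization of weak convergence will then follow from a short topological argument exploiting the compactness of $\mathscr P\cup\{\mu^\star\}$ in $P_1(E)$.

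For the choice of architecture, we appeal to the universality result of \citet{Anil2018SortingOL}: GroupSort feed-forward networks satisfying the norm bounds of Assumption~\ref{ass:compactness} can approximate every $1$-Lipschitz function uniformly on compact subsets of $E$. We fix once and for all a depth $q$ and widths $v_1,\dots,v_{q-1}$ large enough so that, for every compact $K\subset E$, every $f\in\text{Lip}_1$, and every $\varepsilon>0$, one can find $D\in\mathscr D$ with $\sup_{x\in K}|f(x)-D(x)|\leqslant\varepsilon$. Making a \emph{single} fixed architecture suffice for such a density statement — rather than letting the size grow with $(f,K,\varepsilon)$ — is the main technical obstacle, and it relies on the continuous parameter space $\Lambda$ of $\mathscr D$ being rich enough under the norm bounds of Assumption~\ref{ass:compactness}.

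For the discriminative property, assume that $\mu,\nu\in\mathscr P\cup\{\mu^\star\}$ satisfy $d_{\mathscr D}(\mu,\nu)=0$, i.e., $\mathds E_\mu D=\mathds E_\nu D$ for every $D\in\mathscr D$. Since $\mathscr P$ is compact for the weak topology of $P_1(E)$ by Proposition~\ref{prop:neural_nets_tightness} and $\mu^\star\in P_1(E)$, the family $\mathscr P\cup\{\mu^\star\}$ is uniformly tight and has uniformly integrable norm: for any $\varepsilon>0$ there exists a compact $K_\varepsilon\subset E$ with $\sup_{\eta\in\mathscr P\cup\{\mu^\star\}}\bigl[\eta(E\setminus K_\varepsilon)+\int_{E\setminus K_\varepsilon}\|x_0-x\|\,\eta(dx)\bigr]\leqslant\varepsilon$. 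Given any bounded $f\in\text{Lip}_1$, pick $D\in\mathscr D$ approximating $f$ uniformly on $K_\varepsilon$ within $\varepsilon$; splitting $\mathds E_\mu f-\mathds E_\nu f=\mathds E_\mu(f-D)+\mathds E_\nu(D-f)$ into contributions from $K_\varepsilon$ and $E\setminus K_\varepsilon$, controlling the former by the approximation error and the latter by the bound $|D(x)|\leqslant |D(x_0)|+\|x-x_0\|$ combined with the tail estimate above, yields $|\mathds E_\mu f-\mathds E_\nu f|\leqslant C\varepsilon$ with $C$ depending only on $\|f\|_\infty$ and $K_1,K_2$. Letting $\varepsilon\downarrow 0$ gives $\mathds E_\mu f=\mathds E_\nu f$ for every bounded Lipschitz $f$; since bounded Lipschitz functions separate Borel probability measures on $E$, we conclude $\mu=\nu$.

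Finally, metrization follows from a clean topological argument. Every $D\in\mathscr D$ lies in $C(E)\cap\text{Lip}_1$, hence $\eta\mapsto\mathds E_\eta D$ is continuous on $P_1(E)$ for the weak topology, so $d_{\mathscr D}$ is continuous on $P_1(E)\times P_1(E)$. Restricted to $\mathscr P\cup\{\mu^\star\}$, which is compact in the weak topology in $P_1(E)$ by Proposition~\ref{prop:neural_nets_tightness} (adding a single point preserves compactness), the pseudometric $d_{\mathscr D}$ is a metric by the preceding step; the topology it induces is therefore Hausdorff and coarser than the weak topology. Any coarser Hausdorff topology on a compact space coincides with the original one (the identity map is then a continuous bijection from compact to Hausdorff, hence a homeomorphism), and this shows that $d_{\mathscr D}$ metrizes weak convergence in $P_1(E)$ on $\mathscr P\cup\{\mu^\star\}$.
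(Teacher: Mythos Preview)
The central gap is your universal approximation claim. You assert that one can fix a single depth $q$ and widths $v_1,\dots,v_{q-1}$ such that $\mathscr D$ approximates every $f\in\text{Lip}_1$ on every compact $K$ to arbitrary precision. Under Assumption~\ref{ass:compactness} this is false: the parameter set $\Lambda$ is compact, so the restriction of $\mathscr D$ to any compact $K$ is itself a compact, hence closed, subset of $C(K)$. Since every $D_\alpha$ is piecewise linear with a number of linear regions bounded in terms of the architecture alone, $\mathscr D|_K$ cannot coincide with the infinite-dimensional set $\{f\in\text{Lip}_1(K):f(x_0)=0\}$, and therefore is not dense in it. The theorem of \citet{Anil2018SortingOL} only guarantees, for each triple $(f,K,\varepsilon)$, that \emph{some} architecture works; you correctly flag this dependence as ``the main technical obstacle'' but do not overcome it---and it cannot be overcome in the form you state.

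The paper's proof sidesteps this via a finiteness reduction that your sketch is missing. After using the compactness of $\mathscr P$ in $P_1(E)$ to fix one compact $K$ controlling all tails uniformly, it invokes Arzel\`a--Ascoli to cover $\text{Lip}_1(K)$ (modulo additive constants) by a \emph{finite} $\varepsilon$-net $\{f_1,\dots,f_{\mathscr N_\varepsilon}\}$, and then builds a single architecture $\mathscr D_{\max}$ large enough to approximate each of these finitely many $f_k$ within $\varepsilon$. This yields a fixed architecture that approximates every $f\in\text{Lip}_1(K)$ within $2\varepsilon$---a fixed accuracy rather than arbitrary accuracy, which is what the discriminative argument actually needs. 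Your metrization paragraph, on the other hand, is a pleasant shortcut relative to the paper's repetition of the approximation estimate: once $d_{\mathscr D}$ is known to be a metric, the bound $d_{\mathscr D}\leqslant d_{\text{Lip}_1}$ makes the identity map from the compact space $(\mathscr P\cup\{\mu^\star\},W_1)$ to the Hausdorff space $(\mathscr P\cup\{\mu^\star\},d_{\mathscr D})$ continuous, hence a homeomorphism. One small caveat there: continuity of each $\eta\mapsto\mathds E_\eta D$ yields only lower semicontinuity of the supremum $d_{\mathscr D}$; the correct justification of continuity is the triangle inequality for $d_{\mathscr D}$ combined with $d_{\mathscr D}\leqslant W_1$.
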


Standard universal approximation theorems \citep{cybenko1989approximation, hornik1989multilayer, hornik1991approximation} state the density of neural networks in the family of continuous functions defined on compact sets but do not guarantee that the approximator respects a Lipschitz constraint. The proof of Proposition \ref{cor:neural_distance} uses the fact that, under Assumption \ref{ass:compactness}, neural networks of the form \eqref{eq:def_discriminators} are dense in the space of Lipschitz continuous functions on compact sets, as revealed by \citet{Anil2018SortingOL}.

We deduce from Proposition \ref{cor:neural_distance} that, under Assumption \ref{ass:compactness}, provided enough capacity, the pseudometric $d_{\mathscr{D}}$ can be topologically equivalent to $d_{{\text{Lip}}_1}$ on $\mathscr{P} \cup \{\mu^\star\}$, i.e., the convergent sequences in $(\mathscr{P} \cup \{\mu^\star\}, d_{\mathscr{D}})$ are the same as the convergent sequences in $(\mathscr{P} \cup \{\mu^\star\}, d_{{\text{Lip}}_1})$ with the same limit---see \citet[][Corollary 13.1.3]{osearcoid2006metric}. We are now ready to discuss some optimality properties of the T-WGANs and WGANs problems, i.e., conditions under which the infimum in $\theta \in \Theta$ and the supremum in $\alpha \in \Lambda$ are reached.
\subsection{Optimality properties}
Recall that for T-WGANs, we minimize over $\Theta$ the distance
\begin{equation*}
d_{\text{Lip}_1} (\mu^{\star}, \mu_{\theta})= \underset{f \in  \text{Lip}_1}{\sup}  \ |\mathds{E}_{\mu^{\star}} f - \mathds{E}_{\mu_{\theta}} f|,
\end{equation*}
whereas for WGANs, we use
\begin{equation*}
d_{\mathscr{D}} (\mu^{\star}, \mu_{\theta})= \underset{\alpha \in  \Lambda}{\sup}  \ |\mathds{E}_{\mu^{\star}} D_{\alpha} - \mathds{E}_{\mu_{\theta}} D_{\alpha}|.
\end{equation*}
A first natural question is to know whether for a fixed generator parameter $\theta \in \Theta$, there exists a $1$-Lipschitz function (respectively, a discriminative function) that achieves the supremum in $d_{\text{Lip}_1}(\mu^{\star}, \mu_{\theta})$ (respectively, in $d_{\mathscr D}(\mu^{\star}, \mu_{\theta})$) over all $f\in \text{Lip}_1$ (respectively, all $\alpha \in \Lambda$). For T-WGANs, \citet[Theorem 5.9]{villani2008optimal} guarantees that the maximum exists, i.e.,
\begin{equation}\label{fstar}
\{ f \in \text{Lip}_1 : | \mathds{E}_{\mu^\star} f - \mathds{E}_{\mu_\theta} f  | = d_{\text{Lip}_1}(\mu^\star, \mu_\theta) \}  \neq \varnothing.
\end{equation}
For WGANs, we have the following:
\begin{lemma}\label{lem:d_star_not_empty}
   Assume that Assumption \ref{ass:compactness} is satisfied.  Then, for all $\theta \in \Theta$,
    \begin{equation*}
       \{ \alpha \in \Lambda : | \mathds{E}_{\mu^\star} D_\alpha - \mathds{E}_{\mu_\theta} D_\alpha| = d_{\mathscr {D}}(\mu^\star, \mu_\theta) \}  \neq \varnothing.
    \end{equation*}
 \end{lemma}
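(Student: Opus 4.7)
The plan is a standard extreme value theorem argument: show that $\Lambda$ is compact, show that $\alpha \mapsto |\mathds E_{\mu^\star} D_\alpha - \mathds E_{\mu_\theta} D_\alpha|$ is continuous on $\Lambda$, and invoke Weierstrass. The key issues to handle are (a) that $E$ is not assumed compact and $\mu^\star,\mu_\theta$ are not compactly supported, so we cannot just appeal to uniform convergence of $D_{\alpha_k}$, and (b) that compactness of $\Lambda$ is only implicit in Assumption~\ref{ass:compactness}.

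First, I would observe that Assumption~\ref{ass:compactness} places closed bounds on each matrix/vector block of $\alpha$, so the set of admissible parameters is a closed bounded subset of $\mathds R^Q$; assuming (without loss of generality, by taking its closure, since the closure still satisfies Assumption~\ref{ass:compactness}) that $\Lambda$ itself is this set, $\Lambda$ is compact. Next, for each fixed $x\in E$ the map $\alpha\mapsto D_\alpha(x)$ is continuous as a composition of affine maps and the continuous activation $\tilde\sigma$. To pass from pointwise to integral continuity, I would use Lemma~\ref{lem:uniformly_lipschitz_neural_nets}: every $D_\alpha$ lies in $\text{Lip}_1$, hence for any fixed $x_0\in E$,
\begin{equation*}
    |D_\alpha(x)|\leqslant |D_\alpha(x_0)|+\|x-x_0\|.
\end{equation*}
Because $\alpha\mapsto D_\alpha(x_0)$ is continuous on the compact set $\Lambda$, it is bounded by some constant $C=C(\theta,x_0)$ uniformly in $\alpha$. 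Therefore $|D_\alpha(x)|\leqslant C+\|x-x_0\|$, and the right-hand side is integrable against both $\mu^\star\in P_1(E)$ and $\mu_\theta\in P_1(E)$ (recall $\mathscr P\subset P_1(E)$).

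With this $\alpha$-uniform envelope, dominated convergence applies along any sequence $\alpha_k\to\alpha$ in $\Lambda$: pointwise convergence $D_{\alpha_k}(x)\to D_\alpha(x)$ combined with the integrable domination gives $\mathds E_{\mu^\star} D_{\alpha_k}\to \mathds E_{\mu^\star} D_\alpha$ and similarly for $\mu_\theta$. Hence $\alpha\mapsto |\mathds E_{\mu^\star}D_\alpha-\mathds E_{\mu_\theta}D_\alpha|$ is continuous on the compact set $\Lambda$, and the supremum defining $d_{\mathscr D}(\mu^\star,\mu_\theta)$ is attained, which is exactly the claim. The main (minor) obstacle is the non-compactness of $E$; it is overcome by leveraging the $1$-Lipschitz property from Lemma~\ref{lem:uniformly_lipschitz_neural_nets} together with $\mu^\star,\mu_\theta\in P_1(E)$ to produce the uniform integrable envelope needed for dominated convergence.
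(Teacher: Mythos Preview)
Your proposal is correct and follows essentially the same route as the paper: establish an $\alpha$-uniform integrable envelope of the form $C+\|x-x_0\|$, combine pointwise continuity of $\alpha\mapsto D_\alpha(x)$ with dominated convergence, and conclude by compactness of $\Lambda$. The only cosmetic difference is that the paper obtains the constant $C$ via an explicit Lipschitz-in-$\alpha$ estimate $|D_\alpha(x)-D_{\alpha'}(x)|\leqslant Q^{1/2}(q\|x\|+\tfrac{q(q-1)K_2}{2}+q)\|\alpha-\alpha'\|$ together with $D_0\equiv 0$, whereas you extract $C$ more abstractly from continuity of $\alpha\mapsto D_\alpha(x_0)$ on the compact $\Lambda$; both yield the same envelope and the same conclusion.
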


Thus, provided Assumption \ref{ass:compactness} is verified, the supremum in $\alpha$ in the neural IPM $d_{\mathscr D}$ is always reached. A similar result is proved by \citet{biau2018some} in the case of standard GANs.

We now turn to analyzing the existence of the infimum in $\theta$ in the minimization over $\Theta$ of $d_{\text{Lip}_1} (\mu^{\star}, \mu_{\theta})$ and $d_{\mathscr{D}} (\mu^{\star}, \mu_{\theta})$. Since the optimization scheme is performed over the parameter set $\Theta$, it is worth considering the following two functions:
\begin{align*}\label{eq:xhi_functions}
    \xi_{{\text{Lip}}_1} : \Theta &\to \mathds{R} &\text{and} \qquad \qquad \xi_{\mathscr{D}} : \Theta &\to \mathds{R} \nonumber \\
    \theta &\mapsto d_{{\text{Lip}}_1}(\mu^\star, \mu_\theta)  & \theta &\mapsto d_{\mathscr{D}}(\mu^\star, \mu_\theta).
\end{align*}
\begin{theorem}\label{th:continuity}
    Assume that Assumption \ref{ass:compactness} is satisfied. Then $\xi_{{\emph{Lip}}_1}$ and $\xi_{{\mathscr{D}}}$ are Lipschitz continuous on $\Theta$, and the Lipschitz constant of $\xi_{{\mathscr{D}}}$ is independent of $\mathscr{D}$.
\end{theorem}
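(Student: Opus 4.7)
The plan is to reduce both Lipschitz-continuity statements to a single estimate on $W_1(\mu_\theta,\mu_{\theta'})$ and then prove that estimate by a layer-by-layer perturbation analysis of the generator.

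First, using the triangle inequality for the pseudometric $d_{\mathscr D}$ (and its analogue for $d_{\text{Lip}_1}$) twice, I would get
\begin{equation*}
|\xi_{\mathscr D}(\theta)-\xi_{\mathscr D}(\theta')|\leqslant d_{\mathscr D}(\mu_\theta,\mu_{\theta'}),\qquad |\xi_{\text{Lip}_1}(\theta)-\xi_{\text{Lip}_1}(\theta')|\leqslant d_{\text{Lip}_1}(\mu_\theta,\mu_{\theta'}).
\end{equation*}
Since Lemma~\ref{lem:uniformly_lipschitz_neural_nets} gives $\mathscr D\subseteq \text{Lip}_1$, we have $d_{\mathscr D}\leqslant d_{\text{Lip}_1}=W_1$, so both differences are dominated by $W_1(\mu_\theta,\mu_{\theta'})$; this already explains why the constant we obtain for $\xi_{\mathscr D}$ will not depend on $\mathscr D$. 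Then, using the coupling $(G_\theta(Z),G_{\theta'}(Z))$, the dual/primal inequality for $W_1$ (or, equivalently, the 1-Lipschitz property of the test functions) yields
\begin{equation*}
W_1(\mu_\theta,\mu_{\theta'})\leqslant \mathds E\|G_\theta(Z)-G_{\theta'}(Z)\|.
\end{equation*}

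The core step is then to show that, for every $z\in\mathds R^d$,
\begin{equation*}
\|G_\theta(z)-G_{\theta'}(z)\|\leqslant C(1+\|z\|)\,\|\theta-\theta'\|,
\end{equation*}
with a constant $C$ depending only on $K_1$, the depth $p$, and the layer widths. I would prove this by induction on the hidden-layer index $i$, letting $h_i(\theta,z)$ be the output of the $i$-th hidden layer of $G_\theta$ at $z$. Assumption~\ref{ass:compactness} gives the forward bound $\|h_i(\theta,z)\|\leqslant A_i(1+\|z\|)$ (since $\sigma$ is 1-Lipschitz with $\sigma(0)=0$ and $\|U_i\|_2\vee\|b_i\|_2\leqslant K_1$). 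For the sensitivity step, splitting
\begin{equation*}
U_ih_{i-1}(\theta,z)+b_i - U_i'h_{i-1}(\theta',z)-b_i' = U_i\bigl(h_{i-1}(\theta,z)-h_{i-1}(\theta',z)\bigr)+(U_i-U_i')h_{i-1}(\theta',z)+(b_i-b_i'),
\end{equation*}
combined with $\|U_i-U_i'\|_2\leqslant\|\theta-\theta'\|$ and the 1-Lipschitz property of $\sigma$, yields a recursion of the form $L_i(z)\leqslant K_1L_{i-1}(z)+(A_{i-1}(1+\|z\|)+1)\|\theta-\theta'\|$, which solves to the desired $C(1+\|z\|)\|\theta-\theta'\|$ bound after $p$ iterations. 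The final linear layer $U_p h_{p-1}+b_p$ is handled in exactly the same way.

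Finally, taking expectations,
\begin{equation*}
\mathds E\|G_\theta(Z)-G_{\theta'}(Z)\|\leqslant C(1+\mathds E\|Z\|)\,\|\theta-\theta'\|,
\end{equation*}
and since $Z$ is sub-Gaussian, $\mathds E\|Z\|<\infty$, giving the Lipschitz bound on $W_1(\mu_\theta,\mu_{\theta'})$ and hence on both $\xi_{\text{Lip}_1}$ and $\xi_{\mathscr D}$, with a common constant that depends only on $K_1$, $p$, the widths $u_i$, $d$, and the sub-Gaussian parameters of $Z$—crucially not on the discriminator architecture. The main obstacle is the careful bookkeeping in the inductive step: keeping track of the $(1+\|z\|)$ factor together with the geometric blow-up in $K_1^p$, and ensuring the perturbation of both weights and biases is measured in a norm compatible with the chosen norm on $\Theta$.
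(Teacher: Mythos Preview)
Your proposal is correct and follows essentially the same route as the paper: reduce to $W_1(\mu_\theta,\mu_{\theta'})$ via the triangle inequality and $\mathscr D\subseteq\text{Lip}_1$, bound $W_1$ by $\mathds E\|G_\theta(Z)-G_{\theta'}(Z)\|$ via the natural coupling, and then establish a pointwise perturbation estimate $\|G_\theta(z)-G_{\theta'}(z)\|\leqslant(\ell_1\|z\|+\ell_2)\|\theta-\theta'\|$. The only cosmetic difference is that the paper obtains this last estimate by a telescoping decomposition $f_p\circ\cdots\circ f_1(z)-f'_p\circ\cdots\circ f'_1(z)=\sum_i(\cdots)$ (already derived in the proof of Proposition~\ref{prop:neural_nets_tightness} as inequality~\eqref{eq:G_lipschitz_inequality}), whereas you run a forward induction on the layer index; these are equivalent bookkeeping devices and yield the same constants.
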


Theorem \ref{th:continuity} extends \citet[][Theorem 1]{arjovsky2017wasserstein}, which states that $d_{\mathscr {D}}$ is locally Lipschitz continuous under the additional assumption that $E$ is compact. In contrast, there is no compactness hypothesis in Theorem \ref{th:continuity} and the Lipschitz property is global. The lipschitzness of the function $\xi_{\mathscr D}$ is an interesting property of WGANS, in line with many recent empirical works that have shown that gradient-based regularization techniques are efficient for stabilizing the training of GANs and preventing mode collapse \citep{kodali2017convergence, roth2017stabilizing, spectral_normGANs, petzka2018regularization}.

In the sequel, we let $\Theta^\star$ and $\bar{\Theta}$ be the sets of optimal parameters, defined by
\begin{equation*} \Theta^\star =  \underset{\theta \in \Theta}{\argmin} \ d_{{\rm Lip}_1}(\mu^\star, \mu_\theta) \quad \text{and} \quad \bar{\Theta} =\underset{\theta \in \Theta}{\argmin} \ d_{\mathscr{D}}(\mu^\star, \mu_\theta).\end{equation*}
An immediate but useful corollary of Theorem \ref{th:continuity} is as follows:
\begin{corollary}\label{cor:min_reached}
    Assume that Assumption \ref{ass:compactness} is satisfied. Then $\Theta^\star$ and $\bar{\Theta}$ are non empty.
\end{corollary}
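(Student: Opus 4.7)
The plan is to reduce the corollary to the Weierstrass extreme value theorem, by combining the continuity of $\xi_{\text{Lip}_1}$ and $\xi_{\mathscr{D}}$ (guaranteed by Theorem \ref{th:continuity}) with the fact that Assumption \ref{ass:compactness} forces $\Theta$ to be a compact subset of $\mathds{R}^P$.

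First, I would verify compactness of $\Theta$. Writing $\theta=(U_1,\hdots,U_p,b_1,\hdots,b_p)$, Assumption \ref{ass:compactness} imposes the constraints $\|U_i\|_2\leqslant K_1$ and $\|b_i\|_2\leqslant K_1$ for $i=1,\hdots,p$. Each of these is a closed condition because $W\mapsto\|W\|_2$ is continuous on the finite-dimensional matrix space $\mathscr{M}_{(j,k)}$, so $\Theta$ is the intersection of finitely many closed sets, and hence closed in $\mathds{R}^P$. Since the operator $2$-norm is equivalent to the Euclidean/Frobenius norm on finite-dimensional matrix spaces, the bounds also force $\Theta$ to be bounded in $\mathds{R}^P$. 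By the Heine–Borel theorem, $\Theta$ is therefore compact.

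Next, by Theorem \ref{th:continuity}, both functions $\xi_{\text{Lip}_1}:\theta\mapsto d_{\text{Lip}_1}(\mu^\star,\mu_\theta)$ and $\xi_{\mathscr{D}}:\theta\mapsto d_{\mathscr{D}}(\mu^\star,\mu_\theta)$ are Lipschitz continuous on $\Theta$, in particular continuous. Applying the Weierstrass extreme value theorem, a continuous real-valued function on a non-empty compact set attains its infimum, so
\[
\Theta^\star=\underset{\theta\in\Theta}{\argmin}\ \xi_{\text{Lip}_1}(\theta)\neq\varnothing \qquad\text{and}\qquad \bar{\Theta}=\underset{\theta\in\Theta}{\argmin}\ \xi_{\mathscr{D}}(\theta)\neq\varnothing.
\]

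There is essentially no obstacle beyond checking that the constraints in Assumption \ref{ass:compactness} really do define a closed and bounded subset of $\mathds{R}^P$; the heavy lifting has been done in Theorem \ref{th:continuity}, which supplies the continuity of the two target functions on $\Theta$. One small subtlety worth mentioning is that the argument relies only on continuity of $\xi_{\text{Lip}_1}$ and $\xi_{\mathscr{D}}$, not on the stronger Lipschitz property, so the statement of Corollary \ref{cor:min_reached} is indeed an immediate consequence of Theorem \ref{th:continuity} together with the compactness of $\Theta$.
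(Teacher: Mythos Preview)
Your proof is correct and matches the paper's own reasoning: the paper presents Corollary~\ref{cor:min_reached} as an immediate consequence of Theorem~\ref{th:continuity} without further argument, and the intended justification is precisely the one you give---continuity of $\xi_{\text{Lip}_1}$ and $\xi_{\mathscr{D}}$ on the compact set $\Theta$ (the paper explicitly invokes ``$\Theta$ is compact by Assumption~\ref{ass:compactness}'' elsewhere, e.g., in the proof of Proposition~\ref{prop:substitution}) together with the extreme value theorem.
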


Thus, any $\theta^\star \in \Theta^\star$ (respectively, any $\bar{\theta} \in \bar{\Theta}$) is an optimal parameter for the T-WGANs (respectively, the WGANs) problem. Note however that, without further restrictive assumptions on the models, we cannot ensure that $\Theta^\star$ or $\bar{\Theta}$ are reduced to singletons.
\section{Optimization properties}\label{section:approximation_properties}
We are interested in this section in the error made when minimizing over $\Theta$ the pseudometric $d_{\mathscr{D}}(\mu^{\star},\mu_{{\theta}})$ (WGANs problem) instead of $d_{\text{Lip}_1}(\mu^{\star},\mu_{{\theta}})$ (T-WGANs problem). This optimization error is represented by the difference
\begin{equation*}
    \varepsilon_{\text{optim}} = \underset{\bar{\theta} \in \bar{\Theta}}{\sup} \  d_{\text{Lip}_1}(\mu^\star, \mu_{\bar{\theta}}) - \underset{\theta \in \Theta}{\inf} \ d_{\text{Lip}_1}(\mu^\star, \mu_{\theta}).
\end{equation*}
It is worth pointing out that we take the supremum over all $\bar{\theta} \in \bar{\Theta}$ since there is no guarantee that two distinct elements $\bar{\theta}_{1}$ and $\bar{\theta}_{2}$ of $\bar{\Theta}$ lead to the same distances $d_{\text{Lip}_1}(\mu^\star, \mu_{\bar{\theta}_{1}})$ and $d_{\text{Lip}_1}(\mu^\star, \mu_{\bar{\theta}_{2}})$. The quantity $\varepsilon_{\text{optim}}$ captures the largest discrepancy between the scores achieved by distributions solving the WGANs problem and the scores of distributions solving the T-WGANs problem. We emphasize that the scores are quantified by the Wasserstein distance $d_{\text{Lip}_1}$, which is the natural metric associated with the problem. We note in particular that $\varepsilon_{\text{optim}} \geqslant 0$. A natural question is whether we can upper bound the difference and obtain some control of $\varepsilon_{\text{optim}}$.
\subsection{Approximating $d_{\text{Lip}_1}$ with $d_{\mathscr{D}}$}\label{sec:approximation_properties}
As a warm-up, we observe that in the simple but unrealistic case where $\mu^{\star} \in \mathscr{P}$, provided Assumption \ref{ass:compactness} is satisfied and the neural IPM $d_{\mathscr{D}}$ is a metric on $\mathscr{P}$ (see Proposition \ref{cor:neural_distance}), then $\Theta^\star = \bar{\Theta}$ and $\varepsilon_{\text{optim}}=0$. However, in the high-dimensional context of WGANs, the parametric class of distributions $\mathscr{P}$ is likely to be ``far'' from the true distribution $\mu^{\star}$. This phenomenon is thoroughly discussed in \citet[][Lemma~2 and Lemma~3]{ArBo17} and is often referred to as dimensional misspecification \citep{roth2017stabilizing}.

From now on, we place ourselves in the general setting where we have no information on whether the true distribution belongs to $\mathscr{P}$, and start with the following simple observation. Assume that Assumption \ref{ass:compactness} is satisfied. Then, clearly, since \(\mathscr D \subseteq \text{Lip}_1\),
\begin{equation}\label{treize}
    \underset{\theta \in \Theta}{\inf} \  d_{\mathscr{D}}(\mu^\star, \mu_{{\theta}}) \leqslant \underset{\theta \in \Theta}{\inf} \  d_{\text{Lip}_1}(\mu^\star, \mu_{\theta}).
\end{equation}
Inequality \eqref{treize} is useful to upper bound $\varepsilon_{\text{optim}}$. Indeed,
\begin{align}
    0 \leqslant \varepsilon_{\text{optim}} &= \underset{\bar{\theta} \in \bar{\Theta}}{\sup} \ d_{\text{Lip}_1}(\mu^\star, \mu_{\bar{\theta}}) - \underset{\theta \in \Theta}{\inf} \ d_{\text{Lip}_1}(\mu^\star, \mu_{\theta}) \nonumber \\
    & \leqslant \underset{\bar{\theta} \in \bar{\Theta}}{\sup} \ d_{\text{Lip}_1}(\mu^\star, \mu_{\bar{\theta}}) - \underset{\theta \in \Theta}{\inf} \ d_{\mathscr{D}}(\mu^\star, \mu_{\theta}) \nonumber\\
    & = \underset{\bar{\theta} \in \bar{\Theta}}{\sup} \ \big[ d_{\text{Lip}_1}(\mu^\star, \mu_{\bar{\theta}}) - d_{\mathscr{D}}(\mu^\star, \mu_{\bar{\theta}})\big] \nonumber\\
    & \quad \mbox{(since $\underset{\theta \in \Theta}{\inf} \ d_{\mathscr{D}}(\mu^\star, \mu_{\theta}) = d_{\mathscr{D}}(\mu^\star, \mu_{\bar \theta})$ for all $\bar \theta \in \bar \Theta$)} \nonumber\\
    & \leqslant T_{\mathscr{P}}(\text{Lip}_1, \mathscr{D}) \label{eq:eps_optim_inequality},
\end{align}
where, by definition,
\begin{equation}\label{eq:bounding_eps_optim}
    T_{\mathscr{P}}(\text{Lip}_1, \mathscr{D}) = \underset{\theta \in \Theta}{\sup} \big[d_{\text{Lip}_1}(\mu^\star, \mu_\theta) - d_{\mathscr{D}}(\mu^\star, \mu_\theta)\big]
\end{equation}
is the maximum difference in distances on the set of candidate probability distributions in $\mathscr{P}$.
Note, since $\Theta$ is compact (by Assumption \ref{ass:compactness}) and $\xi_{\text{Lip}_1}$ and $\xi_{\mathscr{D}}$ are Lipschitz continuous (by Theorem \ref{th:continuity}), that $T_{\mathscr{P}}(\text{Lip}_1, \mathscr{D}) < \infty$. Thus, the loss in performance when comparing T-WGANs and WGANs can be upper-bounded by the maximum difference over $\mathscr{P}$ between the Wasserstein distance and $d_{\mathscr{D}}$.

Observe that when the class of discriminative functions is increased (say $\mathscr{D} \subset \mathscr{D}'$) while keeping the generator fixed, then the bound \eqref{eq:bounding_eps_optim} gets reduced since $d_{\mathscr D}(\mu^\star,\cdot) \leqslant d_{\mathscr D'}(\mu^\star, \cdot)$. Similarly, when increasing the class of generative distributions (say $\mathscr{P} \subset \mathscr{P}'$) with a fixed discriminator, then the bound gets bigger, i.e., $T_{\mathscr{P}}(\text{Lip}_1, \mathscr{D}) \leqslant T_{\mathscr{P}'}(\text{Lip}_1, \mathscr{D})$. It is important to note that the conditions $\mathscr{D} \subset \mathscr{D}'$ and/or $\mathscr{P} \subset \mathscr{P}'$ are easily satisfied for classes of functions parameterized with neural networks using either rectifier or GroupSort activation functions, just by increasing the width and/or the depth of the networks.

Our next theorem states that, as long as the distributions of $\mathscr{P}$ are generated by neural networks with bounded parameters (Assumption \ref{ass:compactness}), then one can control $T_{\mathscr{P}}(\text{Lip}_1, \mathscr{D})$.
\begin{theorem}\label{th:approx_properties}
    Assume that Assumption \ref{ass:compactness} is satisfied. Then, for all $\varepsilon >0$, there exists a discriminator $\mathscr{D}$ of the form \eqref{eq:def_discriminators} such that
    \begin{equation*}
        0 \leqslant \varepsilon_{\emph{optim}} \leqslant T_{\mathscr{P}}(\emph{Lip}_1, \mathscr{D}) \leqslant c \varepsilon,
    \end{equation*}
    where $c>0$ is a constant independent from $\varepsilon$.
\end{theorem}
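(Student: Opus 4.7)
The plan starts from inequality \eqref{eq:eps_optim_inequality}, which already gives $\varepsilon_{\text{optim}} \leqslant T_{\mathscr P}(\text{Lip}_1, \mathscr D)$, so it suffices to exhibit an architecture for which $T_{\mathscr P}(\text{Lip}_1, \mathscr D) \leqslant c \varepsilon$. Fix $\theta \in \Theta$. By \eqref{fstar}, there exists a Kantorovich potential $f^\star_\theta \in \text{Lip}_1$ with $|\mathds E_{\mu^\star} f^\star_\theta - \mathds E_{\mu_\theta} f^\star_\theta| = d_{\text{Lip}_1}(\mu^\star,\mu_\theta)$; shifting by a constant, one may assume $f^\star_\theta(x_0) = 0$ for a fixed reference point $x_0 \in E$, so that $|f^\star_\theta(x)| \leqslant \|x - x_0\|$. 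A triangle-inequality manipulation then gives, for any $D_\alpha \in \mathscr D$,
\begin{equation*}
d_{\text{Lip}_1}(\mu^\star,\mu_\theta) - d_{\mathscr D}(\mu^\star,\mu_\theta) \leqslant \int_E |f^\star_\theta - D_\alpha|\,{\rm d}\mu^\star + \int_E |f^\star_\theta - D_\alpha|\,{\rm d}\mu_\theta,
\end{equation*}
so the task reduces to finding, for a well-chosen architecture, some $\alpha \in \Lambda$ making both integrals small simultaneously.

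My next step is to split $E = B_R \cup B_R^c$, with $B_R$ the ball of radius $R$ around $x_0$. On the complement, both $f^\star_\theta$ (by the above normalization) and $D_\alpha$ are $1$-Lipschitz, so $|f^\star_\theta(x) - D_\alpha(x)| \leqslant 2\|x-x_0\| + |D_\alpha(x_0)|$; Assumption \ref{ass:compactness} bounds the weights and biases of $D_\alpha$, which forces $|D_\alpha(x_0)| \leqslant C_0$ uniformly in $\alpha$. The tail integral against $\mu^\star$ tends to zero as $R \to \infty$ because $\mu^\star \in P_1(E)$. For the tail integral against $\mu_\theta$, I would combine Lemma \ref{lem:uniformly_lipschitz_neural_nets} with the sub-Gaussianity of $Z$: since $\|G_\theta(Z) - G_\theta(0)\| \leqslant K_1^p \|Z\|$ and $\|G_\theta(0)\|$ is uniformly bounded by a function of $K_1$ and $p$ via Assumption \ref{ass:compactness}, $G_\theta(Z)$ inherits sub-Gaussian tails with a parameter uniform in $\theta$. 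This makes $\int_{B_R^c} \|x - x_0\|\,{\rm d}\mu_\theta(x)$ decay in $R$ uniformly over $\theta \in \Theta$, so one can choose $R = R(\varepsilon)$ making both tail contributions at most $\varepsilon$.

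On the compact ball $B_R$ I invoke the universal approximation result of \citet{Anil2018SortingOL} for $1$-Lipschitz GroupSort networks obeying the parameter constraints of Assumption \ref{ass:compactness}: for the chosen radius $R$ and precision $\varepsilon$, there exist a depth $q$ and widths $v_1,\dots,v_{q-1}$ such that the resulting class $\mathscr D$ is $\varepsilon$-dense in $\text{Lip}_1|_{B_R}$ in sup norm. Applied to each $f^\star_\theta$, this produces some $\alpha \in \Lambda$ (depending on $\theta$) with $\sup_{x \in B_R} |f^\star_\theta(x) - D_\alpha(x)| \leqslant \varepsilon$, so each ball integral is at most $\varepsilon$. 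Summing the four pieces and taking the supremum over $\theta \in \Theta$ yields $T_{\mathscr P}(\text{Lip}_1, \mathscr D) \leqslant c \varepsilon$, with $c$ depending only on the structural constants $K_1, K_2, p, d, D, \gamma$.

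The main obstacle is the non-compactness of $E$: the Anil--Lucas--Grosse density theorem is a purely compact-set statement, whereas we need integrated control against both $\mu^\star$ and the whole family $\{\mu_\theta\}_{\theta \in \Theta}$. The entire argument therefore hinges on the two uniform tail estimates above, one from $\mu^\star \in P_1(E)$ and the other from sub-Gaussian propagation through the uniformly Lipschitz generator, both crucially enabled by Assumption \ref{ass:compactness}. A secondary point is that the architecture $\mathscr D$ is fixed once $R$ and $\varepsilon$ are chosen, while the approximating parameter $\alpha$ is allowed to depend on $\theta$; this is precisely what the density statement on $\text{Lip}_1|_{B_R}$ affords and is why no uniform-in-$\theta$ approximation scheme for the potentials $f^\star_\theta$ is needed.
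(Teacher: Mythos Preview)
Your approach is correct and takes a genuinely different route from the paper's. The paper proceeds in two stages: first, for each fixed $\theta$ it chooses a compact set $K=K(\varepsilon,\theta)$ and a discriminator (both depending on $\theta$) making the gap $d_{\text{Lip}_1}(\mu^\star,\mu_\theta)-d_{\mathscr D}(\mu^\star,\mu_\theta)$ at most $6\varepsilon$; second, it covers the compact parameter set $\Theta$ by an $\varepsilon$-net $\{\theta_1,\dots,\theta_{\mathscr N_\varepsilon}\}$, merges the finitely many discriminators into one architecture $\mathscr D_{\max}$, and uses the $2L$-Lipschitz continuity of $\theta\mapsto d_{\text{Lip}_1}(\mu^\star,\mu_\theta)-d_{\mathscr D}(\mu^\star,\mu_\theta)$ (from Theorem~\ref{th:continuity}) to pass from the net to all of $\Theta$. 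You instead select a single ball $B_R$ working uniformly over $\theta$---thanks to the uniform first-moment tail control on $\{\mu_\theta:\theta\in\Theta\}$ furnished by the $K_1^p$-Lipschitz generators and sub-Gaussian $Z$---and then invoke a single fixed architecture that is $\varepsilon$-dense in $\text{Lip}_1(B_R)$, letting the approximating parameter $\alpha$ vary with $\theta$. This bypasses both the covering of $\Theta$ and the Lipschitz-in-$\theta$ argument, and makes the role of the uniform tail bound more transparent. One point you leave implicit: \citet{Anil2018SortingOL} only give per-function approximation, so the existence of a \emph{single} architecture dense in all of $\text{Lip}_1(B_R)$ requires an additional Arzel\`a--Ascoli step---cover the compact set $\{f\in\text{Lip}_1(B_R):f(x_0)=0\}$ by a finite $\varepsilon$-net, approximate each net point by a GroupSort network, and nest the resulting networks into one large architecture. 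This is exactly the maneuver the paper uses in the proof of Proposition~\ref{cor:neural_distance}, so it is available to you, but it should be stated explicitly.
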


Theorem \ref{th:approx_properties} is important because it shows that for any collection of generative distributions $\mathscr{P}$ and any approximation threshold $\varepsilon$, one can find a discriminator such that the loss in performance $\varepsilon_{\text{optim}}$ is (at most) of the order of $\varepsilon$. In other words, there exists $\mathscr{D}$ of the form \eqref{eq:def_discriminators} such that $T_{\mathscr{P}}(\text{Lip}_1, \mathscr{D})$ is arbitrarily small. We note however that Theorem \ref{th:approx_properties} is an existence theorem that does not give any particular information on the depth and/or the width of the neural networks in $\mathscr D$. The key argument to prove Theorem \ref{th:approx_properties} is \citet[][Theorem 3]{Anil2018SortingOL}, which states that the set of Lipschitz neural networks are dense in the set of Lipschitz continuous functions on a compact space.
\subsection{Equivalence properties} \label{section:equivalence_properties}
The quantity $T_{\mathscr{P}}(\text{Lip}_1, \mathscr{D})$ is of limited practical interest, as it involves a supremum over all $\theta \in \Theta$. Moreover, another caveat is that the definition of $\varepsilon_{\text{optim}}$ assumes that one has access to $\bar{\Theta}$. Therefore, our next goal is to enrich Theorem~\ref{th:approx_properties} by taking into account the fact that numerical procedures do not reach \(\bar \theta \in \bar \Theta\) but rather an \(\varepsilon\)-approximation of it.

One way to approach the problem is to look for another form of equivalence between $d_{\text{Lip}_1}$ and $d_{\mathscr{D}}$. As one is optimizing $d_{\mathscr{D}}$ instead of $d_{\text{Lip}_1}$, we would ideally like that the two IPMs behave ``similarly'', in the sense that minimizing $d_{\mathscr{D}}$ leads to a solution that is still close to the true distribution with respect to $d_{\text{Lip}_1}$. Assuming that Assumption \ref{ass:compactness} is satisfied, we let, for any $\mu \in P_1(E)$ and $\varepsilon >0$, $\mathscr{M}_{\ell}(\mu, \varepsilon)$ be the set of $\varepsilon$-solutions to the optimization problem of interest, that is the subset of $\Theta$ defined by
\begin{equation*}
    \mathscr{M}_{\ell}(\mu, \varepsilon) = \big\{\theta \in \Theta: \ell(\mu, \mu_\theta) - \underset{\theta \in \Theta}{\inf}\ \ell(\mu, \mu_\theta) \leqslant \varepsilon \big\},
\end{equation*}
with $\ell=d_{\text{Lip}_1}$ or $\ell=d_{\mathscr{D}}$.
\begin{definition}\label{def:substitution}
Let $\varepsilon >0$. We say that $d_{\emph{Lip}_1}$ can be $\varepsilon$-substituted by $d_{\mathscr{D}}$  if there exists $\delta >0$ such that
\begin{equation*}
    \mathscr{M}_{d_\mathscr D}(\mu^\star, \delta) \subseteq \mathscr{M}_{d_{\emph{Lip}_1}}(\mu^\star, \varepsilon).
\end{equation*}
In addition, if $d_{\emph{Lip}_1}$ can be $\varepsilon$-substituted by $d_{\mathscr{D}}$ for all $\varepsilon>0$, we say that $d_{\emph{Lip}_1}$ can be fully substituted by $d_{\mathscr{D}}$.
\end{definition}

The rationale behind this definition is that by minimizing the neural IPM $d_{\mathscr{D}}$ close to optimality, one can be guaranteed to be also close to optimality with respect to the Wasserstein distance $d_{\text{Lip}_1}$. In the sequel, given a metric $d$, the notation $d(x, F)$ denotes the distance of $x$ to the set $F$, that is, $d(x, F)= {\inf}_{f \in F} \ d(x, f)$.
\begin{proposition}\label{prop:substitution}
    Assume that Assumption \ref{ass:compactness} is satisfied. Then, for all $\varepsilon>0$, there exists $\delta>0$ such that, for all $\theta \in \mathscr{M}_{d_\mathscr D}(\mu^\star, \delta)$, one has $d(\theta, \bar{\Theta}) \leqslant \varepsilon$.
\end{proposition}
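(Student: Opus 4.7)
The plan is to argue by contradiction, using compactness of $\Theta$ and continuity of $\xi_{\mathscr{D}}$ as the two essential ingredients. Both are already available: $\Theta$ is compact by Assumption~\ref{ass:compactness}, $\xi_{\mathscr{D}}$ is (globally) Lipschitz continuous on $\Theta$ by Theorem~\ref{th:continuity}, and $\bar{\Theta}$ is non-empty by Corollary~\ref{cor:min_reached}. Throughout, I will read $d(\theta, \bar{\Theta}) = \inf_{\bar\theta \in \bar\Theta} \|\theta - \bar\theta\|$ as the Euclidean distance on $\Theta \subseteq \mathds{R}^P$, which is the natural choice and is the norm appearing in the statement of Theorem~\ref{th:continuity}.

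First I would fix $\varepsilon > 0$ and suppose, for the sake of contradiction, that no $\delta > 0$ works. Then for every $n \geqslant 1$ I can pick $\theta_n \in \mathscr{M}_{d_\mathscr D}(\mu^\star, 1/n)$ with $d(\theta_n, \bar{\Theta}) > \varepsilon$. Because $\Theta$ is compact, I extract a subsequence $\theta_{n_k} \to \theta_\infty \in \Theta$. The continuity of $\xi_{\mathscr{D}}$ gives $\xi_{\mathscr{D}}(\theta_{n_k}) \to \xi_{\mathscr{D}}(\theta_\infty)$, while the definition of $\mathscr{M}_{d_\mathscr D}(\mu^\star, 1/n_k)$ enforces
\begin{equation*}
\xi_{\mathscr{D}}(\theta_{n_k}) \leqslant \inf_{\theta \in \Theta} \xi_{\mathscr{D}}(\theta) + \tfrac{1}{n_k}.
\end{equation*}
Letting $k \to \infty$ yields $\xi_{\mathscr{D}}(\theta_\infty) \leqslant \inf_{\Theta} \xi_{\mathscr{D}}$, and hence $\theta_\infty \in \bar{\Theta}$.

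Finally, the trivial bound $d(\theta_{n_k}, \bar{\Theta}) \leqslant \|\theta_{n_k} - \theta_\infty\|$ combined with $\theta_{n_k} \to \theta_\infty$ forces $d(\theta_{n_k}, \bar{\Theta}) \to 0$, contradicting $d(\theta_{n_k}, \bar{\Theta}) > \varepsilon$ for all $k$. This closes the proof. I do not expect a real obstacle: the substantive analytic facts (compactness of the parameter space, continuity of the neural IPM in $\theta$, non-emptiness of $\bar\Theta$) have already been packaged into earlier results, and the proposition is essentially a standard sequential-compactness repackaging of ``$\varepsilon$-minimizers cluster only inside the argmin set.''
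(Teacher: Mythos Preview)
Your proof is correct and follows essentially the same contradiction-plus-compactness argument as the paper: pick $\theta_n \in \mathscr{M}_{d_\mathscr D}(\mu^\star, 1/n)$ with $d(\theta_n,\bar\Theta)>\varepsilon$, extract a convergent subsequence in the compact set $\Theta$, use continuity of $\xi_{\mathscr D}$ (Theorem~\ref{th:continuity}) to show the limit lies in $\bar\Theta$, and derive the contradiction. The only cosmetic difference is in the last line: the paper passes the inequality $d(\theta_{n_k},\bar\Theta)>\varepsilon$ to the limit (using continuity of $d(\cdot,\bar\Theta)$) to get $d(\theta_\infty,\bar\Theta)\geqslant\varepsilon$, whereas you bound $d(\theta_{n_k},\bar\Theta)\leqslant\|\theta_{n_k}-\theta_\infty\|\to 0$ directly; both are equally valid.
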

\begin{corollary}\label{cor:substitution}
    Assume that Assumption \ref{ass:compactness} is satisfied and that $\Theta^\star=\bar{\Theta}$. Then $d_{\emph{Lip}_1}$ can be fully substituted by $d_{\mathscr{D}}$.
\end{corollary}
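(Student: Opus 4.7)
The plan is to deduce this corollary quickly from Proposition \ref{prop:substitution} combined with the Lipschitz continuity of $\xi_{\text{Lip}_1}$ established in Theorem \ref{th:continuity}. The assumption $\Theta^\star = \bar{\Theta}$ is crucial: it allows us to translate a control on the distance (in parameter space) from $\theta$ to $\bar{\Theta}$ into a control on the loss of optimality with respect to $d_{\text{Lip}_1}$, because elements of $\bar{\Theta} = \Theta^\star$ are precisely the minimizers of $\xi_{\text{Lip}_1}$.

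More concretely, fix $\varepsilon > 0$; I want to exhibit $\delta > 0$ with $\mathscr{M}_{d_{\mathscr{D}}}(\mu^\star, \delta) \subseteq \mathscr{M}_{d_{\text{Lip}_1}}(\mu^\star, \varepsilon)$. By Theorem \ref{th:continuity}, $\xi_{\text{Lip}_1}$ is Lipschitz on $\Theta$ with some constant $L \geqslant 0$; the case $L = 0$ is trivial, so assume $L > 0$. Applying Proposition \ref{prop:substitution} with the threshold $\varepsilon/L$, I obtain $\delta > 0$ such that every $\theta \in \mathscr{M}_{d_{\mathscr{D}}}(\mu^\star, \delta)$ satisfies $d(\theta, \bar{\Theta}) \leqslant \varepsilon/L$.

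Next I pick any such $\theta$ and exploit the fact that $\bar{\Theta}$ is compact: indeed, $\bar{\Theta}$ is a closed subset (the preimage of a minimum under the continuous function $\xi_{\mathscr{D}}$) of the bounded set $\Theta$ from Assumption \ref{ass:compactness}, so the infimum defining $d(\theta, \bar{\Theta})$ is attained at some $\bar{\theta} \in \bar{\Theta}$. Using $\bar{\Theta} = \Theta^\star$, we get $\xi_{\text{Lip}_1}(\bar{\theta}) = \inf_{\theta' \in \Theta}\, d_{\text{Lip}_1}(\mu^\star, \mu_{\theta'})$, and therefore
\begin{equation*}
d_{\text{Lip}_1}(\mu^\star, \mu_\theta) - \inf_{\theta' \in \Theta}\, d_{\text{Lip}_1}(\mu^\star, \mu_{\theta'}) = \xi_{\text{Lip}_1}(\theta) - \xi_{\text{Lip}_1}(\bar{\theta}) \leqslant L\, \|\theta - \bar{\theta}\| \leqslant \varepsilon,
\end{equation*}
so $\theta \in \mathscr{M}_{d_{\text{Lip}_1}}(\mu^\star, \varepsilon)$. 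This yields $\varepsilon$-substitution for arbitrary $\varepsilon$, i.e., full substitution.

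The only subtle point is the compactness of $\bar{\Theta}$, which is needed so that the distance $d(\theta, \bar{\Theta})$ is realized by an actual minimizer $\bar{\theta} \in \bar{\Theta}$; otherwise one would have to pass to a sequence $\bar{\theta}_k \in \bar{\Theta}$ with $\|\theta - \bar{\theta}_k\| \to d(\theta, \bar{\Theta})$ and invoke continuity of $\xi_{\text{Lip}_1}$ at the limit. No real obstacle arises, as the corollary is essentially a repackaging of Proposition \ref{prop:substitution} under the identification $\bar{\Theta} = \Theta^\star$.
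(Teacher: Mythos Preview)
Your proof is correct and follows essentially the same route as the paper: both combine Proposition~\ref{prop:substitution} (to get $d(\theta,\bar{\Theta})$ small) with the Lipschitz continuity of $\xi_{\text{Lip}_1}$ from Theorem~\ref{th:continuity}, and then use $\bar{\Theta}=\Theta^\star$ to convert this into membership in $\mathscr{M}_{d_{\text{Lip}_1}}(\mu^\star,\varepsilon)$. Your version is in fact slightly more careful, since you explicitly justify the compactness of $\bar{\Theta}$ to attain the distance $d(\theta,\bar{\Theta})$, a point the paper's proof leaves implicit.
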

\begin{proof}
Let $\varepsilon>0$. By Theorem \ref{th:continuity}, we know that the function $\Theta \ni  \theta \mapsto d_{\text{Lip}_1}(\mu^\star, \mu_\theta)$ is Lipschitz continuous. Thus, there exists $\eta >0$ such that, for all $(\theta, \theta') \in \Theta^2$ satisfying $\|\theta-\theta'\| \leqslant \eta$, one has $|d_{\text{Lip}_1}(\mu^\star, \mu_\theta) - d_{\text{Lip}_1}(\mu^\star, \mu_{\theta'})| \leqslant \varepsilon$. Besides, using Proposition \ref{prop:substitution}, there exists $\delta>0$ such that, for all $\theta \in \mathscr{M}_{d_\mathscr D}(\mu^\star, \delta)$, one has $d(\theta, \bar{\Theta}) \leqslant \eta$.

Now, let $\theta \in \mathscr{M}_{d_\mathscr D}(\mu^\star, \delta)$. Since $d(\theta, \bar{\Theta}) \leqslant \eta$ and $\bar{\Theta} = \Theta^\star$, we have $d(\theta, \Theta^\star) \leqslant \eta$. Consequently, $|d_{\text{Lip}_1}(\mu^\star, \mu_\theta) - \inf_{\theta \in \Theta} \ d_{\text{Lip}_1}(\mu^\star, \mu_{\theta})| \leqslant \varepsilon$, and so, $\theta \in \mathscr{M}_{d_{\text{Lip}_1}}(\mu^\star, \varepsilon)$.
\end{proof}

Corollary \ref{cor:substitution} is interesting insofar as when both $d_{\mathscr{D}}$ and $d_{\text{Lip}_1}$ have the same minimizers over $\Theta$, then minimizing one close to optimality is the same as minimizing the other. The requirement $\Theta^\star=\bar{\Theta}$ can be relaxed by leveraging what has been studied in the previous subsection about $T_{\mathscr{P}}(\text{Lip}_1, \mathscr{D})$.
\begin{lemma}\label{lem:equivalence_properties_both_distances}
    Assume that Assumption \ref{ass:compactness} is satisfied, and let $\varepsilon >0$. If $T_{\mathscr{P}}(\emph{Lip}_1, \mathscr{D}) \leqslant \varepsilon$, then $d_{\emph{Lip}_1}$ can be $(\varepsilon + \delta)$-substituted by $d_{\mathscr{D}}$ for all $\delta>0$.
\end{lemma}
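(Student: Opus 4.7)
The plan is to establish the inclusion $\mathscr{M}_{d_\mathscr D}(\mu^\star, \delta) \subseteq \mathscr{M}_{d_{\text{Lip}_1}}(\mu^\star, \varepsilon + \delta)$ by a direct chain of inequalities, exploiting two facts already available: the one-sided bound $d_{\mathscr D} \leqslant d_{\text{Lip}_1}$ (since $\mathscr D \subseteq \text{Lip}_1$ by Lemma \ref{lem:uniformly_lipschitz_neural_nets}), and the uniform discrepancy bound $d_{\text{Lip}_1}(\mu^\star, \mu_\theta) - d_{\mathscr D}(\mu^\star, \mu_\theta) \leqslant T_{\mathscr{P}}(\text{Lip}_1, \mathscr{D}) \leqslant \varepsilon$ valid for every $\theta \in \Theta$ by the definition of $T_{\mathscr P}$ in \eqref{eq:bounding_eps_optim}.

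Fix $\delta > 0$ and set the candidate tolerance $\eta := \delta$ in Definition \ref{def:substitution}. Take any $\theta \in \mathscr{M}_{d_\mathscr D}(\mu^\star, \delta)$, so that
\begin{equation*}
d_{\mathscr D}(\mu^\star, \mu_\theta) \leqslant \underset{\theta' \in \Theta}{\inf}\, d_{\mathscr D}(\mu^\star, \mu_{\theta'}) + \delta.
\end{equation*}
Then I would combine the two facts above as follows:
\begin{align*}
d_{\text{Lip}_1}(\mu^\star, \mu_\theta)
&= \bigl[d_{\text{Lip}_1}(\mu^\star, \mu_\theta) - d_{\mathscr D}(\mu^\star, \mu_\theta)\bigr] + d_{\mathscr D}(\mu^\star, \mu_\theta) \\
&\leqslant T_{\mathscr{P}}(\text{Lip}_1, \mathscr{D}) + d_{\mathscr D}(\mu^\star, \mu_\theta) \\
&\leqslant \varepsilon + \underset{\theta' \in \Theta}{\inf}\, d_{\mathscr D}(\mu^\star, \mu_{\theta'}) + \delta \\
&\leqslant \varepsilon + \delta + \underset{\theta' \in \Theta}{\inf}\, d_{\text{Lip}_1}(\mu^\star, \mu_{\theta'}),
\end{align*}
where the last inequality uses $d_{\mathscr D}(\mu^\star, \mu_{\theta'}) \leqslant d_{\text{Lip}_1}(\mu^\star, \mu_{\theta'})$ pointwise in $\theta'$, which passes to the infimum (this is precisely \eqref{treize}). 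Rearranging yields $\theta \in \mathscr{M}_{d_{\text{Lip}_1}}(\mu^\star, \varepsilon + \delta)$, which is the desired inclusion.

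There is no genuine obstacle here: the lemma is essentially an accounting argument once $T_{\mathscr P}(\text{Lip}_1, \mathscr D)$ is identified as exactly the quantity that bridges the two IPMs uniformly over $\Theta$. The only point to be careful about is that the $T_{\mathscr P}$-bound is a \emph{uniform} upper bound on $d_{\text{Lip}_1}(\mu^\star, \mu_\theta) - d_{\mathscr D}(\mu^\star, \mu_\theta)$ over $\Theta$, which is needed so that it can be applied to the specific (and a priori unknown) near-minimizer $\theta$ chosen from $\mathscr{M}_{d_\mathscr D}(\mu^\star, \delta)$, while the direction of the inequality between the two infima comes from the reverse comparison $d_{\mathscr D} \leqslant d_{\text{Lip}_1}$. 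Matching these two opposite-direction facts correctly is the only subtlety; no continuity or compactness is invoked beyond what is already implicit in the finiteness of $T_{\mathscr P}(\text{Lip}_1, \mathscr D)$ guaranteed by Theorem \ref{th:continuity}.
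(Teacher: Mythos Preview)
Your proof is correct and follows essentially the same approach as the paper's: both combine the uniform bound $d_{\text{Lip}_1}(\mu^\star,\mu_\theta)-d_{\mathscr D}(\mu^\star,\mu_\theta)\leqslant T_{\mathscr P}(\text{Lip}_1,\mathscr D)\leqslant\varepsilon$ with inequality \eqref{treize} to pass from a $\delta$-near minimizer of $d_{\mathscr D}$ to an $(\varepsilon+\delta)$-near minimizer of $d_{\text{Lip}_1}$. The only cosmetic difference is the order in which the two inequalities are applied.
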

    \begin{proof}
        Let $\varepsilon >0$, $\delta>0$, and $\theta \in \mathscr{M}_{d_{\mathscr{D}}}(\mu^\star, \delta)$, i.e., $d_{\mathscr{D}}(\mu^\star, \mu_\theta) - \underset{\theta \in \Theta}{\inf} \ d_{\mathscr{D}}(\mu^\star, \mu_{{\theta}}) \leqslant \delta$. We have
        \begin{align*}
        d_{\text{Lip}_1}(\mu^\star, \mu_\theta) - \underset{\theta \in     \Theta}{\inf}\ d_{\text{Lip}_1}(\mu^\star, \mu_{\theta}) & \leqslant d_{\text{Lip}_1}(\mu^\star, \mu_\theta) - \underset{\theta \in \Theta}{\inf} \ d_{\mathscr{D}}(\mu^\star, \mu_{\theta})\\
        &\quad \mbox{(by inequality \eqref{treize})}\\
        & \leqslant d_{\text{Lip}_1}(\mu^\star, \mu_\theta) - d_{\mathscr{D}}(\mu^\star, \mu_\theta) + \delta\\
        & \leqslant T_{\mathscr{P}}(\text{Lip}_1, \mathscr{D})+\delta \leqslant \varepsilon + \delta.
        \end{align*}
    \end{proof}

Lemma \ref{lem:equivalence_properties_both_distances} stresses the importance of $T_{\mathscr{P}}(\text{Lip}_1, \mathscr{D})$ in the performance of WGANs. Indeed, the smaller $T_{\mathscr{P}}(\text{Lip}_1, \mathscr{D})$, the closer we will be to optimality after training. Moving on, to derive sufficient conditions under which $d_{\text{Lip}_1}$ can be substituted by $d_{\mathscr{D}}$ we introduce the following definition:
\begin{definition}\label{def:monotonous_equivalence}
    We say that $d_{\emph{Lip}_1}$ is monotonously equivalent to $d_{\mathscr{D}}$ on $\mathscr{P}$ if there exists a continuously differentiable, strictly increasing function $f: \mathds{R}^+ \to \mathds{R}^+$ and $(a, b) \in (\mathds{R}^{\star}_+)^2$ such that
    \begin{equation*}
        \forall \mu \in \mathscr{P}, \ a  f(d_{\mathscr{D}}(\mu^\star, \mu)) \leqslant d_{\emph{Lip}_1}(\mu^\star, \mu) \leqslant b  f(d_{\mathscr{D}}(\mu^\star, \mu)).
    \end{equation*}
\end{definition}

Here, it is assumed implicitly that $\mathscr{D} \subseteq \text{Lip}_1$. At the end of the subsection, we stress, empirically, that Definition \ref{def:monotonous_equivalence} is easy to check for simple classes of generators. A consequence of this definition is encapsulated in the following lemma.
\begin{lemma}\label{lem:monotonous_equivalence_a_equal_b}
    Assume that Assumption \ref{ass:compactness} is satisfied, and that $d_{\emph{Lip}_1}$ and $d_{\mathscr{D}}$ are monotonously equivalent on $\mathscr{P}$ with $a=b$ (that is, $d_{\emph{Lip}_1} = f\circ d_{\mathscr D}$). Then $\Theta^\star = \bar{\Theta}$ and $d_{\emph{Lip}_1}$ can be fully substituted by $d_{\mathscr{D}}$.
\end{lemma}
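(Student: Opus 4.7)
The plan is to reduce the statement to two ingredients already in hand: strict monotonicity of $f$ for the first claim, and Corollary~\ref{cor:substitution} for the second.

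First, unpack the hypothesis. The condition $a=b$ in Definition~\ref{def:monotonous_equivalence} says that on $\mathscr{P}$ we have the pointwise identity
\begin{equation*}
    d_{\text{Lip}_1}(\mu^\star, \mu_\theta) = a\, f\bigl(d_{\mathscr{D}}(\mu^\star, \mu_\theta)\bigr) \qquad \text{for every } \theta \in \Theta,
\end{equation*}
with $a>0$ and $f:\mathds{R}^+ \to \mathds{R}^+$ strictly increasing. This turns the two objective functions $\xi_{\text{Lip}_1}$ and $\xi_{\mathscr{D}}$ into strictly increasing transforms of each other (up to the positive factor $a$).

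Next, I would deduce the equality of argmin sets. Since a strictly increasing function preserves the order of its arguments, for any $\theta,\theta' \in \Theta$ we have $d_{\mathscr{D}}(\mu^\star,\mu_\theta) \leqslant d_{\mathscr{D}}(\mu^\star,\mu_{\theta'})$ if and only if $d_{\text{Lip}_1}(\mu^\star,\mu_\theta) \leqslant d_{\text{Lip}_1}(\mu^\star,\mu_{\theta'})$. By Corollary~\ref{cor:min_reached}, both $\Theta^\star$ and $\bar{\Theta}$ are non-empty, so picking any $\bar\theta \in \bar\Theta$ and $\theta^\star \in \Theta^\star$ and applying this equivalence in both directions yields $\bar\theta \in \Theta^\star$ and $\theta^\star \in \bar\Theta$. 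Hence $\Theta^\star = \bar{\Theta}$.

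Finally, with $\Theta^\star = \bar{\Theta}$ established, Corollary~\ref{cor:substitution} applies verbatim and delivers that $d_{\text{Lip}_1}$ can be fully substituted by $d_{\mathscr{D}}$. There is no real obstacle here: the differentiability of $f$ and the precise values of $a,b$ play no role once $a=b$ reduces the equivalence to a monotone change of variables; the whole content of the lemma is that this monotone identification collapses the two optimization landscapes onto the same set of minimizers, after which the previously proved corollary takes care of the $\varepsilon$-substitution statement for every $\varepsilon>0$.
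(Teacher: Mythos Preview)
Your proof is correct. The first part (showing $\Theta^\star = \bar\Theta$) matches the paper's argument exactly. For the second part, however, you take a different and more economical route than the paper.

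The paper does \emph{not} invoke Corollary~\ref{cor:substitution}; instead it gives a direct quantitative argument: for $\theta \in \mathscr{M}_{d_{\mathscr D}}(\mu^\star,\delta)$ it writes
\[
d_{\text{Lip}_1}(\mu^\star,\mu_\theta) - \inf_{\theta'} d_{\text{Lip}_1}(\mu^\star,\mu_{\theta'}) = f\bigl(d_{\mathscr D}(\mu^\star,\mu_\theta)\bigr) - f\bigl(\inf_{\theta'} d_{\mathscr D}(\mu^\star,\mu_{\theta'})\bigr),
\]
then bounds the right-hand side by $\delta \sup_{x\in[0,c]}|f'(x)|$ using the $C^1$ regularity of $f$ and the boundedness of $d_{\mathscr D}(\mu^\star,\cdot)$ on $\Theta$ (from Theorem~\ref{th:continuity}), and finally chooses $\delta$ accordingly. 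Your approach, by contrast, simply observes that once $\Theta^\star=\bar\Theta$ is established, Corollary~\ref{cor:substitution} applies verbatim. This is cleaner and, as you point out, shows that the differentiability of $f$ is not actually needed for the present lemma (strict monotonicity and continuity would suffice, via Proposition~\ref{prop:substitution} underlying the corollary). What the paper's direct argument buys is an explicit relation between $\delta$ and $\varepsilon$, namely $\delta \leqslant \varepsilon / \sup_{[0,c]}|f'|$, which your route through the corollary does not provide.
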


To complete Lemma \ref{lem:monotonous_equivalence_a_equal_b}, we now tackle the case $a<b$.
\begin{proposition}\label{prop:monotonous_equivalence}
Assume that Assumption \ref{ass:compactness} is sa\-tis\-fi\-ed, and that $d_{\emph{Lip}_1}$ and $d_{\mathscr{D}}$ are mo\-no\-to\-nous\-ly equivalent on $\mathscr{P}$. Then, for any $\delta \in (0,1)$, $d_{\emph{Lip}_1}$ can be $\varepsilon$-substituted by $d_{\mathscr{D}}$ with $\varepsilon = (b-a) f (\underset{\theta \in \Theta}{\inf} \ d_{\mathscr{D}}(\mu^\star, \mu_{{\theta}})) + O(\delta)$.
\end{proposition}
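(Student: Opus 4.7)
The plan is to unravel the definition of $\varepsilon$-substitution and use the two-sided monotonous equivalence to independently upper-bound $d_{\text{Lip}_1}(\mu^\star,\mu_\theta)$ for any $\theta \in \mathscr{M}_{d_{\mathscr{D}}}(\mu^\star, \delta)$ and lower-bound $\inf_{\theta}d_{\text{Lip}_1}(\mu^\star,\mu_\theta)$. Write $I_{\mathscr{D}} = \inf_{\theta \in \Theta} d_{\mathscr{D}}(\mu^\star, \mu_\theta)$ and $I_{\text{Lip}_1} = \inf_{\theta \in \Theta} d_{\text{Lip}_1}(\mu^\star, \mu_\theta)$ (both infima are attained by Corollary \ref{cor:min_reached}, but this is not strictly needed). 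Fix $\delta \in (0,1)$ and pick the $\delta$ in Definition \ref{def:substitution} equal to the same $\delta$; then any $\theta \in \mathscr{M}_{d_{\mathscr{D}}}(\mu^\star, \delta)$ satisfies $d_{\mathscr{D}}(\mu^\star, \mu_\theta) \leqslant I_{\mathscr{D}} + \delta$.

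Next, I would use the upper half of monotonous equivalence together with the fact that $f$ is strictly increasing, so
\begin{equation*}
 d_{\text{Lip}_1}(\mu^\star, \mu_\theta) \leqslant b\, f\!\big(d_{\mathscr{D}}(\mu^\star, \mu_\theta)\big) \leqslant b\, f(I_{\mathscr{D}} + \delta).
\end{equation*}
For the lower bound, note that the lower half of monotonous equivalence yields $d_{\text{Lip}_1}(\mu^\star, \mu_{\theta'}) \geqslant a\, f(d_{\mathscr{D}}(\mu^\star, \mu_{\theta'}))$ for every $\theta' \in \Theta$. Since $f$ is increasing and $d_{\mathscr{D}}(\mu^\star,\mu_{\theta'}) \geqslant I_{\mathscr{D}}$, taking the infimum over $\theta'$ gives $I_{\text{Lip}_1} \geqslant a\, f(I_{\mathscr{D}})$.

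Subtracting the two displays produces the key inequality
\begin{equation*}
 d_{\text{Lip}_1}(\mu^\star, \mu_\theta) - I_{\text{Lip}_1} \leqslant b\, f(I_{\mathscr{D}} + \delta) - a\, f(I_{\mathscr{D}}) = (b-a) f(I_{\mathscr{D}}) + b\big[f(I_{\mathscr{D}} + \delta) - f(I_{\mathscr{D}})\big].
\end{equation*}
Finally I would use that $f$ is continuously differentiable on $\mathds{R}^+$: a first-order Taylor expansion at $I_{\mathscr{D}}$ gives $f(I_{\mathscr{D}} + \delta) - f(I_{\mathscr{D}}) = f'(I_{\mathscr{D}})\delta + o(\delta)$, so that $b[f(I_{\mathscr{D}} + \delta) - f(I_{\mathscr{D}})] = O(\delta)$ as $\delta \to 0$. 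This shows $\theta \in \mathscr{M}_{d_{\text{Lip}_1}}(\mu^\star, \varepsilon)$ with $\varepsilon = (b-a)f(I_{\mathscr{D}}) + O(\delta)$, which is exactly the claim.

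The substantive step is the lower bound $I_{\text{Lip}_1} \geqslant a f(I_{\mathscr{D}})$, because it is the only place where one exploits global information about the equivalence rather than a pointwise estimate; the rest is a careful bookkeeping argument combined with the smoothness of $f$. Note that the residual $(b-a)f(I_{\mathscr{D}})$ is irreducible under this strategy: it reflects the gap between the multiplicative constants $a$ and $b$ in the two-sided equivalence and would vanish only in the $a=b$ case already covered by Lemma \ref{lem:monotonous_equivalence_a_equal_b}.
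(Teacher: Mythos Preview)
Your proof is correct and follows essentially the same approach as the paper's own argument: both use the upper monotonous-equivalence bound together with monotonicity of $f$ to get $d_{\text{Lip}_1}(\mu^\star,\mu_\theta)\leqslant b\,f(I_{\mathscr{D}}+\delta)$, the lower bound $I_{\text{Lip}_1}\geqslant a\,f(I_{\mathscr{D}})$ from the other half of the equivalence, and then subtract and expand $f(I_{\mathscr{D}}+\delta)=f(I_{\mathscr{D}})+O(\delta)$ via the $C^1$ assumption. Your write-up is in fact slightly more explicit than the paper's in spelling out the algebraic decomposition $b\,f(I_{\mathscr{D}}+\delta)-a\,f(I_{\mathscr{D}})=(b-a)f(I_{\mathscr{D}})+b[f(I_{\mathscr{D}}+\delta)-f(I_{\mathscr{D}})]$.
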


Proposition~\ref{prop:monotonous_equivalence} states that we can reach \(\varepsilon\)-minimizers of \(d_{\text{Lip}_1}\) by solving the WGANs problem up to a precision sufficiently small, for all \(\varepsilon\) larger than a bias induced by the model \(\mathscr P\) and by the discrepancy between \(d_{\text{Lip}_1}\) and \(d_{\mathscr D}\).

In order to validate Definition \ref{def:monotonous_equivalence}, we slightly depart from the WGANs setting and run a series of small experiments in the simplified setting where both $\mu^{\star}$ and $\mu \in \mathscr{P}$ are bivariate mixtures of independent Gaussian distributions with $K$ components ($K=1, 4, 9, 25$). We consider two classes of discriminators $\{ \mathscr{D}_q: q=2, 5 \}$ of the form \eqref{eq:def_discriminators}, with growing depth $q$ (the width of the hidden layers is kept constant equal to $20$). Our goal is to exemplify the relationship between the distances $d_{\text{Lip}_1}$ and $d_{\mathscr{D}_q}$ by looking whether $d_{\text{Lip}_1}$ is monotonously equivalent to $d_{\mathscr{D}_q}$.

First, for each $K$, we randomly draw $40$ different pairs of distributions $(\mu^\star, \mu)$ among the set of mixtures of bivariate Gaussian densities with $K$ components. Then, for each of these pairs, we compute an approximation of $d_{\text{Lip}_1}$ by averaging the Wasserstein distance between finite samples of size 4096 over 20 runs. This operation is performed using the Python package by \cite{flamary2017pot}. For each pair of distributions, we also calculate the corresponding IPMs $d_{\mathscr{D}_q}(\mu^\star, \mu)$. We finally compare $d_{\text{Lip}_1}$ and $d_{\mathscr{D}_q}$ by approximating their relationship with a parabolic fit. Results are presented in Figure \ref{fig:neural_div_figure}, which depicts in particular the best parabolic fit, and shows the corresponding Least Relative Error (LRE) together with the width $(b-a)$ from Definition \ref{def:monotonous_equivalence}. In order to enforce the discriminator to verify Assumption \ref{ass:compactness}, we use the orthonormalization of \citet{bjorck1971iterative}, as done for example in \citet{Anil2018SortingOL}.

Interestingly, we see that when the class of discriminative functions gets larger (i.e., when $q$ increases), then both metrics start to behave similarly (i.e., the range $(b-a)$ gets thinner), independently of $K$ (Figure \ref{fig:1a} to Figure \ref{fig:1f}). This tends to confirm that $d_{\text{Lip}_1}$ can be considered as monotonously equivalent to $d_{\mathscr{D}_q}$ for $q$ large enough. On the other hand, for a fixed depth $q$, when allowing for more complex distributions, the width $(b-a)$ increases. This is particularly clear in Figure \ref{fig:1g} and Figure \ref{fig:1h}, which show the fits obtained when merging all pairs for $K=1,4,9,25$ (for both $\mu^\star$ and $\mathscr{P}$).

\begin{figure}
    \centering
    \subfloat[$d_{\mathscr{D}_q}$ vs.~$d_{\text{Lip}_1}$, $q=2$, $K=1$.\label{fig:1a}]
    {
        \includegraphics[width=0.36\linewidth]{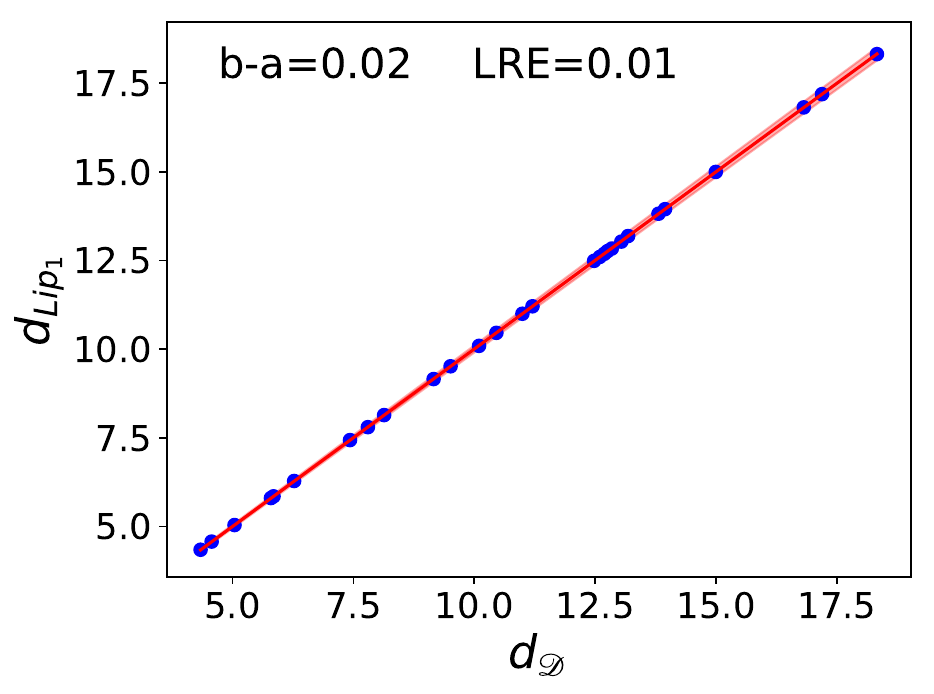}
    }
    \subfloat[$d_{\mathscr{D}_q}$ vs.~$d_{\text{Lip}_1}$, $q=5$, $K=1$.]
    {
        \includegraphics[width=0.35\linewidth]{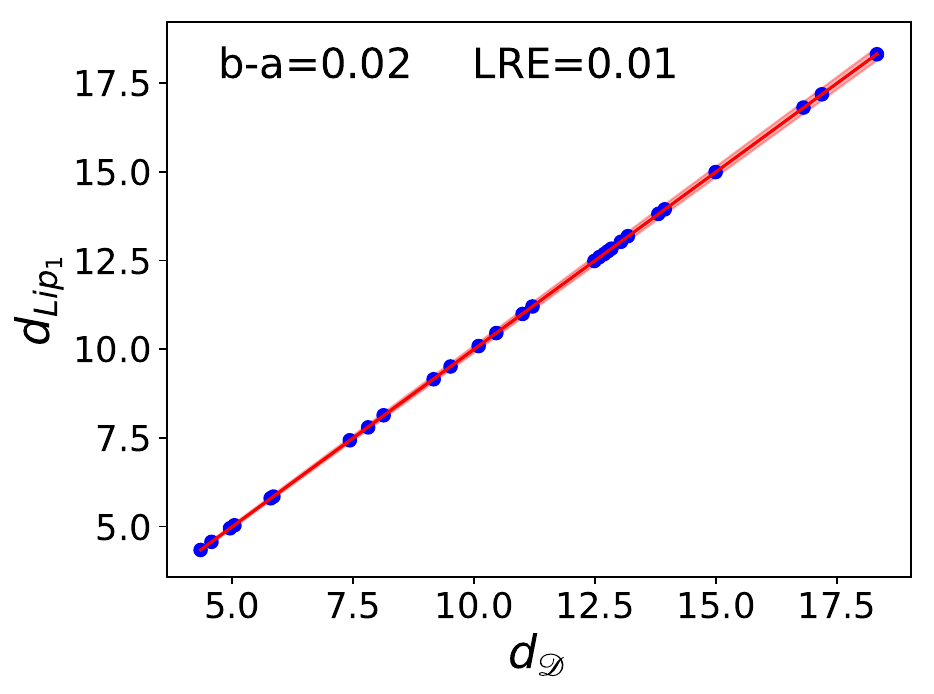}
    }
    \hfill
    \subfloat[$d_{\mathscr{D}_q}$ vs.~$d_{\text{Lip}_1}$, $q=2$, $K=4$.]
    {
        \includegraphics[width=0.35\linewidth]{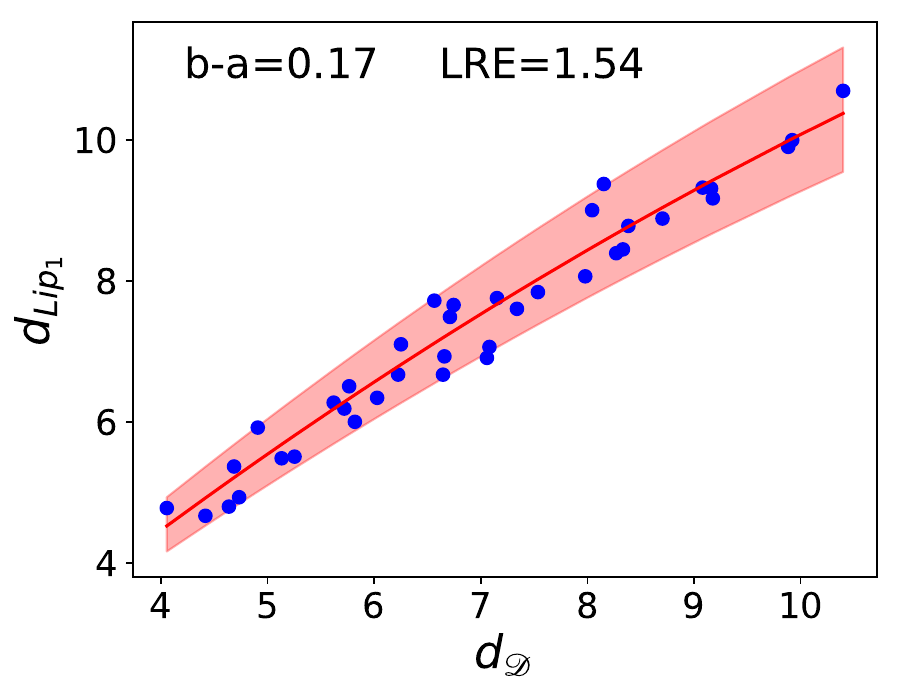}
    }
    \subfloat[$d_{\mathscr{D}_q}$ vs.~$d_{\text{Lip}_1}$, $q=5$, $K=4$.]
    {
        \includegraphics[width=0.35\linewidth]{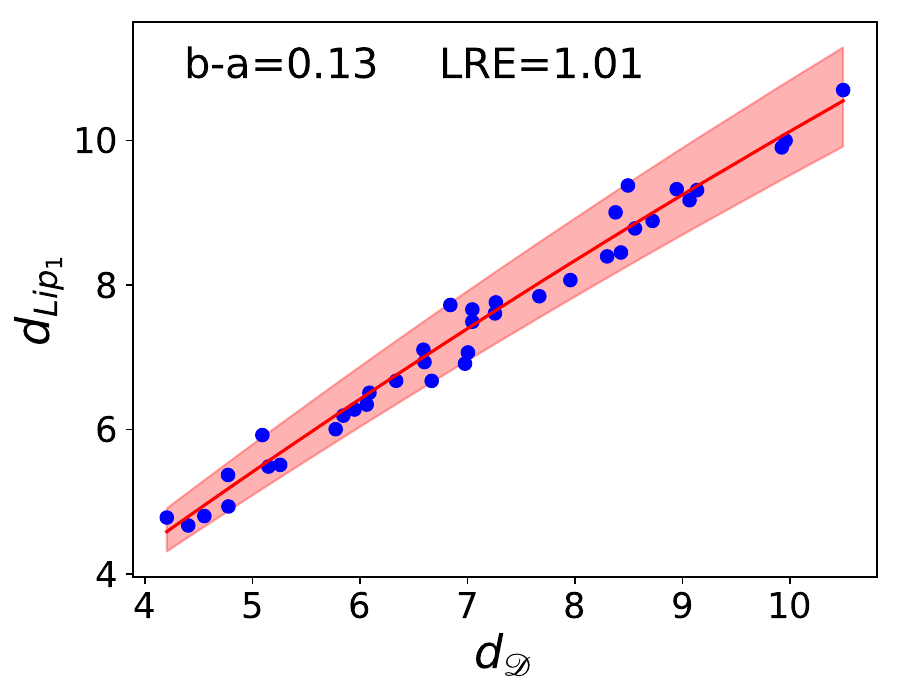}
    }
    \hfill
    \subfloat[$d_{\mathscr{D}_q}$ vs.~$d_{\text{Lip}_1}$, $q=2$, $K=9$.]
    {
        \includegraphics[width=0.35\linewidth]{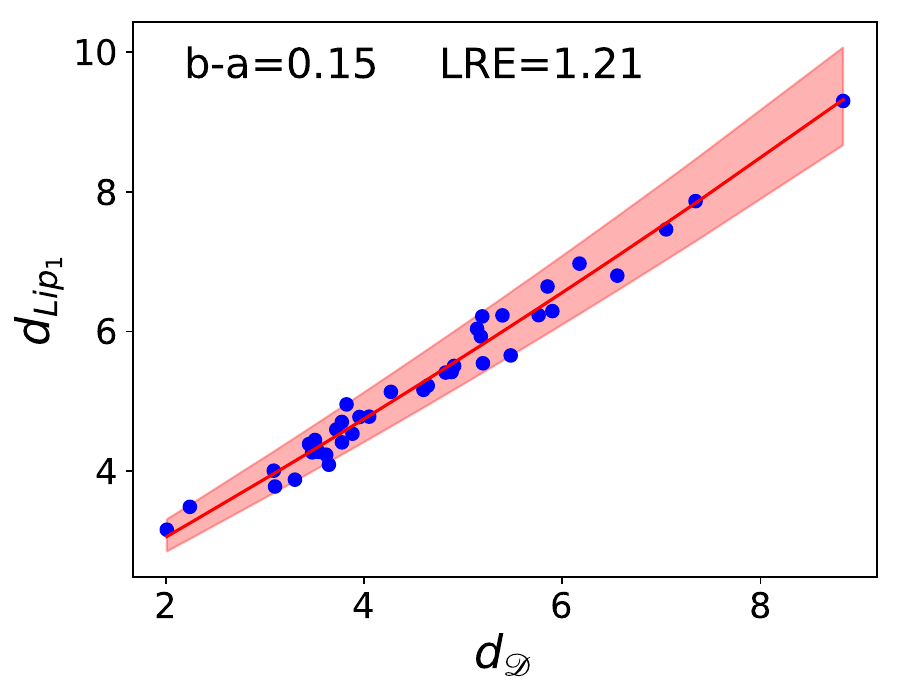}
    }
    \subfloat[$d_{\mathscr{D}_q}$ vs.~$d_{\text{Lip}_1}$, $q=5$, $K=9$.\label{fig:1f}]
    {
        \includegraphics[width=0.35\linewidth]{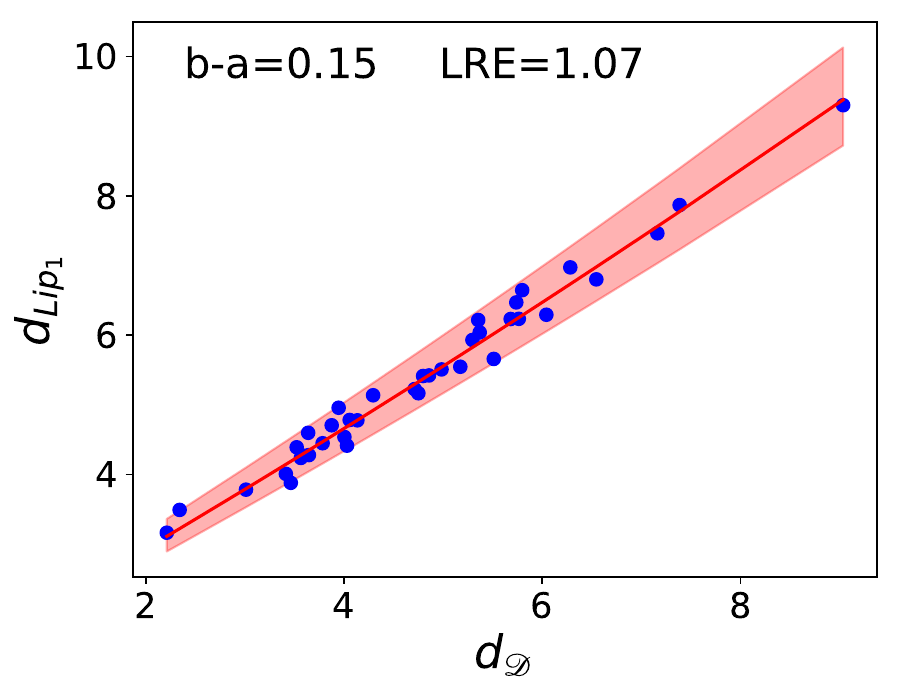}
    }
    \hfill
    \subfloat[$d_{\mathscr{D}_q}$ vs.~$d_{\text{Lip}_1}$, $q=2$, $K=1, 4, 9, 25$.\label{fig:1g}]
    {
        \includegraphics[width=0.35\linewidth]{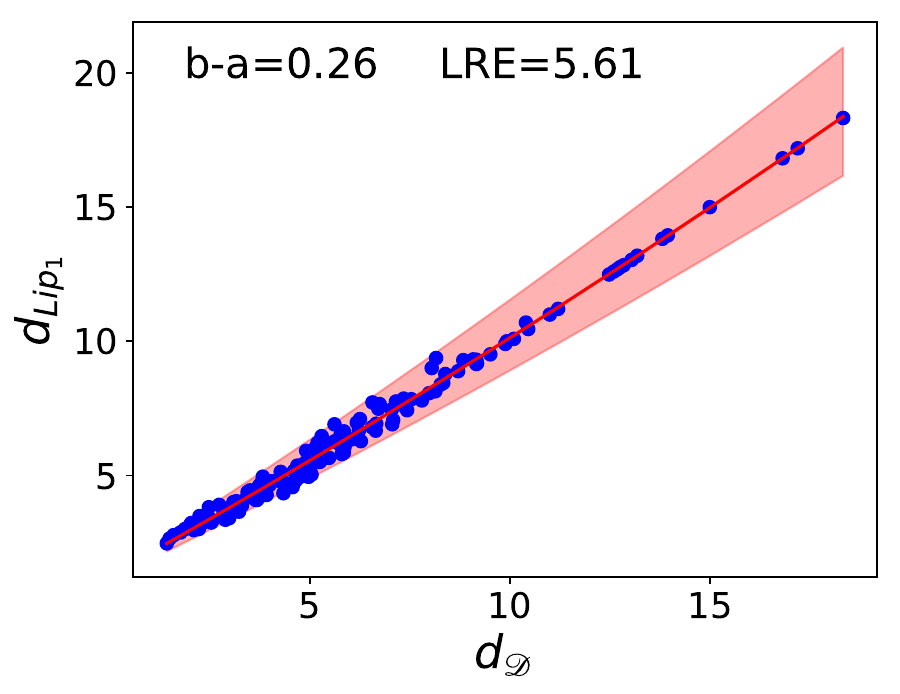}
    }
    \subfloat[$d_{\mathscr{D}_q}$ vs.~$d_{\text{Lip}_1}$, $q=5$, $K=1, 4, 9, 25$.\label{fig:1h}]
    {
        \includegraphics[width=0.35\linewidth]{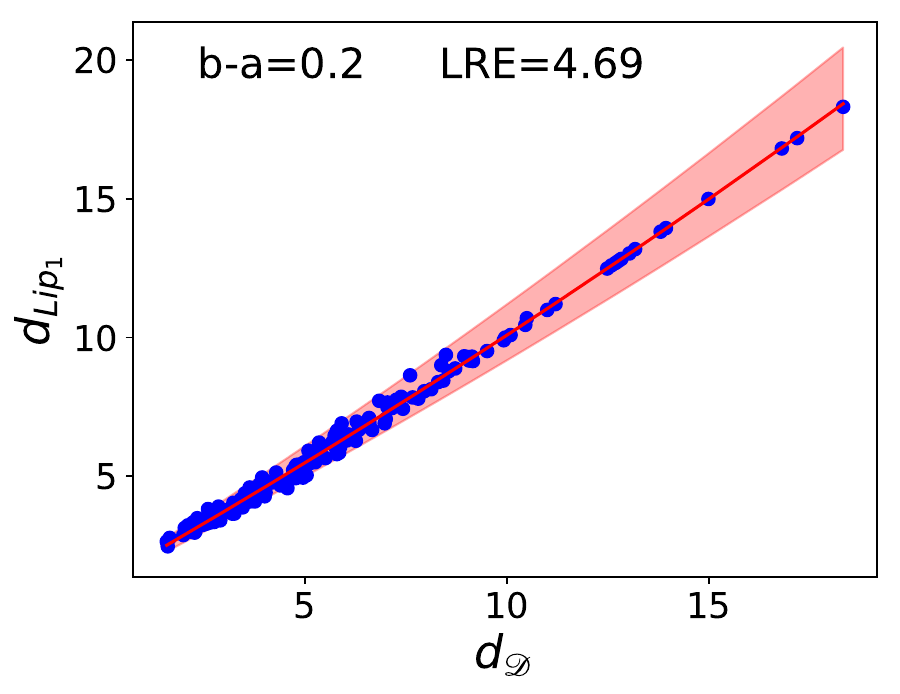}
    }
    \caption{Scatter plots of $40$ pairs of distances simultaneously measured with $d_{\text{Lip}_1}$ and $d_{\mathscr{D}_q}$, for $q=2, 5$ and $K=1, 4, 9, 25$. The red curve is the optimal parabolic fitting and LRE refers to the Least Relative Error. The red zone is the envelope obtained by stretching the optimal curve from $b$ to $a$.
    }
    \label{fig:neural_div_figure}
\end{figure}
These figures illustrate the fact that, for a fixed discriminator, the monotonous equivalence between $d_{\text{Lip}_1}$ and $d_{\mathscr{D}}$ seems to be a more demanding assumption when the class of generative distributions becomes too large.
\subsection{Motivating the use of deep GroupSort neural networks}
The objective of this subsection is to provide some justification for the use of deep GroupSort neural networks in the field of WGANs. This short discussion is motivated by the observation of \citet[][Theorem 1]{Anil2018SortingOL}, who stress that norm-constrained ReLU neural networks are not well-suited for learning non-linear $1$-Lipschitz functions.

The next lemma shows that networks of the form \eqref{eq:def_discriminators}, which use GroupSort activations, can recover any $1$-Lipschitz function belonging to the class $\text{AFF}$ of real-valued affine functions on $E$.
\begin{lemma}\label{lem:ReLU_net_and_affine_functions}
    Let $f: E \to \mathds{R}$ be in $\emph{AFF} \cap \emph{Lip}_1$. Then there exists a neural network of the form \eqref{eq:def_discriminators} verifying Assumption \ref{ass:compactness}, with $q=2$ and $v_1=2$, that represents $f$.
\end{lemma}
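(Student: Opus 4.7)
My plan is to construct the desired network explicitly, then verify both the representation identity and Assumption~\ref{ass:compactness}. First, since $f \in \text{AFF}$, we may write $f(x) = w \cdot x + c$ for some $w \in \mathds R^D$ and $c \in \mathds R$, and the $1$-Lipschitz assumption (with respect to the Euclidean norm) is equivalent to $\|w\| \leqslant 1$ by Cauchy--Schwarz (tightness of this bound is obtained by taking $x - y$ parallel to $w$).

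Next, I would propose the parameters
\begin{equation*}
V_1 = \begin{pmatrix} w^{\top} \\ w^{\top} \end{pmatrix} \in \mathscr{M}_{(2,D)}, \quad c_1 = \begin{pmatrix} 0 \\ 0 \end{pmatrix}, \quad V_2 = \bigl( \tfrac{1}{2}, \tfrac{1}{2} \bigr) \in \mathscr{M}_{(1,2)}, \quad c_2 = c.
\end{equation*}
The key computation is then immediate: $V_1 x + c_1 = (w \cdot x, w \cdot x)^{\top}$, so GroupSort acts on a pair of equal coordinates and leaves them unchanged, giving $\tilde{\sigma}(V_1 x + c_1) = (w \cdot x, w \cdot x)^{\top}$. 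Applying $V_2$ and adding the offset $c_2$ yields $V_2 \tilde{\sigma}(V_1 x + c_1) + c_2 = \tfrac{1}{2}(w \cdot x) + \tfrac{1}{2}(w \cdot x) + c = w \cdot x + c = f(x)$.

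It then remains to check Assumption~\ref{ass:compactness}. Here $\|V_1\|_{2,\infty} = \sup_{\|x\|=1}\|V_1 x\|_\infty = \sup_{\|x\|=1}|w\cdot x| = \|w\| \leqslant 1$, and $\|V_2\|_\infty$ equals the sum of absolute values of its single row, which is $\tfrac{1}{2}+\tfrac{1}{2}=1$. The bias $c_1$ is zero, while $|c_2| = |c|$, so choosing any constant $K_2 \geqslant |c|$ (the lemma only asserts existence of such a network, so we are free to pick $K_2$ accordingly) completes the verification.

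There is essentially no obstacle: the whole content is the observation that duplicating the row $w^{\top}$ so that the two hidden pre-activations are equal neutralizes GroupSort and allows the output layer $(\tfrac{1}{2},\tfrac{1}{2})$ to stay within the $\|\cdot\|_\infty \leqslant 1$ constraint while still summing to one. The only point worth emphasizing in the write-up is why we cannot simply take $v_1 = 1$ with $V_2 = 1$: the very definition \eqref{eq:def_discriminators} requires hidden widths to be even (so that GroupSort-2 is well defined), which is exactly why $v_1 = 2$ is the minimal admissible width and forces the $(\tfrac{1}{2},\tfrac{1}{2})$ averaging trick above.
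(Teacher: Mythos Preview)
Your proof is correct and follows essentially the same construction as the paper: duplicate the row $w^{\top}$ in $V_1$ so that GroupSort acts trivially, then read off the affine function with the output layer. The only cosmetic difference is that the paper takes $V_2=(1,0)$ (selecting one coordinate) rather than your averaging $V_2=(\tfrac12,\tfrac12)$; both satisfy $\|V_2\|_\infty=1$, so your closing remark that the even-width constraint ``forces'' the averaging trick is slightly overstated, but the argument itself is sound.
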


Motivated by Lemma \ref{lem:ReLU_net_and_affine_functions}, we show that, in some specific cases, the Wasserstein distance $d_{\text{Lip}_1}$ can be approached by only considering affine functions, thus motivating the use of neural networks of the form \eqref{eq:def_discriminators}. Recall that the support $S_\mu$ of a probability measure $\mu$ is the smallest subset of $\mu$-measure $1$.
\begin{lemma}\label{lem:wasserstein_distance_1d}
    Let $\mu$ and $\nu$ be two probability measures in $P_1(E)$. Assume that $S_\mu$ and $S_\nu$ are one-dimensional disjoint intervals included in the same line. Then $d_{\emph{Lip}_1}(\mu, \nu) = d_{\emph{AFF} \cap \emph{Lip}_1}(\mu, \nu)$.
\end{lemma}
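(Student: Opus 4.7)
The plan is to reduce the problem to a one-dimensional computation by exploiting the fact that both supports lie on a common line, and then exhibit an explicit affine $1$-Lipschitz function that achieves the Wasserstein distance.

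First, I would set up coordinates on the common line. Let $L \subseteq E$ be the line containing $S_\mu$ and $S_\nu$, write it as $\{p_0 + t u: t \in \mathds{R}\}$ for some $p_0 \in L$ and unit vector $u \in \mathds{R}^D$, and define $\phi(x) = (x - p_0) \cdot u$ for $x \in E$. Since the two supports are disjoint one-dimensional intervals on the same line, one of them lies entirely before the other with respect to $\phi$; without loss of generality, assume that $\sup_{x \in S_\mu} \phi(x) \leqslant \inf_{y \in S_\nu} \phi(y)$.

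Next, I would compute $d_{\text{Lip}_1}(\mu, \nu)$ directly. For any coupling $\pi \in \Pi(\mu, \nu)$, the integrand satisfies $\|x - y\| = \phi(y) - \phi(x)$ on $S_\mu \times S_\nu$ (the supports are collinear and ordered), so
\begin{equation*}
\int_{E \times E} \|x - y\| \, \pi(\mathrm{d}x, \mathrm{d}y) = \mathds{E}_\nu \phi - \mathds{E}_\mu \phi,
\end{equation*}
independently of $\pi$. Hence $d_{\text{Lip}_1}(\mu, \nu) = \mathds{E}_\nu \phi - \mathds{E}_\mu \phi$.

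Finally, I would observe that the function $\phi: E \to \mathds{R}$ is itself affine, and, since $\|u\| = 1$, Cauchy--Schwarz gives $|\phi(x) - \phi(y)| = |(x - y) \cdot u| \leqslant \|x - y\|$, so $\phi \in \text{AFF} \cap \text{Lip}_1$. Therefore $|\mathds{E}_\mu \phi - \mathds{E}_\nu \phi|$ is a lower bound for $d_{\text{AFF} \cap \text{Lip}_1}(\mu, \nu)$, and combined with the trivial inequality $d_{\text{AFF} \cap \text{Lip}_1} \leqslant d_{\text{Lip}_1}$ (since $\text{AFF} \cap \text{Lip}_1 \subseteq \text{Lip}_1$), we obtain equality. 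There is no real obstacle here beyond making the geometric reduction to one dimension precise; the main subtlety is verifying that the collinearity and ordering of the supports really do force $\|x-y\| = \phi(y) - \phi(x)$ on the product of supports, which is immediate once the parameterization of $L$ is fixed.
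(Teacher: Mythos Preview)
Your proof is correct and follows essentially the same approach as the paper: both exhibit the projection onto the direction of the common line, $x \mapsto x \cdot u$ (up to an irrelevant additive constant), as an affine $1$-Lipschitz function attaining the Wasserstein distance, using the key fact that $\|x-y\| = (y-x)\cdot u$ for $(x,y) \in S_\mu \times S_\nu$ once the supports are ordered. The only minor difference is that you compute the transport cost directly and observe it is coupling-independent, whereas the paper starts from an optimal coupling and verifies that the linear function matches it; this is a cosmetic variation on the same argument.
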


Lemma \ref{lem:wasserstein_distance_1d} is interesting insofar as it describes a specific case where the discriminator can be restricted to affine functions while keeping the identity $d_{\text{Lip}_1} = d_{\mathscr D}$. We consider in the next lemma a slightly more involved setting, where the two distributions $\mu$ and $\nu$ are multivariate Gaussian with the same covariance matrix.
\begin{lemma}\label{lem:gaussian_case}
    Let $(m_1, m_2) \in (\mathds{R}^D)^2$, and let $\Sigma \in \mathscr{M}_{(D,D)}$ be a positive semi-definite matrix. Assume that $\mu$ is Gaussian $\mathscr{N}(m_1, \Sigma)$ and that $\nu$ is Gaussian $\mathscr{N}(m_2, \Sigma)$. Then $d_{\emph{Lip}_1}(\mu, \nu) = d_{\emph{AFF} \cap \emph{Lip}_1}(\mu, \nu)$.
\end{lemma}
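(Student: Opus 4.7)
}
The plan is to sandwich both quantities between the scalar $\|m_1-m_2\|$, by computing $d_{\mathrm{AFF}\cap\mathrm{Lip}_1}(\mu,\nu)$ exactly and by exhibiting a coupling that certifies the matching upper bound on $d_{\mathrm{Lip}_1}(\mu,\nu)$. Since $\mathrm{AFF}\cap\mathrm{Lip}_1 \subseteq \mathrm{Lip}_1$, the inequality $d_{\mathrm{AFF}\cap\mathrm{Lip}_1}(\mu,\nu)\leqslant d_{\mathrm{Lip}_1}(\mu,\nu)$ is automatic; I only need to prove the reverse.

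First, I would compute the affine IPM explicitly. An affine function $f(x)=u\cdot x + c$ lies in $\mathrm{Lip}_1$ if and only if $\|u\|\leqslant 1$, and its constant term is irrelevant for $\mathds{E}_\mu f-\mathds{E}_\nu f$. Since $\mathds{E}_\mu f = u\cdot m_1 + c$ and $\mathds{E}_\nu f = u\cdot m_2 + c$, Cauchy--Schwarz gives
\begin{equation*}
|\mathds{E}_\mu f - \mathds{E}_\nu f| = |u\cdot(m_1-m_2)| \leqslant \|m_1-m_2\|,
\end{equation*}
with equality attained at $u = (m_1-m_2)/\|m_1-m_2\|$ (if $m_1=m_2$ the statement is trivial, so I may assume $m_1\neq m_2$). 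Hence $d_{\mathrm{AFF}\cap\mathrm{Lip}_1}(\mu,\nu) = \|m_1-m_2\|$.

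Second, I would upper bound $d_{\mathrm{Lip}_1}(\mu,\nu) = W_1(\mu,\nu)$ by the same quantity through a translation coupling. Let $X\sim\mathcal{N}(m_1,\Sigma)$ and set $Y = X + (m_2-m_1)$; because the covariance is preserved under translation, $Y\sim\mathcal{N}(m_2,\Sigma) = \nu$, so the joint law of $(X,Y)$ lies in $\Pi(\mu,\nu)$. The transport cost is
\begin{equation*}
\int_{E\times E}\|x-y\|\,\pi(\mathrm{d}x,\mathrm{d}y) = \mathds{E}\|X-Y\| = \|m_1-m_2\|,
\end{equation*}
yielding $W_1(\mu,\nu)\leqslant \|m_1-m_2\|$. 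Combining,
\begin{equation*}
\|m_1-m_2\| = d_{\mathrm{AFF}\cap\mathrm{Lip}_1}(\mu,\nu) \leqslant d_{\mathrm{Lip}_1}(\mu,\nu) \leqslant \|m_1-m_2\|,
\end{equation*}
so all inequalities are equalities and the lemma follows.

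There is essentially no obstacle here: everything reduces to observing that translations are optimal couplings when the covariances agree, and that first moments of affine functions depend on the distribution only through its mean. The slightly delicate issue (should one wish to be careful) is to check that the construction carries over when $\Sigma$ is only positive semi-definite rather than positive definite, so that $\mu$ and $\nu$ may be supported on an affine subspace; but the translation coupling and the bound $|u\cdot(m_1-m_2)|\leqslant\|m_1-m_2\|$ are insensitive to degeneracy, so the argument is uniform in $\Sigma$.
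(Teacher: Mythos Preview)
Your proof is correct and reaches the same sandwich $\|m_1-m_2\| = d_{\mathrm{AFF}\cap\mathrm{Lip}_1}(\mu,\nu) \leqslant d_{\mathrm{Lip}_1}(\mu,\nu) \leqslant \|m_1-m_2\|$ as the paper, but the two arguments differ in how the upper bound $W_1(\mu,\nu)\leqslant\|m_1-m_2\|$ is obtained. The paper goes through $W_2$: it invokes $W_1\leqslant W_2$ (Jensen) and then the closed-form expression of \citet{givens1984class} for the $2$-Wasserstein distance between Gaussians, which collapses to $\|m_1-m_2\|$ when the covariances coincide. Your translation coupling $Y=X+(m_2-m_1)$ bypasses this detour entirely and gives $W_1\leqslant\|m_1-m_2\|$ in one line, with no external citation and no appeal to $W_2$. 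Conversely, for the lower bound the paper uses Jensen on an optimal $W_1$-coupling to get $\|m_1-m_2\|=\|\mathds E(X-Y)\|\leqslant\mathds E\|X-Y\|=W_1$, whereas you obtain it implicitly from $d_{\mathrm{AFF}\cap\mathrm{Lip}_1}\leqslant d_{\mathrm{Lip}_1}$ after computing the affine IPM exactly; these are dual viewpoints of the same fact. Your remark that degeneracy of $\Sigma$ is irrelevant is apt and applies equally to both proofs.
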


Yet, assuming multivariate Gaussian distributions might be too restrictive. Therefore, we now assume that both distributions lay on disjoint compact supports sufficiently distant from one another. Recall that for a set $S\subseteq E$, the diameter of $S$ is $\text{diam}(S) = {\sup}_{(x, y) \in S^2} \ \|x-y\|$, and that the distance between two sets $S$ and $T$ is defined by $d(S, T) = {\inf}_{(x,y) \in S \times T}\ \|x-y\|$.
\begin{proposition}\label{prop:cluster_wasserstein}
   Let $\varepsilon >0$, and let $\mu$ and $\nu$ be two probability measures in $P_1(E)$ with compact convex supports $S_\mu$ and $S_\nu$. Assume that $\max( \emph{diam}(S_\mu), \emph{diam}(S_\nu)) \leqslant \varepsilon d(S_\mu, S_\nu)$. Then
\begin{equation*}d_{\emph{AFF} \cap \emph{Lip}_1}(\mu, \nu) \leqslant d_{\emph{Lip}_1}(\mu, \nu) \leqslant (1+2\varepsilon) d_{\emph{AFF} \cap \emph{Lip}_1}(\mu, \nu).\end{equation*}
\end{proposition}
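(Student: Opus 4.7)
\medskip

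\noindent\textbf{Proof plan.} The left-hand inequality is immediate from $\text{AFF}\cap\text{Lip}_1\subseteq\text{Lip}_1$, so the whole task is the right-hand inequality. The strategy is to realize both distances on a single one-dimensional direction---the one joining the barycenters of $\mu$ and $\nu$---and to exploit the hypothesis that the diameters are dominated by $\varepsilon\, d(S_\mu,S_\nu)$.

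First, I would introduce $\bar{x}=\int_E x\,\mu(\mathrm{d} x)$ and $\bar{y}=\int_E y\,\nu(\mathrm{d} y)$. Since $\mu\in P_1(E)$ and its support $S_\mu$ is compact and convex, $\bar{x}$ is well defined and lies in $S_\mu$; likewise $\bar{y}\in S_\nu$. In particular $\|\bar{x}-\bar{y}\|\geqslant d(S_\mu,S_\nu)$, so the vector $u=(\bar{x}-\bar{y})/\|\bar{x}-\bar{y}\|$ is well defined. The affine function $g(x)=\langle u,x\rangle$ is then $1$-Lipschitz (since $\|u\|=1$) and satisfies
\begin{equation*}
    \mathds{E}_\mu g-\mathds{E}_\nu g=\langle u,\bar{x}-\bar{y}\rangle=\|\bar{x}-\bar{y}\|,
\end{equation*}
which gives the lower bound $d_{\text{AFF}\cap\text{Lip}_1}(\mu,\nu)\geqslant\|\bar{x}-\bar{y}\|$.

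Next, I would upper-bound $d_{\text{Lip}_1}(\mu,\nu)$ by the same quantity times $(1+2\varepsilon)$. Pick any $f\in\text{Lip}_1$ and write the telescoping identity
\begin{equation*}
    \mathds{E}_\mu f-\mathds{E}_\nu f = \mathds{E}_\mu\bigl[f(X)-f(\bar{x})\bigr] + \bigl[f(\bar{x})-f(\bar{y})\bigr] + \mathds{E}_\nu\bigl[f(\bar{y})-f(Y)\bigr].
\end{equation*}
The middle term is at most $\|\bar{x}-\bar{y}\|$ by $1$-Lipschitzness. Each side term is bounded by $\mathds{E}_\mu\|X-\bar{x}\|\leqslant\mathrm{diam}(S_\mu)$ and $\mathds{E}_\nu\|Y-\bar{y}\|\leqslant\mathrm{diam}(S_\nu)$, both of which the hypothesis controls by $\varepsilon\, d(S_\mu,S_\nu)\leqslant\varepsilon\|\bar{x}-\bar{y}\|$. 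Summing,
\begin{equation*}
    \mathds{E}_\mu f-\mathds{E}_\nu f\leqslant(1+2\varepsilon)\|\bar{x}-\bar{y}\|,
\end{equation*}
and the same bound for the reverse sign follows from applying the argument to $-f$. Taking the supremum over $f\in\text{Lip}_1$ yields $d_{\text{Lip}_1}(\mu,\nu)\leqslant(1+2\varepsilon)\|\bar{x}-\bar{y}\|\leqslant(1+2\varepsilon)\,d_{\text{AFF}\cap\text{Lip}_1}(\mu,\nu)$, which is the desired inequality.

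There is no real obstacle here: the proof is a short geometric argument whose only subtlety is recognizing that the barycenters belong to the convex supports (so that their distance is at least $d(S_\mu,S_\nu)$) and that this same distance---not the much larger $\|\bar{x}-\bar{y}\|$---is what the hypothesis bounds the diameters against. That double use of the inequality $d(S_\mu,S_\nu)\leqslant\|\bar{x}-\bar{y}\|$ is what converts the additive error $2\varepsilon\, d(S_\mu,S_\nu)$ into the clean multiplicative factor $(1+2\varepsilon)$ in front of $d_{\text{AFF}\cap\text{Lip}_1}(\mu,\nu)$.
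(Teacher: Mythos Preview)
Your proof is correct and noticeably more elementary than the paper's. The paper works through the primal (coupling) formulation of $d_{\text{Lip}_1}$: it picks the pair $(x^\star,y^\star)\in S_\mu\times S_\nu$ realizing $d(S_\mu,S_\nu)$, invokes the hyperplane separation theorem to build a separating hyperplane $\mathscr H$ orthogonal to $u=(x^\star-y^\star)/\|x^\star-y^\star\|$, and then proves the \emph{pointwise} inequality $\|x-y\|\leqslant(1+2\varepsilon)(f_0(x)-f_0(y))$ for all $(x,y)\in S_\mu\times S_\nu$, where $f_0$ is the signed distance to $\mathscr H$. Integrating against an optimal coupling yields the result. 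Your argument stays entirely in the dual picture: you anchor everything at the barycenters $\bar x,\bar y$ (rather than the closest pair $x^\star,y^\star$), use the test function $g(x)=\langle u,x\rangle$ to lower-bound $d_{\text{AFF}\cap\text{Lip}_1}$ by $\|\bar x-\bar y\|$, and upper-bound $d_{\text{Lip}_1}$ by the same quantity times $(1+2\varepsilon)$ through a three-term telescoping and the diameter hypothesis. This avoids the separation theorem, the projection machinery, and the existence of an optimal coupling; it is shorter. The paper's route, on the other hand, yields a stronger \emph{pointwise} statement---an explicit affine $f_0$ with $|f_0(x)-f_0(y)|\leqslant\|x-y\|\leqslant(1+2\varepsilon)(f_0(x)-f_0(y))$ on $S_\mu\times S_\nu$---which may be of independent interest even though only its integrated consequence is needed here. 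One minor omission: you should note at the outset (as the paper does) that the case $d(S_\mu,S_\nu)=0$ is trivial, since then the hypothesis forces both diameters to vanish and hence $\mu=\nu$; this is what guarantees $\bar x\neq\bar y$ so that your $u$ is well defined.
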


Observe that in the case where neither $\mu$ nor $\nu$ are Dirac measures, then the assumption of the lemma imposes that $S_\mu \cap S_\nu=\emptyset$. In the context of WGANs, it is highly likely that the generator badly approximates the true distribution $\mu^\star$ at the beginning of training. The setting of Proposition \ref{prop:cluster_wasserstein} is thus interesting insofar as $\mu^\star$ and the generative distribution will most certainly verify the assumption on the diameters at this point in the learning process. However, in the common case where the true distribution lays on disconnected manifolds, the assumptions of the proposition are not valid anymore, and it would therefore be interesting to show a similar result using the broader set of piecewise linear functions on $E$.

As an empirical illustration, consider the synthetic setting where one tries to approximate a bivariate mixture of independent Gaussian distributions with respectively 4 (Figure \ref{fig:densities_4_modes}) and 9 (Figure \ref{fig:densities_9_modes}) modes. As expected, the optimal discriminator takes the form of a piecewise linear function, as illustrated by Figure \ref{fig:heatmap_disc_grad_4modes} and Figure \ref{fig:heatmap_disc_grad_9modes}, which display heatmaps of the discriminator's output. Interestingly, we see that the number of linear regions increases with the number $K$ of components of $\mu^\star$.

\begin{figure}[H]
    \centering
    \subfloat[True distribution $\mu^\star$ (mixture of $K=4$ bivariate Gaussian densities, green circles) and 2000 data points sampled from the generator $\mu_{\bar{\theta}}$ (blue dots). \label{fig:densities_4_modes}]
    {
        \includegraphics[width=0.4\linewidth]{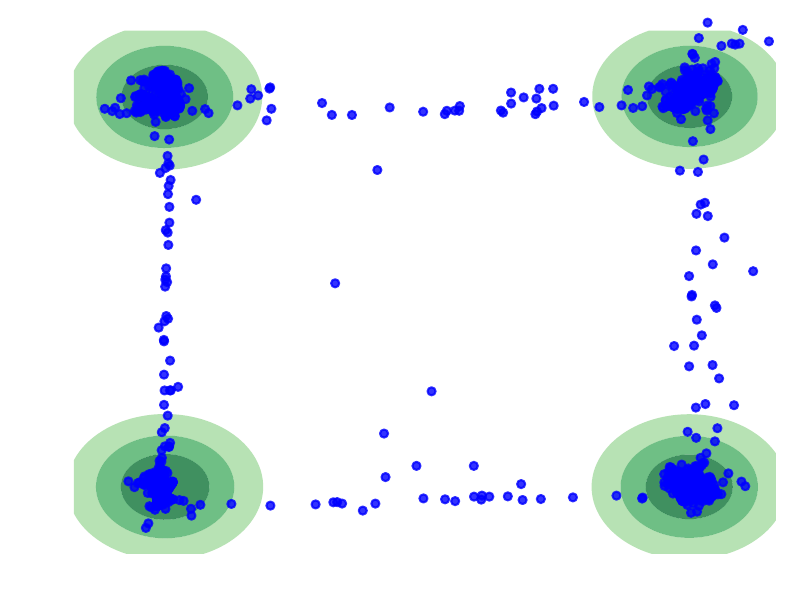}
    }\hfill
    \subfloat[Heatmap of the discriminator's output on a mixture of $K=4$ bivariate Gaussian densities. \label{fig:heatmap_disc_grad_4modes}]
    {
        \includegraphics[width=0.4\linewidth]{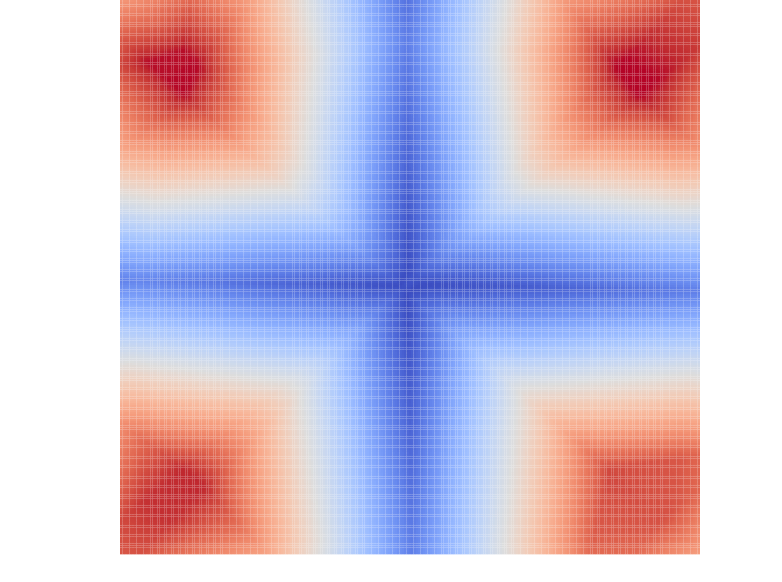}
    }\vfill
    \subfloat[True distribution $\mu^\star$ (mixture of $K=9$ bivariate Gaussian densities, green circles) and 2000 data points sampled from the generator $\mu_{\bar{\theta}}$ (blue dots). \label{fig:densities_9_modes}]
    {
        \includegraphics[width=0.4\linewidth]{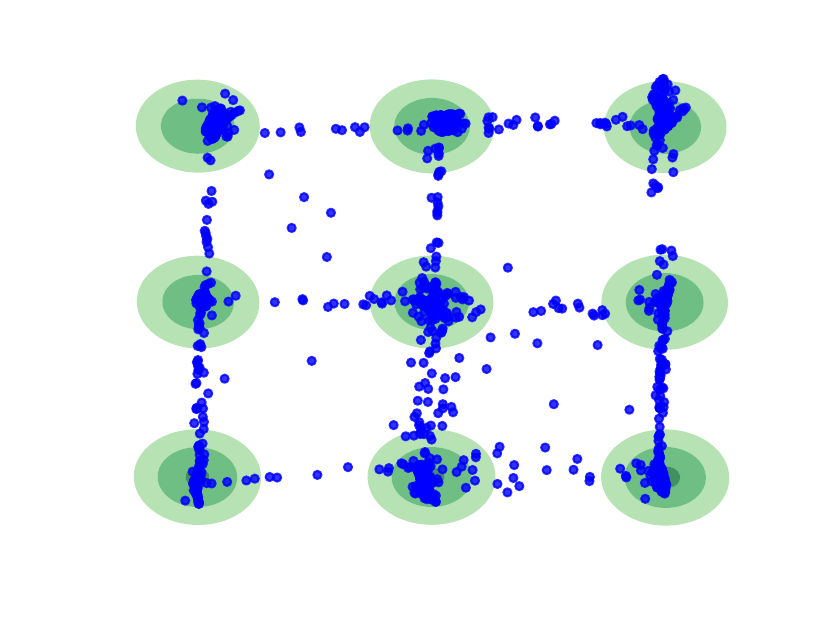}
    }\hfill
    \subfloat[Heatmap of the discriminator's output on a mixture of $K=9$ bivariate Gaussian densities. \label{fig:heatmap_disc_grad_9modes}]
    {
        \includegraphics[width=0.4\linewidth]{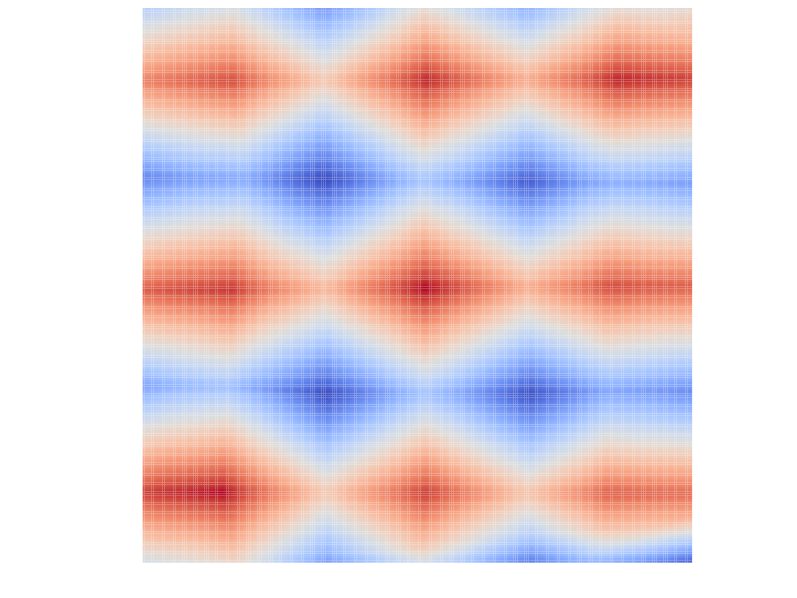}
    }
    \caption{Illustration of the usefulness of GroupSort neural networks when dealing with the learning of mixtures of Gaussian distributions. In both cases, we have $p=q=3$.}
\end{figure}
These empirical results stress that when $\mu^\star$ gets more complex, if the discriminator ought to correctly approximate the Wasserstein distance, then it should parameterize piecewise linear functions with growing numbers of regions. While we enlighten properties of Groupsort networks, many recent theoretical works have been studying the number of regions of deep ReLU neural networks \citep{pascanu2013number, montufar2014number, arora2018understanding, serra2018bounding}. In particular, \citet[][Theorem 5]{montufar2014number} states that the number of linear regions of deep models grows exponentially with the depth and polynomially with the width. This, along with our observations, is an interesting avenue to choose the architecture of the discriminator.
\section{Asymptotic properties} \label{section:asymptotic_properties}
In practice, one never has access to the distribution $\mu^\star$ but rather to a finite collection of i.i.d.~observations $X_1, \hdots, X_n$ distributed according to $\mu^\star$. Thus, for the remainder of the article, we let ${\mu}_n$ be the empirical measure based on $X_1, \hdots, X_n$, that is, for any Borel subset $A$ of $E$, $\mu_n(A)=\frac{1}{n}\sum_{i=1}^{n}\mathds 1_{X_i \in A}$. With this notation, the empirical counterpart of the WGANs problem is naturally defined as minimizing over $\Theta$ the quantity $d_{\mathscr{D}}(\mu_n,\mu_\theta)$. Equivalently, we seek to solve the following optimization problem:
\begin{equation}\label{empirical_approximated_wgans}
    \underset{\theta \in \Theta}{\inf} \ d_{\mathscr{D}}(\mu_n,\mu_\theta)=\underset{\theta \in \Theta}{\inf} \ \underset{\alpha \in \Lambda}{\sup} \Big[ \frac{1}{n} \sum_{i=1}^n D_\alpha(X_i) - \mathds{E} D_{\alpha}(G_{\theta}(Z))\Big].
\end{equation}
Assuming that Assumption \ref{ass:compactness} is satisfied, we have, as in Corollary \ref{cor:min_reached}, that the infimum in \eqref{empirical_approximated_wgans} is reached. We therefore consider the set of empirical optimal parameters
\begin{equation*}
    \hat{\Theta}_n = \underset{\theta \in \Theta}{\argmin} \ d_{\mathscr{D}}(\mu_n,\mu_\theta),
\end{equation*}
and let $\hat{\theta}_n$ be a specific element of $\hat{\Theta}_n$ (note that the choice of $\hat{\theta}_n$ has no impact on the value of the minimum). We note that $\hat{\Theta}_n$ (respectively, $\hat \theta_n$) is the empirical counterpart of $\bar{\Theta}$ (respectively, $\bar \theta$). Section \ref{section:approximation_properties} was mainly devoted to the analysis of the difference $\varepsilon_{\text{optim}}$. In this section, we are willing to take into account the effect of having finite samples. Thus, in line with the above, we are now interested in the generalization properties of WGANs and look for upper bounds on the quantity
\begin{equation}\label{eq:asymptotic_approximation}
    0 \leqslant d_{\text{Lip}_1}(\mu^\star, \mu_{\hat{\theta}_n}) - \underset{\theta \in \Theta}{\inf} \ d_{\text{Lip}_1}(\mu^\star, \mu_{\theta}).
\end{equation}
\citet[][Theorem 3.1]{Arora0LMZ17} states an asymptotic result showing that when provided enough samples, the neural IPM $d_{\mathscr{D}}$ generalizes well, in the sense that for any pair $(\mu, \nu) \in {P}_1 (E)^2$, the difference $| d_{\mathscr{D}}(\mu, \nu) - d_{\mathscr{D}}(\mu_n, \nu_n) |$ can be arbitrarily small with high probability. However, this result does not give any information on the quantity of interest $d_{\text{Lip}_1}(\mu^\star, \mu_{\hat{\theta}_n}) - \inf_{\theta \in \Theta} \ d_{\text{Lip}_1}(\mu^\star, \mu_{\theta})$. Closer to our current work, \citet{zhang2018discrimination} provide bounds for $d_{\mathscr{D}}(\mu^\star, \mu_{\hat{\theta}_n}) - \inf_{\theta \in \Theta} \ d_{\mathscr{D}}(\mu^\star, \mu_{\theta}$), starting from the observation that
\begin{equation} \label{eq:epsilon_optim_bound}
        0 \leqslant d_{\mathscr{D}}(\mu^\star, \mu_{\hat{\theta}_n}) - \underset{\theta \in \Theta}{\inf} \ d_{\mathscr{D}}(\mu^\star, \mu_{{\theta}}) \leqslant 2 d_{\mathscr{D}}(\mu^\star, \mu_n).
\end{equation}
In the present article, we develop a complementary point of view and measure the generalization properties of WGANs on the basis of the Wasserstein distance $d_{\text{Lip}_1}$, as in equation \eqref{eq:asymptotic_approximation}. Our approach is motivated by the fact that the neural IPM $d_{\mathscr{D}}$ is only used for easing the optimization process and, accordingly, that the performance should be assessed on the basis of the distance $d_{\text{Lip}_1}$, not $d_{\mathscr{D}}$.

Note that $\hat{\theta}_n$, which minimizes $d_{\mathscr{D}}(\mu_n, \mu_\theta)$ over $\Theta$, may not be unique. Besides, there is no guarantee that two distinct elements $\theta_{n,1}$ and $\theta_{n,2}$ of $\hat{\Theta}_n$ lead to the same distance $d_{\text{Lip}_1}(\mu^\star, \mu_{{\theta}_{n,1}})$ and $d_{\text{Lip}_1}(\mu^\star, \mu_{{\theta}_{n,2}})$ (again, $\hat{\theta}_n$ is computed with $d_{\mathscr{D}}$, not with $d_{\text{Lip}_1}$). Therefore, in order to upper-bound the error in \eqref{eq:asymptotic_approximation}, we let, for each $\theta_n \in \hat{\Theta}_n$,
\begin{equation*}
    \bar{\theta}_n \in  \underset{\bar{\theta} \in \bar{\Theta}}{\argmin} \ \|\theta_n-\bar{\theta}\|.
\end{equation*}
The rationale behind the definition of $\bar{\theta}_n$ is that we expect it to behave ``similarly'' to $\theta_n$. Following our objective, the error can be decomposed as follows:
\begin{align}
    0 &\leqslant d_{\text{Lip}_1}(\mu^\star, \mu_{\hat{\theta}_n}) - \underset{\theta \in \Theta}{\inf} \ d_{\text{Lip}_1}(\mu^\star, \mu_{\theta}) \nonumber \\
    &\leqslant \underset{{\theta}_n \in \hat{\Theta}_n}{\sup} \ d_{\text{Lip}_1}(\mu^\star, \mu_{{\theta}_n}) - \underset{\theta \in \Theta}{\inf}\ d_{\text{Lip}_1}(\mu^\star, \mu_{\theta}) \nonumber \\
    & = \underset{{\theta}_n \in \hat{\Theta}_n}{\sup} \ \big[ d_{\text{Lip}_1}(\mu^\star, \mu_{{\theta}_n}) - d_{\text{Lip}_1}(\mu^\star, \mu_{\bar{\theta}_n}) + d_{\text{Lip}_1}(\mu^\star, \mu_{\bar{\theta}_n}) \big] - \underset{\theta \in \Theta}{\inf} \ d_{\text{Lip}_1}(\mu^\star, \mu_{{\theta}}) \nonumber \\
    &\leqslant \underset{{\theta}_n \in \hat{\Theta}_n}{\sup} \big[ d_{\text{Lip}_1}(\mu^\star, \mu_{{\theta}_n}) - d_{\text{Lip}_1}(\mu^\star, \mu_{\bar{\theta}_n}) \big] + \underset{\bar{\theta} \in \bar{\Theta}}{\sup} \ d_{\text{Lip}_1}(\mu^\star, \mu_{\bar{\theta}}) - \underset{\theta \in \Theta}{\inf}\ d_{\text{Lip}_1}(\mu^\star, \mu_{\theta}) \nonumber \\
    &= \varepsilon_{\text{estim}} + \varepsilon_{\text{optim}} \label{eq:anticipating},
\end{align}
where we set $\varepsilon_{\text{estim}} = \sup_{\theta_n \in \hat{\Theta}_n}\ [d_{\text{Lip}_1}(\mu^\star, \mu_{{\theta}_n}) - d_{\text{Lip}_1}(\mu^\star, \mu_{\bar{\theta}_n})]$. Notice that this supremum can be positive or negative. However, it can be shown to converge to $0$ almost surely when $n\to \infty$.
\begin{lemma}\label{10032020}
Assume that Assumption \ref{ass:compactness} is satisfied. Then $\underset{n \to \infty}{\lim} \ \varepsilon_{\emph{estim}} = 0$ almost surely.
\end{lemma}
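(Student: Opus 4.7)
The plan is to control $\varepsilon_{\text{estim}}$ by Lipschitz continuity of $\xi_{\text{Lip}_1}$, then turn the question into a statement about the distance of $\hat\Theta_n$ to $\bar\Theta$, and finally appeal to Proposition~\ref{prop:substitution} together with a standard strong law for $W_1(\mu_n,\mu^\star)$. Concretely, Theorem~\ref{th:continuity} supplies a constant $L>0$ (depending only on the model classes) such that $|\xi_{\text{Lip}_1}(\theta)-\xi_{\text{Lip}_1}(\theta')|\leqslant L\|\theta-\theta'\|$ for all $(\theta,\theta')\in\Theta^2$. Since by definition $\bar\theta_n\in\argmin_{\bar\theta\in\bar\Theta}\|\theta_n-\bar\theta\|$, we have $\|\theta_n-\bar\theta_n\|=d(\theta_n,\bar\Theta)$, and therefore
\begin{equation*}
0\leqslant \varepsilon_{\text{estim}} \leqslant L \sup_{\theta_n\in\hat\Theta_n} d(\theta_n,\bar\Theta).
\end{equation*}

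Next, I would control $d(\theta_n,\bar\Theta)$ using the optimality of $\theta_n$ for $d_{\mathscr D}(\mu_n,\cdot)$. For any $\bar\theta\in\bar\Theta$, the triangle inequality for the pseudometric $d_{\mathscr D}$ gives
\begin{equation*}
d_{\mathscr D}(\mu^\star,\mu_{\theta_n})\leqslant d_{\mathscr D}(\mu^\star,\mu_n)+d_{\mathscr D}(\mu_n,\mu_{\theta_n})\leqslant d_{\mathscr D}(\mu^\star,\mu_n)+d_{\mathscr D}(\mu_n,\mu_{\bar\theta})\leqslant 2\,d_{\mathscr D}(\mu^\star,\mu_n)+\inf_{\theta\in\Theta} d_{\mathscr D}(\mu^\star,\mu_\theta),
\end{equation*}
so that $\theta_n\in\mathscr M_{d_{\mathscr D}}(\mu^\star,2\,d_{\mathscr D}(\mu^\star,\mu_n))$ for every $\theta_n\in\hat\Theta_n$. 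Proposition~\ref{prop:substitution} then tells me that, for any $\varepsilon>0$, there is $\delta>0$ such that $2\,d_{\mathscr D}(\mu^\star,\mu_n)\leqslant\delta$ implies $\sup_{\theta_n\in\hat\Theta_n} d(\theta_n,\bar\Theta)\leqslant\varepsilon$.

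It remains to verify that $d_{\mathscr D}(\mu^\star,\mu_n)\to 0$ almost surely. Because $\mathscr D\subseteq\text{Lip}_1$ (Lemma~\ref{lem:uniformly_lipschitz_neural_nets}), we have the pointwise bound $d_{\mathscr D}(\mu^\star,\mu_n)\leqslant d_{\text{Lip}_1}(\mu^\star,\mu_n)=W_1(\mu^\star,\mu_n)$. Since $\mu^\star\in P_1(E)$, the classical strong law for empirical measures (Varadarajan's theorem for weak convergence, combined with the SLLN applied to $x\mapsto\|x-x_0\|$ for convergence of the first moment) yields $\mu_n\to\mu^\star$ weakly in $P_1(E)$ almost surely, which by \citet[Theorem~6.9]{villani2008optimal} is equivalent to $W_1(\mu^\star,\mu_n)\to 0$ almost surely.

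Combining these ingredients, for any $\varepsilon>0$ one has almost surely, for all $n$ large enough, $\sup_{\theta_n\in\hat\Theta_n} d(\theta_n,\bar\Theta)\leqslant\varepsilon/L$, and hence $\varepsilon_{\text{estim}}\leqslant\varepsilon$, which proves the claim. The main conceptual obstacle is the passage from closeness in the $d_{\mathscr D}$-value (which is all that the minimization of $d_{\mathscr D}(\mu_n,\cdot)$ controls) to closeness in the parameter space to $\bar\Theta$ (which is what we need in order to invoke the Lipschitz continuity of $\xi_{\text{Lip}_1}$). This passage is exactly what Proposition~\ref{prop:substitution} provides, so the argument hinges on having that proposition available; the remaining ingredients are the triangle inequality and a standard Wasserstein law of large numbers.
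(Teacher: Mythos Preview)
Your proof is correct and follows essentially the same route as the paper's: both arguments (i) use the Lipschitz continuity of $\xi_{\text{Lip}_1}$ from Theorem~\ref{th:continuity} to bound $\varepsilon_{\text{estim}}$ by $L\sup_{\theta_n\in\hat\Theta_n}d(\theta_n,\bar\Theta)$, (ii) use the triangle inequality for $d_{\mathscr D}$ (your chain is exactly the derivation of inequality~\eqref{eq:epsilon_optim_bound}) to place every $\theta_n\in\hat\Theta_n$ inside $\mathscr M_{d_{\mathscr D}}(\mu^\star,2\,d_{\mathscr D}(\mu^\star,\mu_n))$, (iii) invoke Proposition~\ref{prop:substitution} to pass from $d_{\mathscr D}$-closeness to parameter closeness, and (iv) conclude via $d_{\mathscr D}(\mu^\star,\mu_n)\leqslant W_1(\mu^\star,\mu_n)\to 0$ almost surely. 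One small correction: you write $0\leqslant\varepsilon_{\text{estim}}$, but as the paper itself notes just before the lemma, $\varepsilon_{\text{estim}}$ can be negative; what your Lipschitz argument actually yields is $|\varepsilon_{\text{estim}}|\leqslant L\sup_{\theta_n\in\hat\Theta_n}d(\theta_n,\bar\Theta)$, which is the bound you need and which the paper also uses in its final line.
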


Going further with the analysis of \eqref{eq:asymptotic_approximation}, the sum $\varepsilon_{\text{estim}} + \varepsilon_{\text{optim}}$ is bounded as follows:
\begin{align*}
    \varepsilon_{\text{estim}} + \varepsilon_{\text{optim}} &\leqslant \underset{{\theta}_n \in \hat{\Theta}_n}{\sup} \big[ d_{\text{Lip}_1}(\mu^\star, \mu_{{\theta}_n}) - d_{\text{Lip}_1}(\mu^\star, \mu_{\bar{\theta}_n}) \big] + T_{\mathscr{P}}(\text{Lip}_1, \mathscr{D}) \\
    & \quad \mbox{(by inequality \eqref{eq:eps_optim_inequality})} \\
    &\leqslant \underset{\theta_n \in \hat{\Theta}_n}{\sup} \big[ d_{\text{Lip}_1}(\mu^\star, \mu_{\theta_n}) - \underset{\theta \in \Theta}{\inf}\ d_{\mathscr{D}}(\mu^\star, \mu_{{\theta}}) \big] + T_{\mathscr{P}}(\text{Lip}_1, \mathscr{D}).
\end{align*}
Hence,
\begin{align}
     &\varepsilon_{\text{estim}} + \varepsilon_{\text{optim}} \nonumber\\
     &\quad \leqslant \underset{\theta_n \in \hat{\Theta}_n}{\sup} \big[ d_{\text{Lip}_1}(\mu^\star, \mu_{\theta_n}) - d_{\mathscr{D}}(\mu^\star, \mu_{\theta_n}) +
     d_{\mathscr{D}}(\mu^\star, \mu_{\theta_n}) - \underset{\theta \in \Theta}{\inf}\ d_{\mathscr{D}}(\mu^\star, \mu_{{\theta}}) \big] + T_{\mathscr{P}}(\text{Lip}_1, \mathscr{D}) \nonumber \\
    & \quad \leqslant \underset{\theta_n \in \hat{\Theta}_n}{\sup} \big[ d_{\text{Lip}_1}(\mu^\star, \mu_{\theta_n}) - d_{\mathscr{D}}(\mu^\star, \mu_{\theta_n}) \big] + 2 d_{\mathscr{D}}(\mu^\star, \mu_n) + T_{\mathscr{P}}(\text{Lip}_1, \mathscr{D}) \nonumber \\
    & \qquad \mbox{(upon noting that inequality \eqref{eq:epsilon_optim_bound} is also valid for any $\theta_n \in \hat{\Theta}_n$)} \nonumber \\
    & \quad \leqslant 2 T_{\mathscr{P}}(\text{Lip}_1, \mathscr{D}) + 2 d_{\mathscr{D}}(\mu^\star, \mu_n) \label{eq:eps_estim_and_esp_optim}.
\end{align}
The above bound is a function of both the generator and the discriminator. The term $T_{\mathscr{P}}(\text{Lip}_1, \mathscr{D})$ is increasing when the capacity of the generator is increasing. The discriminator, however, plays a more ambivalent role, as already pointed out by \citet{zhang2018discrimination}. On the one hand, if the discriminator's capacity decreases,  the gap between $d_{\mathscr{D}}$ and $d_{\text{Lip}_1}$ gets bigger and $T_{\mathscr{P}}(\text{Lip}_1, \mathscr{D})$ increases. On the other hand, discriminators with bigger capacities ought to increase the contribution $d_{\mathscr{D}}(\mu^\star, \mu_n)$.
In order to bound $d_{\mathscr{D}}(\mu^\star, \mu_n)$, Proposition \ref{prop:generalization_bounds_for_dD} below extends \citet[][Theorem 3.1]{zhang2018discrimination}, in the sense that it does not require the set of discriminative functions nor the space $E$ to be bounded. Recall that, for $\gamma>0$, $\mu^\star$ is said to be $\gamma$ sub-Gaussian \citep{jin2019short} if
    \begin{equation*}
        \forall v \in \mathds{R}^d, \ \mathds{E} e^{v \cdot (T -\mathds{E} T)} \leqslant e^{\frac{\gamma^2\|v\|^2}{2}},
    \end{equation*}
where $T$ is a random vector with probability distribution $\mu^{\star}$ and $\cdot$ denotes the dot product in $\mathds{R}^D$.
\begin{proposition}\label{prop:generalization_bounds_for_dD}
Assume that Assumption \ref{ass:compactness} is satisfied, let $\eta \in(0,1)$, and let $\mathscr{D}$ be a discriminator of the form \eqref{eq:def_discriminators}.
\begin{enumerate}[$(i)$]
    \item If $\mu^\star$ has compact support with diameter $B$, then there exists a constant $c_1>0$ such that, with probability at least $1-\eta$,
    \begin{equation*}
        d_{\mathscr{D}}(\mu^\star, \mu_n) \leqslant \frac{c_1}{\sqrt{n}} + B\sqrt{ \frac{\log(1/\eta)}{2n}}.
    \end{equation*}
    \item More generally, if $\mu^\star$ is $\gamma$ sub-Gaussian, then there exists a constant $c_2>0$ such that, with probability at least $1-\eta$,
\begin{equation*}
    d_{\mathscr{D}}(\mu^\star, \mu_n) \leqslant \frac{c_2}{\sqrt{n}} + 8 \gamma \sqrt{eD} \sqrt{ \frac{\log(1/\eta)}{n}}.
\end{equation*}
\end{enumerate}
\end{proposition}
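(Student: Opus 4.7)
The plan is to split $d_{\mathscr D}(\mu^\star, \mu_n)$ into its expectation $\mathbb{E}\,d_{\mathscr D}(\mu^\star, \mu_n)$ and a deviation from the mean, and treat each by standard empirical-process tools; the concentration step supplies the explicit constants $B$ and $8\gamma\sqrt{eD}$ appearing in the statement, while the mean contributes the $c_i/\sqrt{n}$ term.

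For the mean, I would use symmetrization:
\[
\mathbb{E}\,d_{\mathscr D}(\mu^\star, \mu_n) = \mathbb{E}\sup_{\alpha\in\Lambda}\Big|\mathbb{E}_{\mu^\star}D_\alpha - \tfrac{1}{n}\sum_{i=1}^n D_\alpha(X_i)\Big| \leq 2\,\mathbb{E}\sup_{\alpha\in\Lambda}\Big|\tfrac{1}{n}\sum_{i=1}^n \varepsilon_i D_\alpha(X_i)\Big|,
\]
with independent Rademacher signs $\varepsilon_i$, and then control the Rademacher complexity of $\mathscr{D}$. Under Assumption~\ref{ass:compactness}, $\mathscr{D}$ is a compactly parameterized family of $1$-Lipschitz functions (Lemma~\ref{lem:uniformly_lipschitz_neural_nets}), and $\alpha\mapsto D_\alpha(x)$ is Lipschitz in $\alpha$ with a constant polynomial in $\|x\|$. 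Covering $\Lambda$ at scale $\varepsilon$ by $O(\varepsilon^{-Q})$ balls and applying Dudley's chaining yields a bound of order $c/\sqrt{n}$, where $c$ depends only on the typical size of $\|X_i\|$: in case (i) this size is $O(B)$, absorbed into $c_1$; in case (ii), $\mathbb{E}\|X_i-\mathbb{E}X_i\|^2 \leq \gamma^2 D$ follows from the sub-Gaussian assumption (via Jin, 2019), and the resulting $\gamma\sqrt{D}$ factor is absorbed into $c_2$.

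For the deviation in case (i), I would apply McDiarmid's bounded-differences inequality to $\Phi(x_1,\ldots,x_n) := \sup_{\alpha\in\Lambda}|\mathbb{E}_{\mu^\star}D_\alpha - \tfrac{1}{n}\sum_{i}D_\alpha(x_i)|$. Since every $D_\alpha$ is $1$-Lipschitz and $\mu^\star$ is supported on a set of diameter $B$, each $D_\alpha$ has range at most $B$ on the support, so swapping one $x_i$ for another element of the support changes $\Phi$ by at most $B/n$; McDiarmid then gives deviation $B\sqrt{\log(1/\eta)/(2n)}$ with probability at least $1-\eta$, and combining with the mean bound closes part~(i).

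Case (ii) is the main obstacle: bounded differences fails because $D_\alpha(X_i)$ is no longer bounded. The function $\Phi$ is nonetheless $(1/n)$-Lipschitz in each $x_i$ for the Euclidean norm, and each centered variable $D_\alpha(X_i) - \mathbb{E}\,D_\alpha(X_i)$ is dominated in absolute value by $\|X_i - \mathbb{E}X_i\|$, which has sub-Gaussian tail with $L^2$-norm at most $\gamma\sqrt{D}$. Either a direct sub-Gaussian/Lipschitz concentration argument, or (equivalently) a truncation at radius $R\sim \gamma\sqrt{D\log n}$ followed by McDiarmid on the truncated variables with the sub-Gaussian tail controlling the truncation error, produces a deviation tail of the form $\exp(-nt^2/(c\gamma^2 eD))$; inverting at level $\eta$ yields the announced $8\gamma\sqrt{eD}\sqrt{\log(1/\eta)/n}$ term, and summing with the mean bound concludes part~(ii). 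The delicate point is to track the $\sqrt{D}$ factor cleanly, from $\mathrm{Var}(D_\alpha(X_i)) \leq \mathbb{E}\|X_i-\mathbb{E}X_i\|^2 \leq \gamma^2 D$, without letting either the chaining step or the truncation introduce a spurious $\log n$ or extra dimension dependence.
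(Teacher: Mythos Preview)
Your proposal is correct and follows essentially the same architecture as the paper: split $d_{\mathscr D}(\mu^\star,\mu_n)$ into its expectation plus a deviation, bound the deviation by McDiarmid in case~$(i)$ and by a Lipschitz/sub-Gaussian concentration inequality in case~$(ii)$, and bound the expectation by chaining over the compact parameter set $\Lambda$.

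Two minor technical differences are worth pointing out. First, for the expectation the paper does \emph{not} symmetrize: it works directly with the centered process $Y_\alpha = \mathbb{E}_{\mu^\star}D_\alpha - \tfrac{1}{n}\sum_i D_\alpha(X_i)$, shows that $Y_\alpha - Y_{\alpha'}$ is sub-Gaussian with parameter proportional to $\|\alpha-\alpha'\|/\sqrt{n}$ (using the Lipschitz dependence of $D_\alpha(x)$ on $\alpha$ and the sub-Gaussianity of $\|X_i\|$), and then applies Dudley's entropy integral for sub-Gaussian processes. Your symmetrization-then-chaining route gives the same $c/\sqrt{n}$ conclusion, so this is a matter of taste. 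Second, for the deviation in case~$(ii)$ the paper invokes a ready-made result (Kontorovich, 2014, Theorem~1, combined with Vershynin's sub-Gaussian norm estimates) for Lipschitz functions of independent sub-Gaussian vectors, applied to $g$ which is $1/n$-Lipschitz in the $\ell^1$ norm on $E^n$; this is exactly your ``direct sub-Gaussian/Lipschitz concentration argument'' and avoids the truncation route entirely, so no spurious $\log n$ appears. Your concern about tracking the $\sqrt{D}$ factor is well placed: it enters once through the sub-Gaussian norm of $\|X_i\|$ and produces the $8\gamma\sqrt{eD}$ constant, with no further dimension dependence in the deviation term.
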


The result of Proposition \ref{prop:generalization_bounds_for_dD} has to be compared with convergence rates of the Wasserstein distance. According to \citet[][Theorem 1]{fournier2015rate}, when the dimension $D$ of $E$ is such that $D>2$, if $\mu^\star$ has a second-order moment, then there exists a constant $c$ such that
\begin{equation*}
    0 \leqslant \mathds{E} d_{\text{Lip}_1}(\mu^\star,\mu_n) \leqslant \frac{c}{n^{1/D}}.
\end{equation*}
Thus, when the space $E$ is of high dimension (e.g., in image generation tasks), under the conditions of Proposition \ref{prop:generalization_bounds_for_dD}, the pseudometric $d_{\mathscr{D}}$ provides much faster rates of convergence for the empirical measure. However, one has to keep in mind that both constants $c_1$ and $c_2$ grow in $O(qQ^{3/2}(D^{1/2}+q))$.

Our Theorem \ref{th:approx_properties} states the existence of a discriminator such that $\varepsilon_{\text{optim}}$ can be arbitrarily small. It is therefore reasonable, in view of inequality \eqref{eq:eps_estim_and_esp_optim}, to expect that the sum $\varepsilon_{\text{estim}}+\varepsilon_{\text{optim}}$ can also be arbitrarily small, at least in an asymptotic sense. This is encapsulated in Theorem \ref{theorem:asymptotic_approximation} below.
\begin{theorem}\label{theorem:asymptotic_approximation}
    Assume that Assumption \ref{ass:compactness} is satisfied, and let $\eta \in (0,1)$.
\begin{enumerate}[$(i)$]
    \item If $\mu^\star$ has compact support with diameter $B$, then, for all $\varepsilon >0$, there exists a discriminator $\mathscr{D}$ of the form \eqref{eq:def_discriminators} and a constant $c_1>0$ (function of $\varepsilon$) such that, with probability at least $1-\eta$,
    \begin{equation*}
        0 \leqslant \varepsilon_{\emph{estim}} + \varepsilon_{\emph{optim}} \leqslant 2\varepsilon + \frac{2c_1}{\sqrt{n}} + 2B \sqrt{ \frac{\log(1/\eta)}{2n}}.
    \end{equation*}
    \item More generally, if $\mu^\star$ is $\gamma$ sub-Gaussian, then, for all $\varepsilon >0$, there exists a discriminator $\mathscr{D}$ of the form \eqref{eq:def_discriminators} and a constant $c_2>0$ (function of $\varepsilon$) such that, with probability at least $1-\eta$,
\begin{equation*}
    0 \leqslant \varepsilon_{\emph{estim}} + \varepsilon_{\emph{optim}} \leqslant 2\varepsilon + \frac{2c_2}{\sqrt{n}} + 16 \gamma \sqrt{eD} \sqrt{\frac{\log(1/\eta)}{n}}.
\end{equation*}
\end{enumerate}
\end{theorem}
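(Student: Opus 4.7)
The plan is to chain together three results that have already been established in the paper: the deterministic decomposition \eqref{eq:eps_estim_and_esp_optim}, the approximation theorem (Theorem \ref{th:approx_properties}), and the concentration inequality (Proposition \ref{prop:generalization_bounds_for_dD}). The whole argument is essentially a matter of making the right choice of discriminator and then applying a high-probability bound on the remaining sample term.

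More precisely, I would start from
\begin{equation*}
\varepsilon_{\text{estim}} + \varepsilon_{\text{optim}} \leqslant 2\,T_{\mathscr{P}}(\text{Lip}_1, \mathscr{D}) + 2\,d_{\mathscr{D}}(\mu^\star, \mu_n),
\end{equation*}
which was already derived in \eqref{eq:eps_estim_and_esp_optim}. To handle the first term, fix $\varepsilon > 0$. By Theorem \ref{th:approx_properties}, there exists an absolute constant $c > 0$ and a discriminator $\mathscr{D}$ of the form \eqref{eq:def_discriminators} such that $T_{\mathscr{P}}(\text{Lip}_1, \mathscr{D}) \leqslant c\varepsilon'$ for any prescribed $\varepsilon' > 0$. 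Choosing $\varepsilon' = \varepsilon / c$ (or equivalently rescaling $\varepsilon$) produces a specific $\mathscr{D}$ for which $T_{\mathscr{P}}(\text{Lip}_1, \mathscr{D}) \leqslant \varepsilon$. I would then freeze this $\mathscr{D}$ for the rest of the proof.

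For the sample term, I would apply Proposition \ref{prop:generalization_bounds_for_dD} to the frozen $\mathscr{D}$. In case $(i)$ where $\mu^\star$ is compactly supported with diameter $B$, the proposition yields a constant $c_1 > 0$ (depending on the chosen $\mathscr{D}$ but not on $\varepsilon$ as an explicit argument) such that, with probability at least $1-\eta$,
\begin{equation*}
d_{\mathscr{D}}(\mu^\star, \mu_n) \leqslant \frac{c_1}{\sqrt{n}} + B\sqrt{\frac{\log(1/\eta)}{2n}}.
\end{equation*}
Substituting both bounds into the decomposition gives the announced inequality. Case $(ii)$ is identical, invoking instead the sub-Gaussian part of Proposition \ref{prop:generalization_bounds_for_dD} with its constant $c_2$ and the factor $8\gamma\sqrt{eD}$.

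There is essentially no significant obstacle left once the three prior results are in hand; the only thing that deserves attention is the order in which quantifiers are chosen. Because the discriminator $\mathscr{D}$ must be selected before applying the concentration bound, the constants $c_1$ and $c_2$ inherit a dependence on the architecture of $\mathscr{D}$ (hence ultimately on $\varepsilon$ through the choice prescribed by Theorem \ref{th:approx_properties}). The phrase ``independent of $\varepsilon$'' in the statement should therefore be read as saying that $c_1,c_2$ are not explicit functions of $\varepsilon$ in the bound, matching the way they appear in Proposition \ref{prop:generalization_bounds_for_dD}; with this understanding, the chained inequality is precisely the asserted bound.
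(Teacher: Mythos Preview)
Your proposal is correct and follows exactly the paper's own proof: choose $\mathscr{D}$ via Theorem \ref{th:approx_properties} so that $T_{\mathscr{P}}(\text{Lip}_1,\mathscr{D})\leqslant\varepsilon$, apply Proposition \ref{prop:generalization_bounds_for_dD} to bound $d_{\mathscr{D}}(\mu^\star,\mu_n)$, and plug both into \eqref{eq:eps_estim_and_esp_optim}. Your remark on the hidden dependence of $c_1,c_2$ on the architecture of $\mathscr{D}$ (hence on $\varepsilon$) is a fair observation that the paper does not comment on.
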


Theorem \ref{theorem:asymptotic_approximation} states that, asymptotically, the optimal parameters in $\hat{\Theta}_n$ behave properly. A caveat is that the definition of $\varepsilon_{\text{estim}}$ uses $\hat{\Theta}_n$. However, in practice, one never has access to $\hat{\theta}_n$, but rather to an approximation of this quantity obtained by gradient descent algorithms. Thus, in line with Definition \ref{def:substitution}, we introduce the concept of empirical substitution:
\begin{definition}\label{def:empirical_substitution}
    Let $\varepsilon >0$ and $\eta \in (0,1)$. We say that $d_{\emph{Lip}_1}$ can be empirically $\varepsilon$-substituted by $d_{\mathscr{D}}$ if there exists $\delta >0$ such that, for all $n$ large enough, with probability at least $1-\eta$,
    \begin{equation}\label{asymptotic_equivalence_ideally}
        \mathscr{M}_{d_{\mathscr{D}}}(\mu_n, \delta) \subseteq \mathscr{M}_{d_{\emph{Lip}_1}}(\mu^\star, \varepsilon).
    \end{equation}
\end{definition}

The rationale behind this definition is that if \eqref{asymptotic_equivalence_ideally} is satisfied, then by minimizing the IPM $d_{\mathscr{D}}$ close to optimality in \eqref{empirical_approximated_wgans}, one can be guaranteed to be also close to optimality in \eqref{eq:theoretical_wgans} with high probability. We stress that Definition \ref{def:empirical_substitution} is the empirical counterpart of Definition \ref{def:substitution}.
\begin{proposition}\label{prop:asymptotic_optimization_properties}
    Assume that Assumption \ref{ass:compactness} is satisfied and that $\mu^\star$ is sub-Gaussian. Let $\varepsilon >0$. If $T_{\mathscr{P}}(\emph{Lip}_1, \mathscr{D}) \leqslant \varepsilon$, then $d_{\emph{Lip}_1}$ can be empirically $(\varepsilon + \delta)$-substituted by $d_{\mathscr{D}}$ for all $\delta>0$.
\end{proposition}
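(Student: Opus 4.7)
The plan is to mimic the proof of Lemma \ref{lem:equivalence_properties_both_distances} but replacing $\mu^\star$ with the empirical measure $\mu_n$ wherever the minimization takes place, and then to absorb the sampling discrepancy via the concentration bound of Proposition \ref{prop:generalization_bounds_for_dD}. Fix $\eta \in (0,1)$ and $\delta>0$. Given $\delta'>0$ to be chosen later, I consider an arbitrary $\theta \in \mathscr{M}_{d_{\mathscr{D}}}(\mu_n, \delta')$, and my goal is to show $\theta \in \mathscr{M}_{d_{\text{Lip}_1}}(\mu^\star, \varepsilon + \delta)$ with probability at least $1-\eta$ for all $n$ large enough.

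The key chain of inequalities combines three ingredients. First, the hypothesis $T_{\mathscr{P}}(\text{Lip}_1, \mathscr{D}) \leqslant \varepsilon$ together with the definition \eqref{eq:bounding_eps_optim} yields $d_{\text{Lip}_1}(\mu^\star, \mu_\theta) \leqslant d_{\mathscr{D}}(\mu^\star, \mu_\theta) + \varepsilon$. Second, the triangle inequality for the pseudometric $d_{\mathscr{D}}$ and the $\delta'$-optimality of $\theta$ give
\begin{equation*}
d_{\mathscr{D}}(\mu^\star, \mu_\theta) \leqslant d_{\mathscr{D}}(\mu^\star, \mu_n) + d_{\mathscr{D}}(\mu_n, \mu_\theta) \leqslant d_{\mathscr{D}}(\mu^\star, \mu_n) + \underset{\theta' \in \Theta}{\inf}\ d_{\mathscr{D}}(\mu_n, \mu_{\theta'}) + \delta'.
\end{equation*}
Third, I bound the empirical infimum by a theoretical one by picking any $\theta^\star \in \Theta^\star$ (non empty by Corollary \ref{cor:min_reached}) and using $\mathscr{D} \subseteq \text{Lip}_1$ (Lemma \ref{lem:uniformly_lipschitz_neural_nets}):
\begin{equation*}
\underset{\theta' \in \Theta}{\inf}\ d_{\mathscr{D}}(\mu_n, \mu_{\theta'}) \leqslant d_{\mathscr{D}}(\mu_n, \mu_{\theta^\star}) \leqslant d_{\mathscr{D}}(\mu^\star, \mu_n) + d_{\text{Lip}_1}(\mu^\star, \mu_{\theta^\star}) = d_{\mathscr{D}}(\mu^\star, \mu_n) + \underset{\theta' \in \Theta}{\inf}\ d_{\text{Lip}_1}(\mu^\star, \mu_{\theta'}).
\end{equation*}
Chaining the three bounds, I get
\begin{equation*}
d_{\text{Lip}_1}(\mu^\star, \mu_\theta) - \underset{\theta' \in \Theta}{\inf}\ d_{\text{Lip}_1}(\mu^\star, \mu_{\theta'}) \leqslant 2 d_{\mathscr{D}}(\mu^\star, \mu_n) + \delta' + \varepsilon.
\end{equation*}

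To finish, I apply Proposition \ref{prop:generalization_bounds_for_dD}$(ii)$: since $\mu^\star$ is sub-Gaussian, with probability at least $1-\eta$ we have $d_{\mathscr{D}}(\mu^\star, \mu_n) \leqslant c_2/\sqrt n + 8\gamma\sqrt{eD}\sqrt{\log(1/\eta)/n}$, which tends to $0$ as $n \to \infty$. Setting $\delta' = \delta/2$ and taking $n$ large enough so that $2 d_{\mathscr{D}}(\mu^\star, \mu_n) \leqslant \delta/2$ on that event, the displayed bound becomes $\varepsilon + \delta$, which is exactly the required inclusion $\mathscr{M}_{d_{\mathscr{D}}}(\mu_n, \delta/2) \subseteq \mathscr{M}_{d_{\text{Lip}_1}}(\mu^\star, \varepsilon + \delta)$.

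The argument involves no real obstacle: it is a routine combination of the telescoping technique used for Lemma \ref{lem:equivalence_properties_both_distances} with the uniform deviation estimate of Proposition \ref{prop:generalization_bounds_for_dD}. The only subtle point is the extra $d_{\mathscr{D}}(\mu^\star, \mu_n)$ that appears twice when one relates $\inf_{\theta'} d_{\mathscr{D}}(\mu_n, \mu_{\theta'})$ back to $\inf_{\theta'} d_{\text{Lip}_1}(\mu^\star, \mu_{\theta'})$; this is precisely why the sub-Gaussian hypothesis on $\mu^\star$ is needed to ensure the sampling error is of order $1/\sqrt{n}$ uniformly in the dimension, and thus negligible asymptotically with high probability.
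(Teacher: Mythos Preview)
Your proof is correct and follows essentially the same approach as the paper: both arguments combine the hypothesis $T_{\mathscr{P}}(\text{Lip}_1,\mathscr{D})\leqslant\varepsilon$, two applications of the triangle inequality for $d_{\mathscr{D}}$, and Proposition~\ref{prop:generalization_bounds_for_dD}$(ii)$ to arrive at the key bound
\[
d_{\text{Lip}_1}(\mu^\star,\mu_\theta)-\inf_{\theta'\in\Theta}d_{\text{Lip}_1}(\mu^\star,\mu_{\theta'})\leqslant \varepsilon + 2d_{\mathscr{D}}(\mu^\star,\mu_n)+\delta/2.
\]
The only cosmetic difference is that the paper relates $\inf_{\theta'}d_{\mathscr{D}}(\mu_n,\mu_{\theta'})$ to $\inf_{\theta'}d_{\mathscr{D}}(\mu^\star,\mu_{\theta'})$ via a uniform sup bound and then invokes inequality~\eqref{treize}, whereas you plug in a specific $\theta^\star\in\Theta^\star$ and use $d_{\mathscr{D}}\leqslant d_{\text{Lip}_1}$ directly; the two routes are interchangeable.
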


This proposition is the empirical counterpart of Lemma \ref{lem:equivalence_properties_both_distances}. It underlines the fact that by minimizing the pseudometric $d_{\mathscr{D}}$ between the empirical measure $\mu_n$ and the set of generative distributions $\mathscr{P}$ close to optimality, one can control the loss in performance under the metric $d_{\text{Lip}_1}$.

Let us finally mention that it is also possible to provide asymptotic results on the sequences of parameters $(\hat{\theta}_n)$, keeping in mind that $\hat{\Theta}_n$ and $\bar{\Theta}$ are not necessarily reduced to singletons.
\begin{lemma}\label{lem:asymptotic_convergence_parameters}
    Assume that Assumption \ref{ass:compactness} is satisfied. Let $(\hat{\theta}_n)$ be a sequence of optimal parameters that converges almost surely to $z \in \Theta$. Then $z \in \bar{\Theta}$ almost surely.
\end{lemma}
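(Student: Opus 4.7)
The plan is to show that the limit $z$ inherits optimality from the $\hat\theta_n$'s by passing to the limit in the defining inequality $d_{\mathscr{D}}(\mu_n,\mu_{\hat\theta_n}) \leqslant d_{\mathscr{D}}(\mu_n,\mu_\theta)$ for all $\theta \in \Theta$. Two ingredients are needed: continuity of $\xi_{\mathscr{D}}$ in $\theta$ (already proved in Theorem~\ref{th:continuity}) and the uniform-in-$\theta$ closeness of $d_{\mathscr{D}}(\mu_n,\mu_\theta)$ to $d_{\mathscr{D}}(\mu^\star,\mu_\theta)$.

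First, I would exploit the fact that $d_{\mathscr{D}}$ is a pseudometric on $P_1(E)$ (Assumption \ref{ass:compactness} ensures $\mathscr D\subseteq\mathrm{Lip}_1$, hence the triangle inequality holds). This yields the key uniform bound
\begin{equation*}
\sup_{\theta \in \Theta} \bigl| d_{\mathscr{D}}(\mu_n,\mu_\theta) - d_{\mathscr{D}}(\mu^\star,\mu_\theta) \bigr| \leqslant d_{\mathscr{D}}(\mu_n,\mu^\star).
\end{equation*}
Since $\mathscr D\subseteq\mathrm{Lip}_1$, we have $d_{\mathscr{D}}(\mu_n,\mu^\star)\leqslant W_1(\mu_n,\mu^\star)$, and the latter tends to $0$ almost surely because $\mu^\star\in P_1(E)$ (strong law of large numbers combined with Varadarajan's theorem give weak convergence in $P_1(E)$, which is exactly convergence in $W_1$). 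Alternatively, Proposition~\ref{prop:generalization_bounds_for_dD} together with a Borel--Cantelli argument (taking, e.g., $\eta_n = 1/n^2$) gives the same conclusion under the standing sub-Gaussian hypothesis on $\mu^\star$.

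Next, I would combine these ingredients as follows. Fix an arbitrary $\theta \in \Theta$ and work on the almost-sure event on which $\hat\theta_n \to z$ and $d_{\mathscr{D}}(\mu_n,\mu^\star)\to 0$. By the minimality of $\hat\theta_n$,
\begin{equation*}
d_{\mathscr{D}}(\mu_n,\mu_{\hat\theta_n}) \leqslant d_{\mathscr{D}}(\mu_n,\mu_\theta).
\end{equation*}
Using the uniform bound above on both sides, this rewrites as
\begin{equation*}
d_{\mathscr{D}}(\mu^\star,\mu_{\hat\theta_n}) \leqslant d_{\mathscr{D}}(\mu^\star,\mu_\theta) + 2\,d_{\mathscr{D}}(\mu_n,\mu^\star).
\end{equation*}
Since $\xi_{\mathscr D}$ is (Lipschitz) continuous by Theorem~\ref{th:continuity}, $d_{\mathscr{D}}(\mu^\star,\mu_{\hat\theta_n})\to d_{\mathscr{D}}(\mu^\star,\mu_z)$. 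Letting $n\to\infty$ then gives $d_{\mathscr{D}}(\mu^\star,\mu_z) \leqslant d_{\mathscr{D}}(\mu^\star,\mu_\theta)$. As $\theta\in\Theta$ was arbitrary, $z\in\bar\Theta$, concluding the proof.

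The only subtle point is the almost-sure convergence $d_{\mathscr{D}}(\mu_n,\mu^\star)\to 0$: Proposition~\ref{prop:generalization_bounds_for_dD} delivers it in probability at rate $1/\sqrt n$, and either a Borel--Cantelli upgrade or the clean sandwich $d_{\mathscr D}\leqslant W_1$ combined with Varadarajan's theorem provides the almost-sure statement. Everything else is a direct passage to the limit using results already established earlier in the paper.
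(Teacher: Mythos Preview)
Your proof is correct and follows essentially the same approach as the paper: both pass to the limit in the optimality inequality using the continuity of $\xi_{\mathscr D}$ (Theorem~\ref{th:continuity}) together with $d_{\mathscr D}(\mu^\star,\mu_n)\to 0$ almost surely, the latter obtained from weak convergence of $\mu_n$ to $\mu^\star$ in $P_1(E)$. Your route to $d_{\mathscr D}(\mu^\star,\mu_n)\to 0$ via the sandwich $d_{\mathscr D}\leqslant W_1$ is in fact slightly cleaner than the paper's appeal to Proposition~\ref{cor:neural_distance}, since that proposition is stated only on $\mathscr P\cup\{\mu^\star\}$ whereas $\mu_n$ need not lie there.
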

\begin{proof}
    Let the sequence $(\hat{\theta}_n)$ converge almost surely to some $z \in \Theta$. By Theorem \ref{th:continuity}, the function $\Theta \ni \theta \mapsto d_\mathscr{D}(\mu^\star, \mu_\theta)$ is continuous, and therefore, almost surely, $\underset{n \to \infty}{\lim} d_\mathscr{D}(\mu^\star, \mu_{\hat{\theta}_n}) = d_\mathscr{D}(\mu^\star, \mu_z)$. Using inequality \eqref{eq:epsilon_optim_bound}, we see that, almost surely,
    \begin{align*}
        0 \leqslant d_{\mathscr{D}}(\mu^\star, \mu_{z}) - \underset{\theta \in \Theta}{\inf} \ d_{\mathscr{D}}(\mu^\star, \mu_{{\theta}}) &= \underset{n \to \infty}{\lim} \ d_\mathscr{D}(\mu^\star, \mu_{\hat{\theta}_n}) - \underset{\theta \in \Theta}{\inf} \ d_{\mathscr{D}}(\mu^\star, \mu_{{\theta}}) \\
        & \leqslant \underset{n \to \infty}{\liminf} \ 2 d_{\mathscr{D}}(\mu^\star, \mu_n).
    \end{align*}
Using \citet[][Theorem 11.4.1]{dudley_2002} and the strong law of large numbers, we have that the sequence of empirical measures $(\mu_n)$ almost surely converges weakly to $\mu^\star$ in $P_1(E)$. Besides, since $d_{\mathscr{D}}$ metrizes weak convergence in $P_1(E)$ (by Proposition \ref{cor:neural_distance}), we conclude that $z \in \bar{\Theta}$ almost surely.
\end{proof}
\section{Understanding the performance of WGANs} \label{section:trade_off_properties}
In order to better understand the overall performance of the WGANs architecture, it is instructive to decompose the final loss $d_{\text{Lip}_1}(\mu^\star, \mu_{\hat{\theta}_n})$ as in \eqref{eq:anticipating}:
\begin{align}
    d_{\text{Lip}_1}(\mu^\star, \mu_{\hat{\theta}_n}) &\leqslant {\varepsilon_{\text{estim}}}+{\varepsilon_{\text{optim}}}+{\underset{\theta \in \Theta}{\inf} \ d_{\text{Lip}_1}(\mu^\star, \mu_{\theta})} \nonumber \\
    &={\varepsilon_{\text{estim}}}+{\varepsilon_{\text{optim}}}+{\varepsilon_{\text{approx}}} \label{eq:performance_overall_wgans},
\end{align}
where
\begin{enumerate}[$(i)$]
    \item $\varepsilon_{\text{estim}}$ matches up with the use of a data-dependent optimal parameter $\hat{\theta}_n$, based on the training set $X_1, \hdots, X_n$ drawn from $\mu^\star$;
    \item $\varepsilon_{\text{optim}}$ corresponds to the loss in performance when using $d_{\mathscr{D}}$ as training loss instead of $d_{\text{Lip}_1}$ (this term has been thoroughly studied in Section \ref{section:approximation_properties});
    \item and $\varepsilon_{\text{approx}}$ stresses the capacity of the parametric family of generative distributions $\mathscr{P}$ to approach the unknown distribution $\mu^\star$.
\end{enumerate}

Close to our work are the articles by \citet{liang2018well}, \citet{nonparametric2018singh}, and \citet{nonparametric2019wallach}, who study statistical properties of GANs. \citet{liang2018well} and \citet{nonparametric2018singh} exhibit rates of convergence under an IPM-based loss for estimating densities that live in Sobolev spaces, while \citet{nonparametric2019wallach} explore the case of Besov spaces. More recently, \citet{schreuder2021statistical} have stressed the properties of IPM losses defined with smooth functions on a compact set. Remarkably, \citet{liang2018well} discusses bounds for the Kullback-Leibler divergence, the Hellinger divergence, and the Wasserstein distance between $\mu^\star$ and $\mu_{\hat{\theta}_n}$. These bounds are based on a different decomposition of the loss and offer a complementary point of view. We emphasize that, in the present article, no density assumption is made neither on the class of generative distributions $\mathscr{P}$ nor on the target distribution $\mu^\star$. Studying a different facet of the problem, \citet{luise2020generalization} analyze the interplay between the latent distribution and
the complexity of the pushforward map, and how it affects the overall performance. 

\subsection{Synthetic experiments}
Our goal in this subsection is to illustrate \eqref{eq:performance_overall_wgans} by running a set of experiments on synthetic datasets. The true probability measure $\mu^\star$ is assumed to be a mixture of bivariate Gaussian distributions with either 1, 4, or 9 components. This simple setting allows us to control the complexity of $\mu^\star$, and, in turn, to better assess the impact of both the generator's and discriminator's capacities. We use growing classes of generators of the form \eqref{eq:def_generators}, namely  $\{\mathscr{G}_p: p=2, 3, 5, 7\}$, and growing classes of discriminators of the form \eqref{eq:def_discriminators}, namely $\{ \mathscr{D}_q: q =2, 3, 5, 7 \}$. For both the generator and the discriminator, the width of the hidden layers is kept constant equal to $20$.

Two metrics are computed to evaluate the behavior of the different generative models. First, we use the Wasserstein distance between the true distribution (either $\mu^\star$ or its empirical version $\mu_n$) and the generative distribution (either $\mu_{\bar{\theta}}$ or $\mu_{\hat{\theta}_n}$). This distance is calculated by using the Python package by \cite{flamary2017pot}, via finite samples of size 4096 (average over 20 runs). Second, we use the recall metric (the higher, the better), proposed by \citet[][]{kynkaanniemi2019improved}. Roughly, this metric measures ``how much'' of the true distribution (either $\mu^\star$ or $\mu_n$) can be reconstructed by the generative distribution (either $\mu_{\bar{\theta}}$ or $\mu_{\hat{\theta}_n}$). At the implementation level, this score is based on $k$-nearest neighbor nonparametric density estimation. It is computed via finite samples of size 4096 (average over 20 runs).

Our experiments were run in two different settings:
\paragraph{Asymptotic setting:} in this first experiment, we assume that $\mu^\star$ is known from the experimenter (so, there is no dataset). At the end of the optimization scheme, we end up with one $\bar{\theta} \in \bar{\Theta}$. Thus, in this context, the performance of WGANs is captured by
\begin{equation*}
    \underset{\bar{\theta} \in \bar{\Theta}}{\sup} \ d_{\text{Lip}_1}(\mu^\star, \mu_{\bar{\theta}}) = {\varepsilon_{\text{optim}}}+{\varepsilon_{\text{approx}}}.
\end{equation*}
For a fixed discriminator, when increasing the generator's depth $p$, we expect $\varepsilon_{\text{approx}}$ to decrease. Conversely, as discussed in Subsection \ref{sec:approximation_properties}, we anticipate an augmentation of $\varepsilon_{\text{optim}}$, since the discriminator must now differentiate between larger classes of generative distributions. In this case, it is thus difficult to predict how $\sup_{\bar{\theta} \in \bar{\Theta}} \ d_{\text{Lip}_1}(\mu^\star, \mu_{\bar{\theta}})$ behaves when $p$ increases. On the contrary, in accordance with the results of Section \ref{section:approximation_properties}, for a fixed $p$ we expect the performance to increase with a growing $q$ since, with larger discriminators, the pseudometric $d_{\mathscr{D}}$ is more likely to behave similarly to the Wasserstein distance $d_{\text{Lip}_1}$.

These intuitions are validated by Figure \ref{fig:increasing_both_gen_and_disc_asymptotic_setting_emd} and Figure \ref{fig:increasing_both_gen_and_disc_asymptotic_setting_recall} (the bluer, the better). The first one shows an approximation of $\sup_{\bar{\theta} \in \bar{\Theta}} \ d_{\text{Lip}_1}(\mu^\star, \mu_{\bar{\theta}})$ computed over 5 different seeds as a function of $p$ and $q$. The second one depicts the average recall of the estimator $\mu_{\bar{\theta}}$ with respect to $\mu^\star$, as a function of $p$ and $q$, again computed over 5 different seeds. In both figures, we observe that for a fixed $p$, incrementing $q$ leads to better results. On the opposite, for a fixed discriminator's depth $q$, increasing the depth $p$ of the generator seems to deteriorate both scores (Wasserstein distance and recall). This consequently suggests that the term $\varepsilon_{\text{optim}}$ dominates $\varepsilon_{\text{approx}}$.

\begin{figure}[H]
    \subfloat[$\sup_{\bar{\theta} \in \bar{\Theta}} \ d_{\text{Lip}_1}(\mu^\star, \mu_{\bar{\theta}})$, $K=1$.]
    {
        \includegraphics[width=0.31\linewidth]{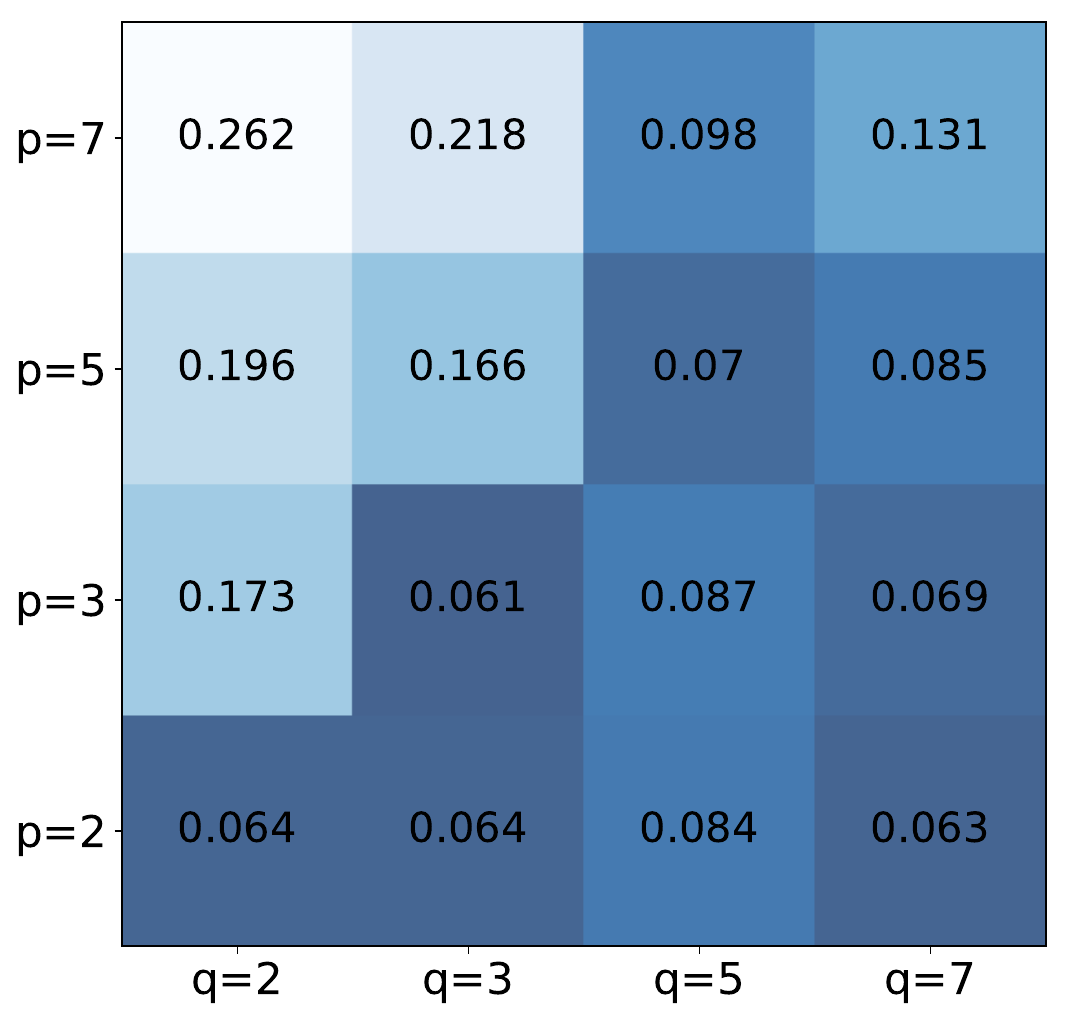}
    }\hfill
    \subfloat[$\sup_{\bar{\theta} \in \bar{\Theta}} \ d_{\text{Lip}_1}(\mu^\star, \mu_{\bar{\theta}})$, $K=9$.]
    {
        \includegraphics[width=0.31\linewidth]{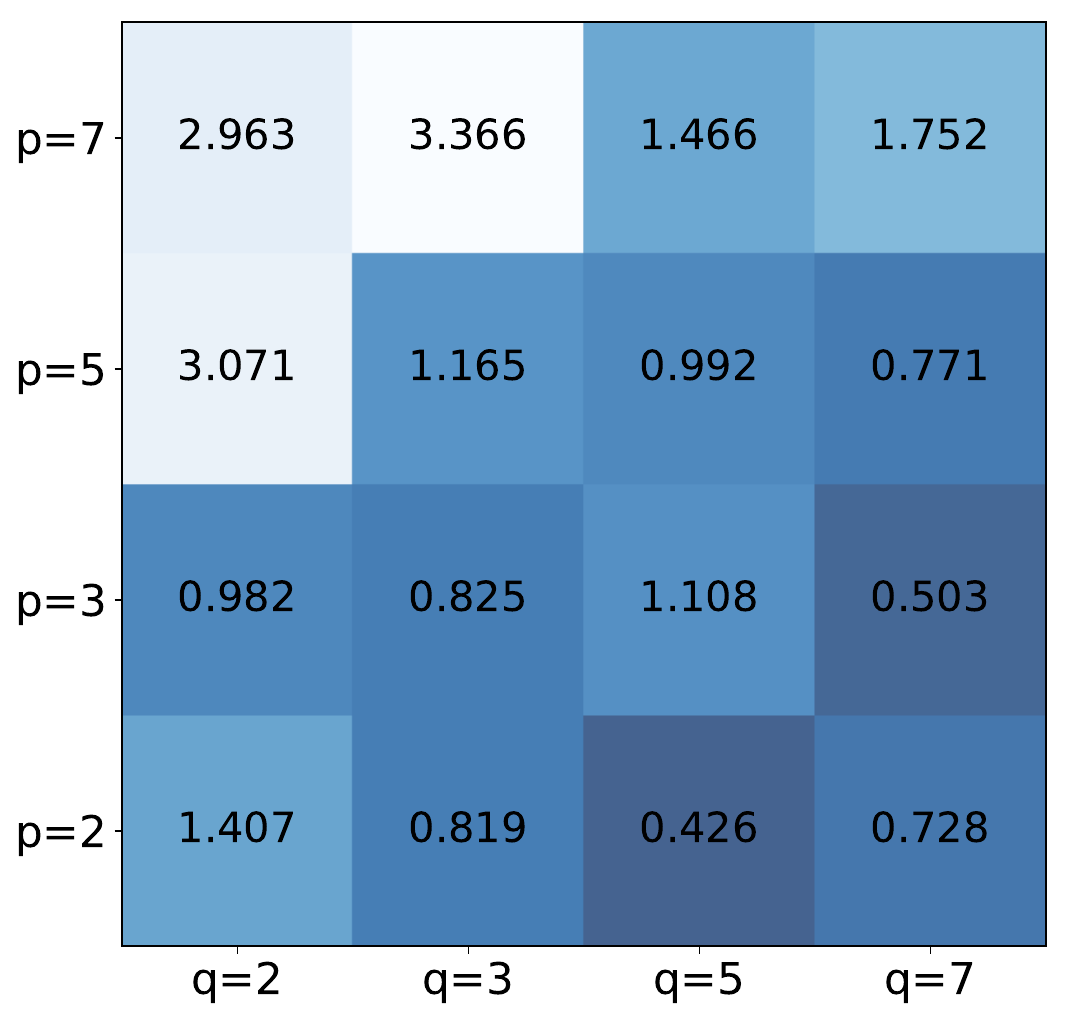}
    }\hfill
    \subfloat[$\sup_{\bar{\theta} \in \bar{\Theta}} \ d_{\text{Lip}_1}(\mu^\star, \mu_{\bar{\theta}})$, $K=25$.]
    {
        \includegraphics[width=0.31\linewidth]{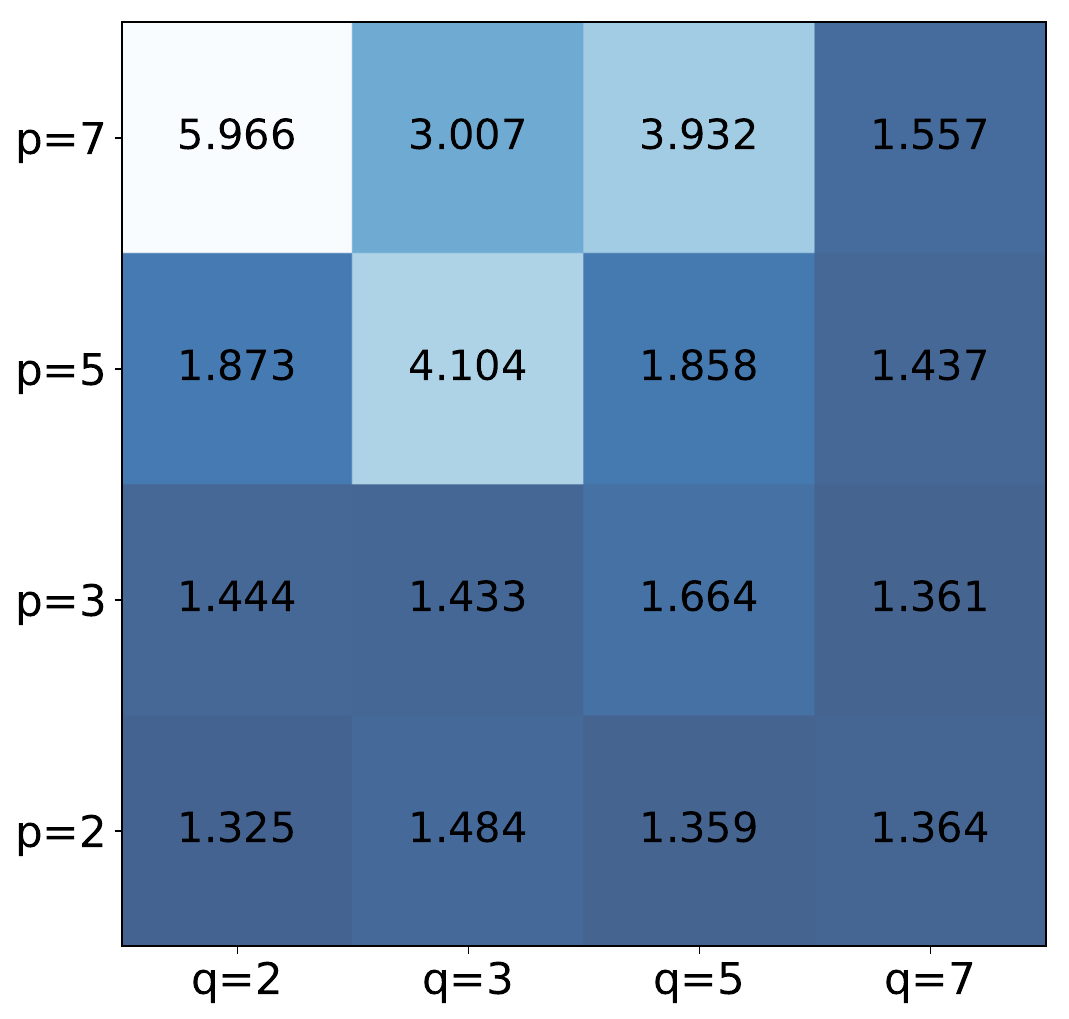}
    }
    \caption{Influence of the generator's depth $p$ and the discriminator's depth $q$ on the maximal Wasserstein distance $\sup_{\bar{\theta} \in \bar{\Theta}} \  d_{\text{Lip}_1}(\mu^\star, \mu_{\bar{\theta}})$.  \label{fig:increasing_both_gen_and_disc_asymptotic_setting_emd}}
\end{figure}
\begin{figure}[H]
    \subfloat[Av.~recall of $\mu_{\bar{\theta}}$ w.r.t.~$\mu^\star$, $K=1$.]
    {
        \includegraphics[width=0.31\linewidth]{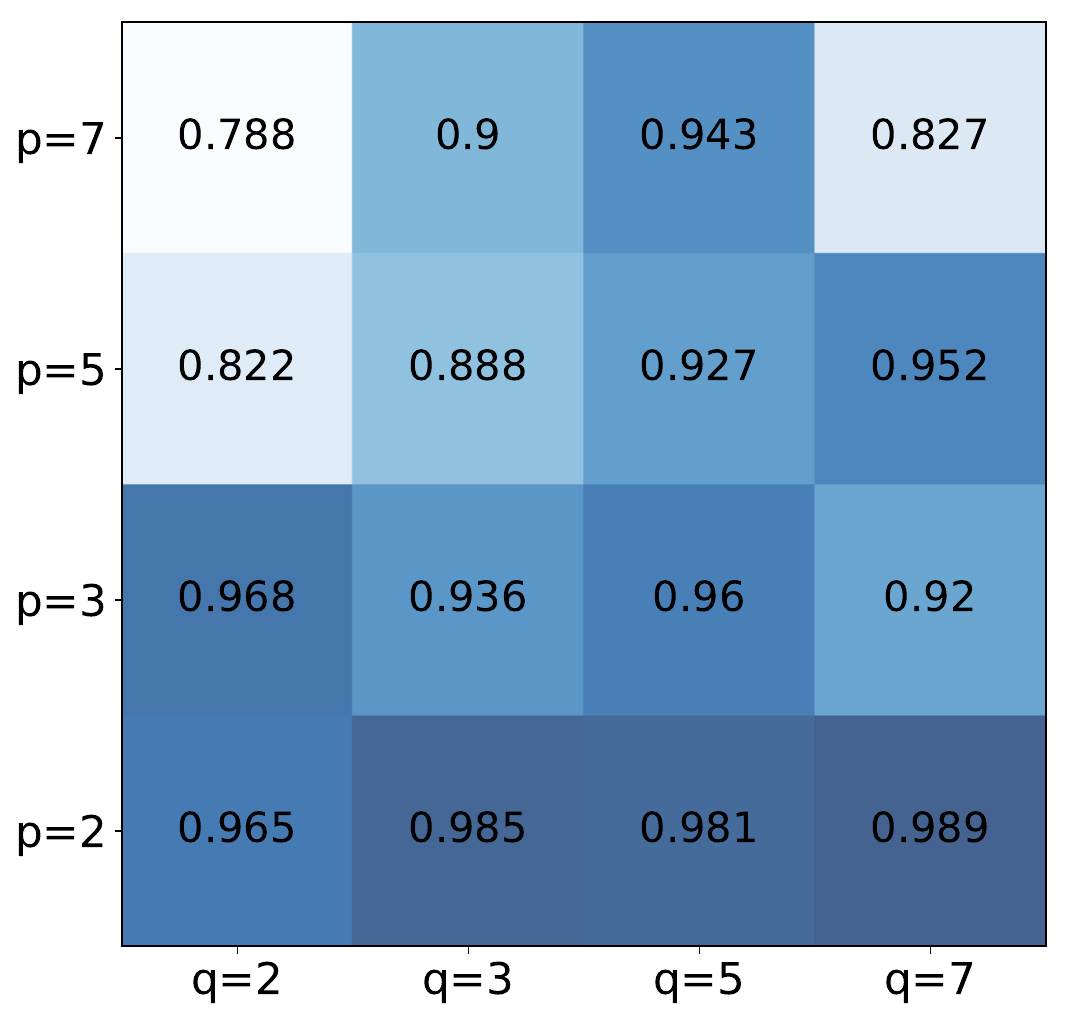}
    }\hfill
    \subfloat[Av.~recall of $\mu_{\bar{\theta}}$ w.r.t.~$\mu^\star$, $K=9$.]
    {
        \includegraphics[width=0.31\linewidth]{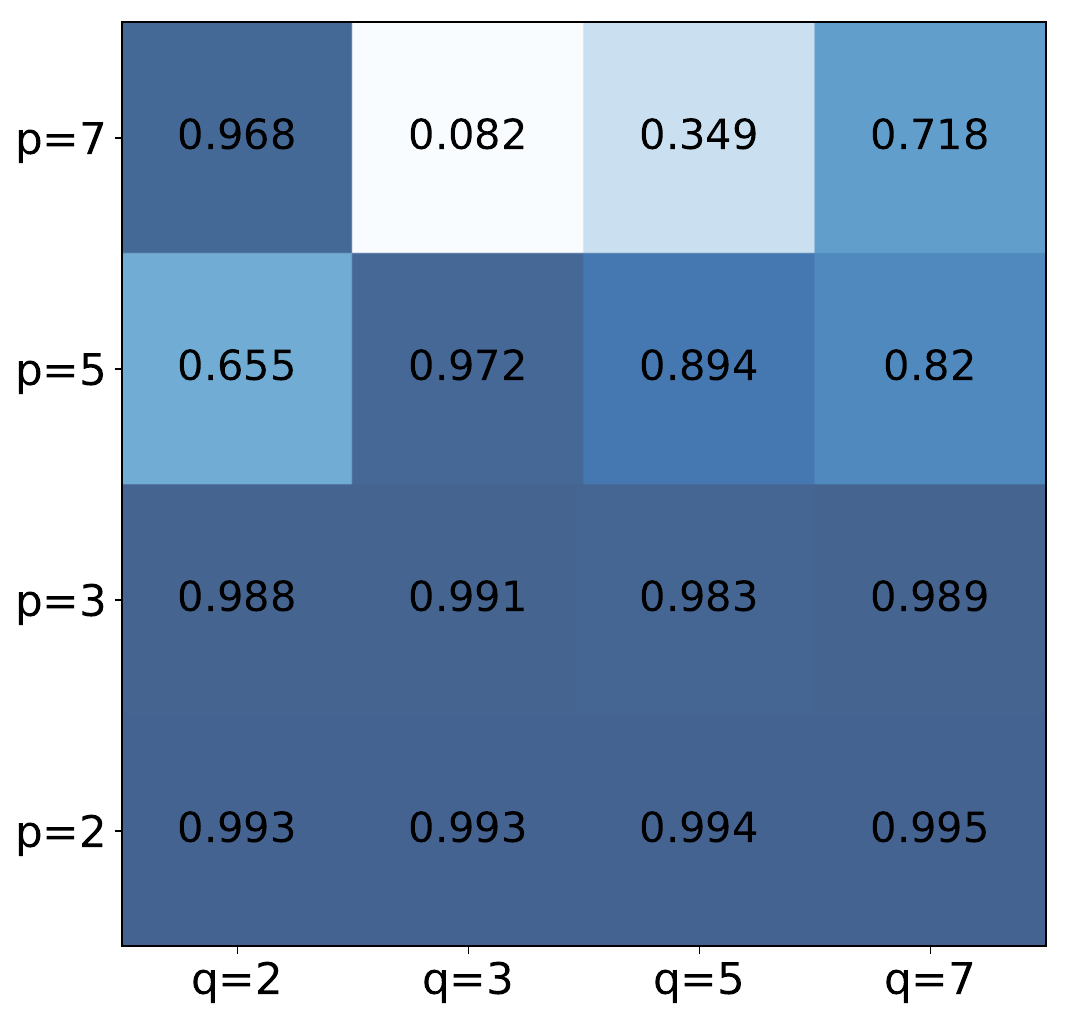}
    }\hfill
    \subfloat[Av.~recall of $\mu_{\bar{\theta}}$ w.r.t.~$\mu^\star$, $K=25$.]
    {
        \includegraphics[width=0.31\linewidth]{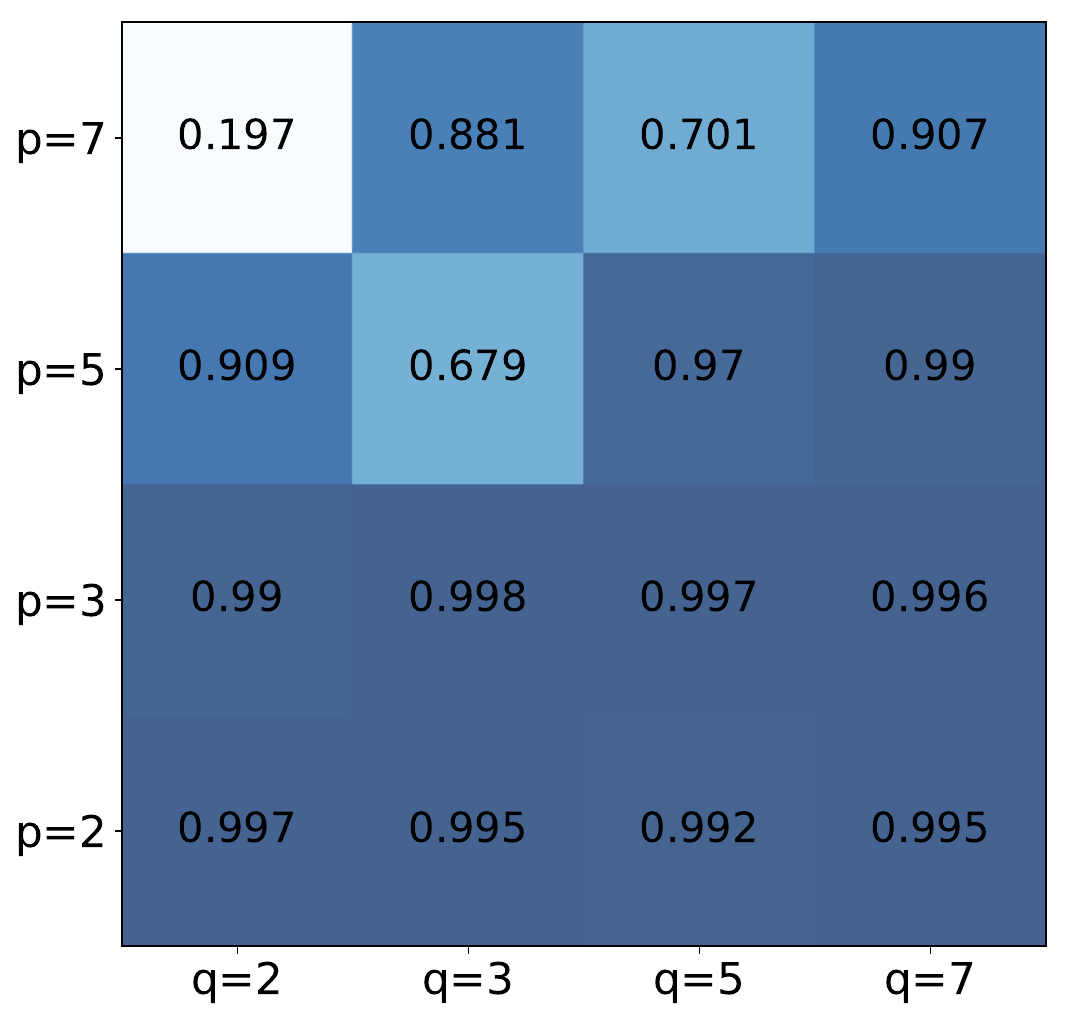}
    }
    \caption{Influence of the generator's depth $p$ and the discriminator's depth $q$ on the average recall of the estimators $\mu_{\bar{\theta}}$ w.r.t.~$\mu^\star$.}
    \label{fig:increasing_both_gen_and_disc_asymptotic_setting_recall}
\end{figure}
\paragraph{Finite-sample setting:} in this second experiment, we consider the more realistic situation where we have at hand finite samples $X_1,\hdots, X_n$ drawn from $\mu^\star$ ($n=5000$).

Recalling that $\sup_{\theta_n \in \hat{\Theta}_n} \  d_{\text{Lip}_1}(\mu^\star, \mu_{{\theta}_n}) \leqslant  {\varepsilon_{\text{estim}}} + {\varepsilon_{\text{optim}}}+{\varepsilon_{\text{approx}}}$, we plot in Figure \ref{fig:increasing_both_gen_and_disc_overall_perf_emd} the maximal Wasserstein distance $\sup_{\theta_n \in \hat{\Theta}_n} \  d_{\text{Lip}_1}(\mu^\star, \mu_{{\theta}_n})$, and in Figure \ref{fig:increasing_both_gen_and_disc_overall_perf_recall} the average recall of the estimators $\mu_{{\theta}_n}$ with respect to $\mu^\star$, as a function of $p$ and $q$. Anticipating the behavior of $\sup_{\theta_n \in \hat{\Theta}_n} \ d_{\text{Lip}_1}(\mu^\star, \mu_{{\theta}_n})$ when increasing the depth $q$ is now more involved. Indeed, according to inequality \eqref{eq:eps_estim_and_esp_optim}, which bounds $\varepsilon_{\text{estim}}+{\varepsilon_{\text{optim}}}$, a larger $\mathscr{D}$ will make $T_{\mathscr{P}}(\text{Lip}_1, \mathscr{D})$ smaller but will, on the opposite, increase $d_{\mathscr{D}}(\mu^\star, \mu_n)$. Figure \ref{fig:increasing_both_gen_and_disc_overall_perf_emd} clearly shows that, for a fixed $p$, the maximal Wasserstein distance seems to be improved when $q$ increases. This suggests that the term $T_{\mathscr{P}}(\text{Lip}_1, \mathscr{D})$ dominates $d_{\mathscr{D}}(\mu^\star, \mu_n)$. Similarly to the asymptotic setting, we also make the observation that bigger $p$ require a higher depth $q$ since larger class of generative distributions are more complex to discriminate.

\begin{figure}[H]
    \subfloat[$\sup_{\theta_n \in \hat{\Theta}_n} d_{\text{Lip}_1}(\mu^\star, \mu_{{\theta}_n})$, $K=1$.]
    {
        \includegraphics[width=0.31\linewidth]{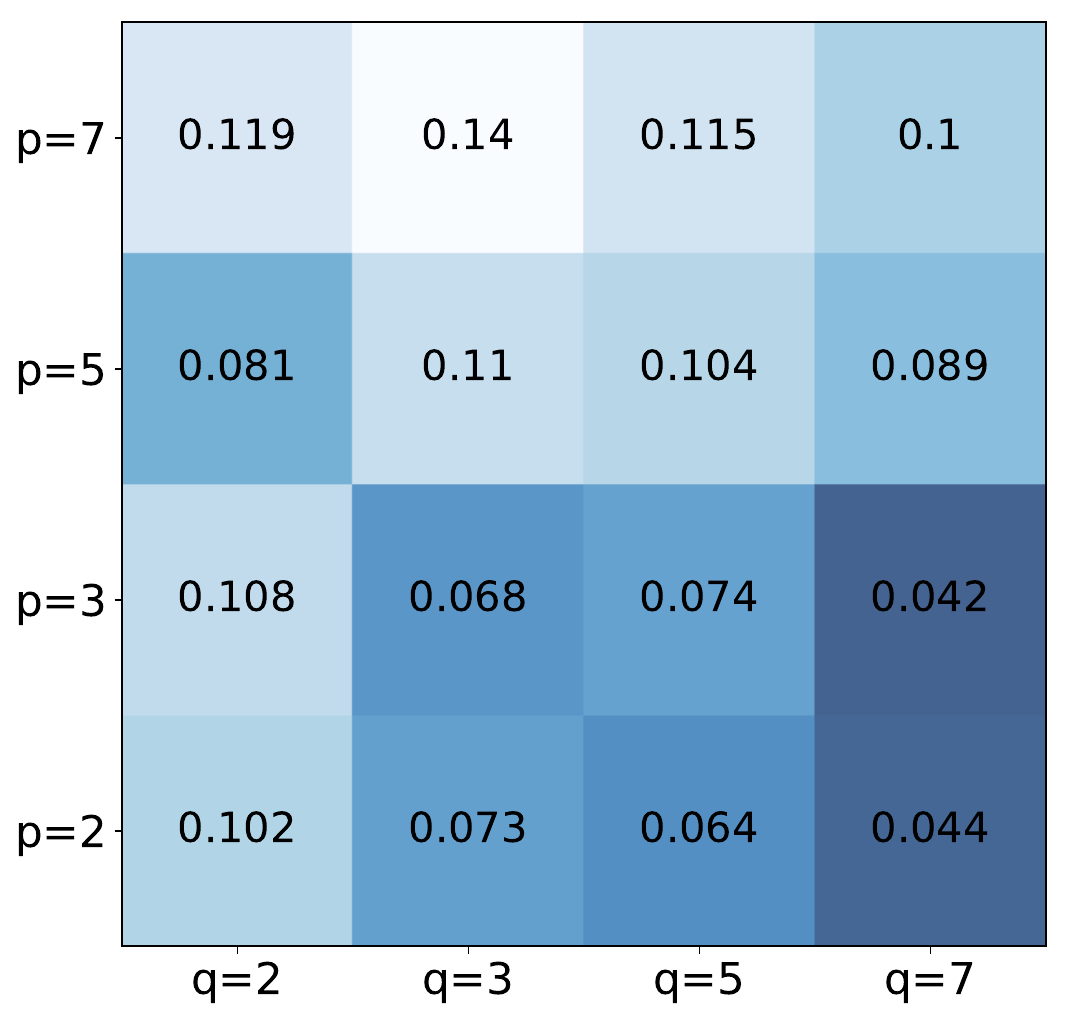}
    }\hfill
    \subfloat[$\sup_{\theta_n \in \hat{\Theta}_n} d_{\text{Lip}_1}(\mu^\star, \mu_{{\theta}_n})$, $K=4$.]
    {
        \includegraphics[width=0.31\linewidth]{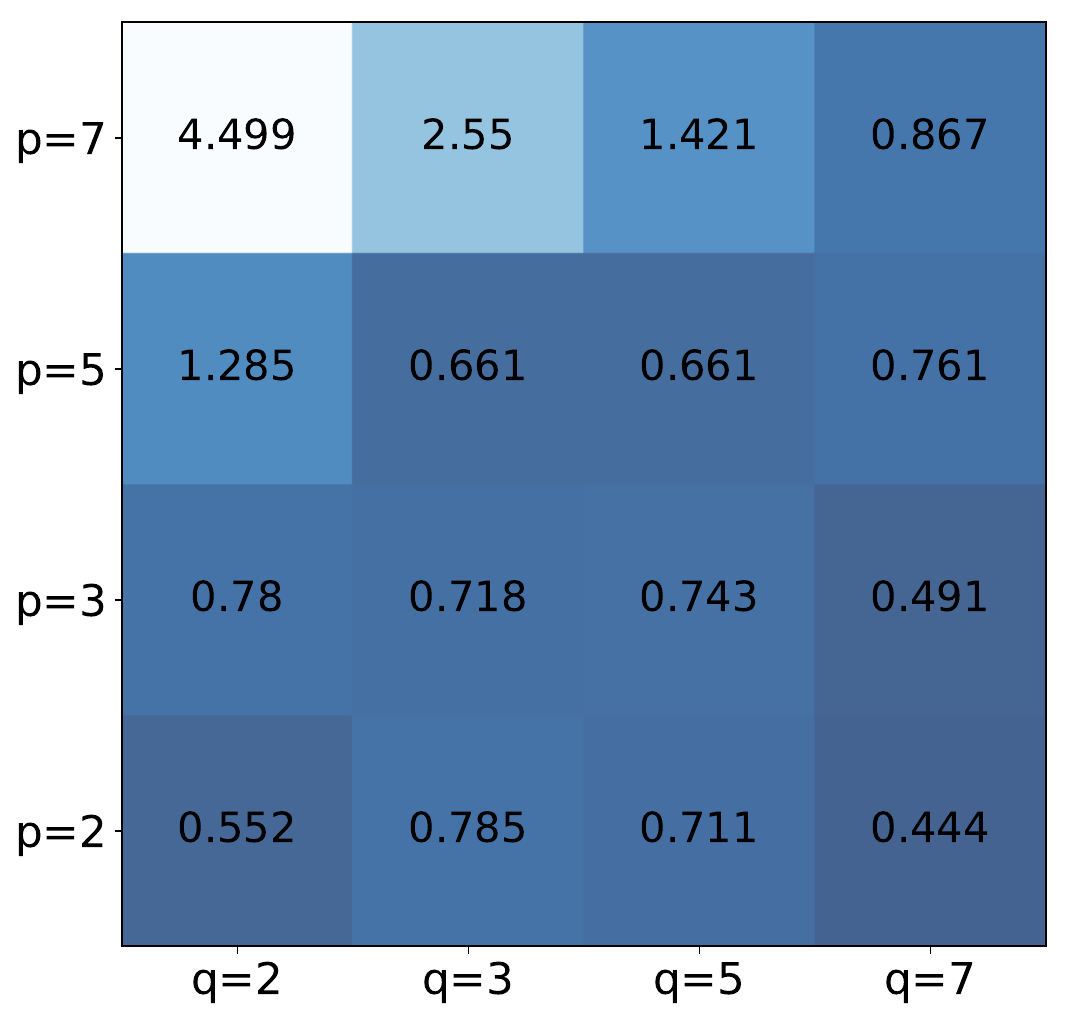}
    }\hfill
    \subfloat[$\sup_{\theta_n \in \hat{\Theta}_n} d_{\text{Lip}_1}(\mu^\star, \mu_{{\theta}_n})$, $K=9$.]
    {
        \includegraphics[width=0.31\linewidth]{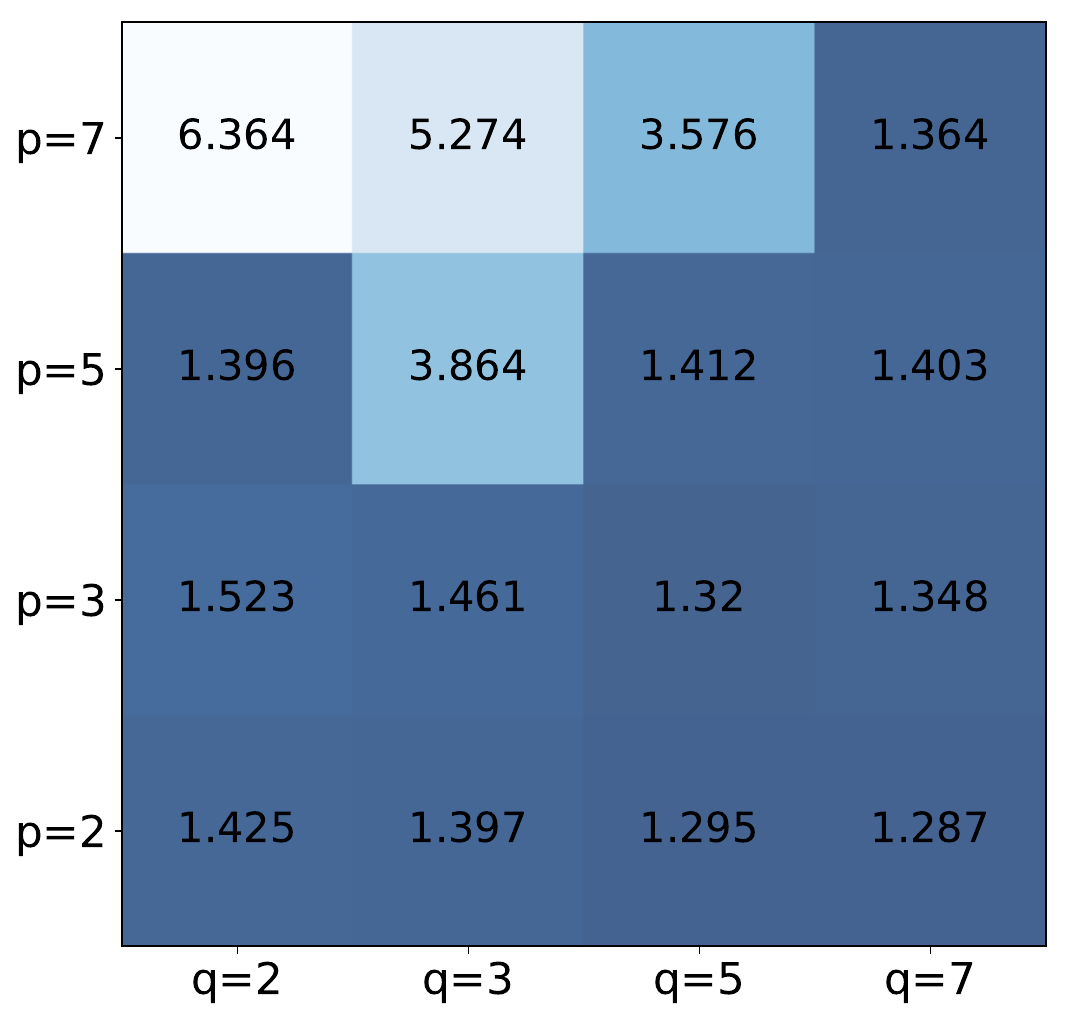}
    }
    \caption{Influence of the generator's depth $p$ and the discriminator's depth $q$ on the maximal Wasserstein distance $\sup_{\theta_n \in \hat{\Theta}_n} \ d_{\text{Lip}_1}(\mu^\star, \mu_{{\theta}_n})$, with $n=5000$.}
    \label{fig:increasing_both_gen_and_disc_overall_perf_emd}
\end{figure}

\begin{figure}[H]
    \subfloat[Av.~recall of $\mu_{\hat{\theta}_n}$ w.r.t.~$\mu^\star$, $K=1$.]
    {
        \includegraphics[width=0.31\linewidth]{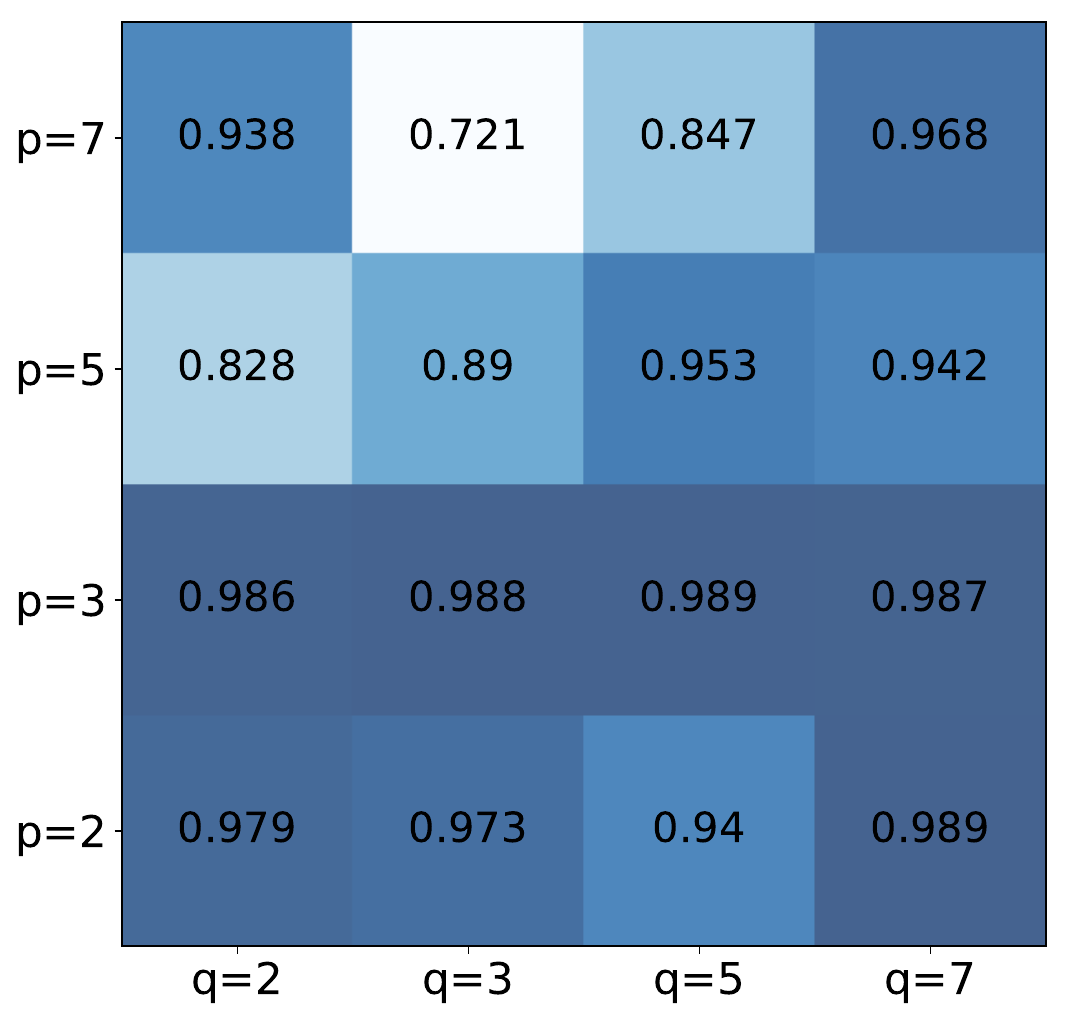}
    }\hfill
    \subfloat[Av.~recall of $\mu_{\hat{\theta}_n}$ w.r.t.~$\mu^\star$, $K=9$.]
    {
        \includegraphics[width=0.31\linewidth]{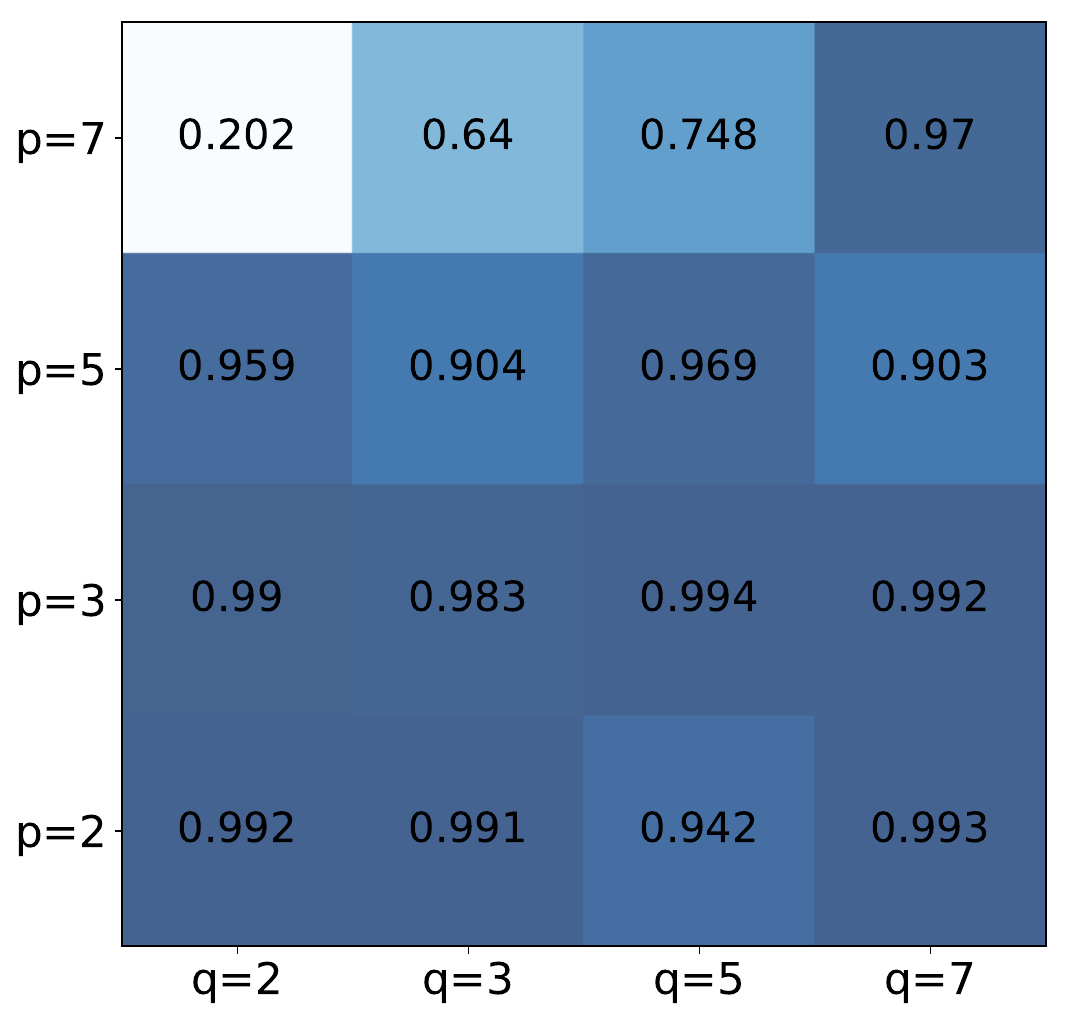}
    }\hfill
    \subfloat[Av.~recall of $\mu_{\hat{\theta}_n}$ w.r.t.~$\mu^\star$, $K=25$.]
    {
        \includegraphics[width=0.31\linewidth]{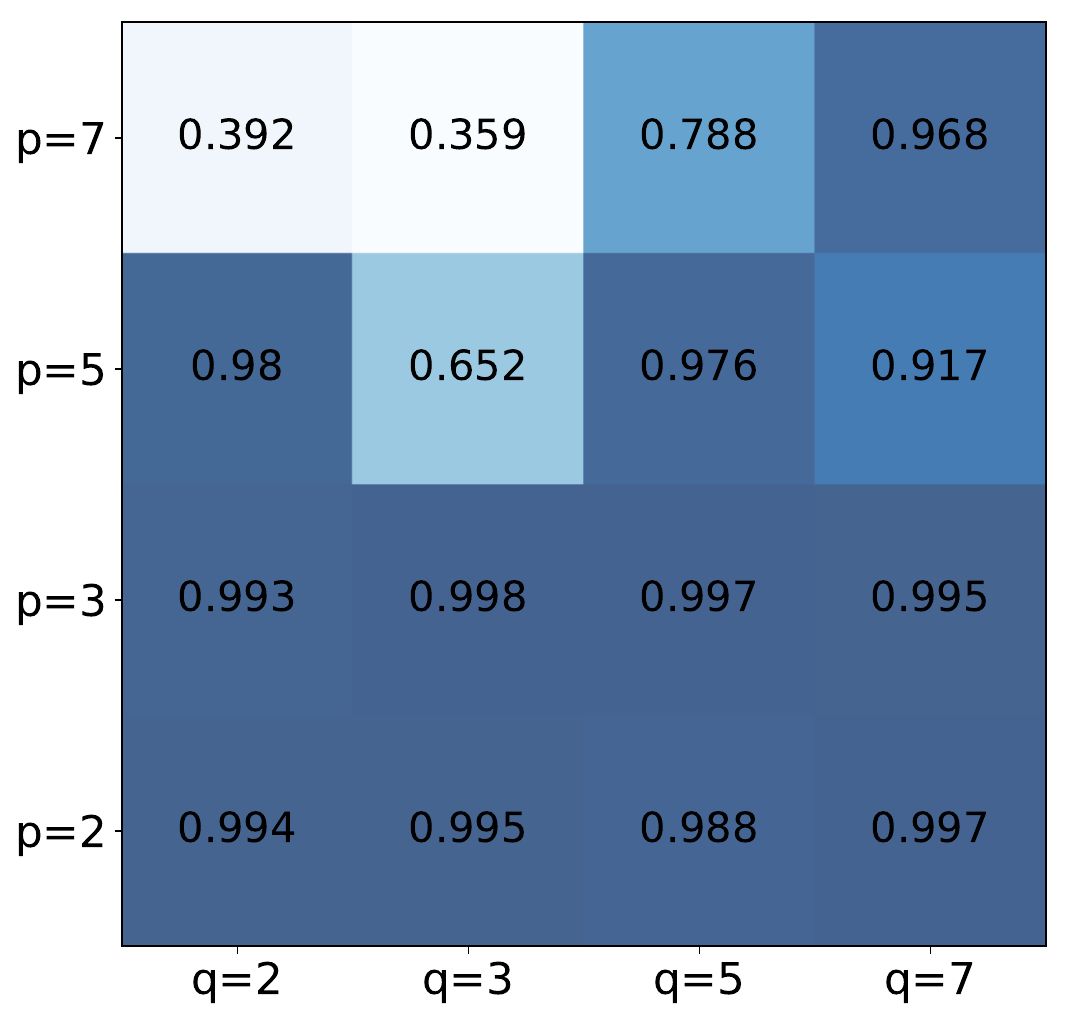}
    }
    \caption{Influence of the generator's depth $p$ and the discriminator's depth $q$ on the average recall of the estimators $\mu_{{\theta}_n}$ w.r.t.~$\mu^\star$, with $n=5000$.}
    \label{fig:increasing_both_gen_and_disc_overall_perf_recall}
\end{figure}

We end this subsection by pointing out a recurring observation across different experiments. In Figure \ref{fig:increasing_both_gen_and_disc_asymptotic_setting_recall} and Figure \ref{fig:increasing_both_gen_and_disc_overall_perf_recall}, we notice, as already stressed, that the average recall of the estimators is prone to decrease when the generator's depth $p$ increases. On the opposite, the average recall increases when the discriminator's depth $q$ increases. This is interesting because the recall metric is a good proxy for a stabilized training, insofar as a high recall means the absence of mode collapse. This is also confirmed in Figure \ref{fig:examples_approximating_9mixtures_gaussians_densities}, which compares two densities: in Figure \ref{fig:5a}, the discriminator has a small capacity ($q=3$) and the generator a large capacity ($p=7$), whereas in Figure \ref{fig:5b}, the discriminator has a large capacity ($q=7$) and the generator a small capacity ($p=3$). We observe that the first WGAN architecture behaves poorly compared to the second one. We therefore conclude that larger discriminators seem to bring some stability in the training of WGANS both in the asymptotic and finite sample regimes.

\begin{figure}[H]
    \subfloat[$p=7$ and $q=3$.\label{fig:D2G7} \label{fig:5a}]
    {
        \includegraphics[width=0.4\linewidth]{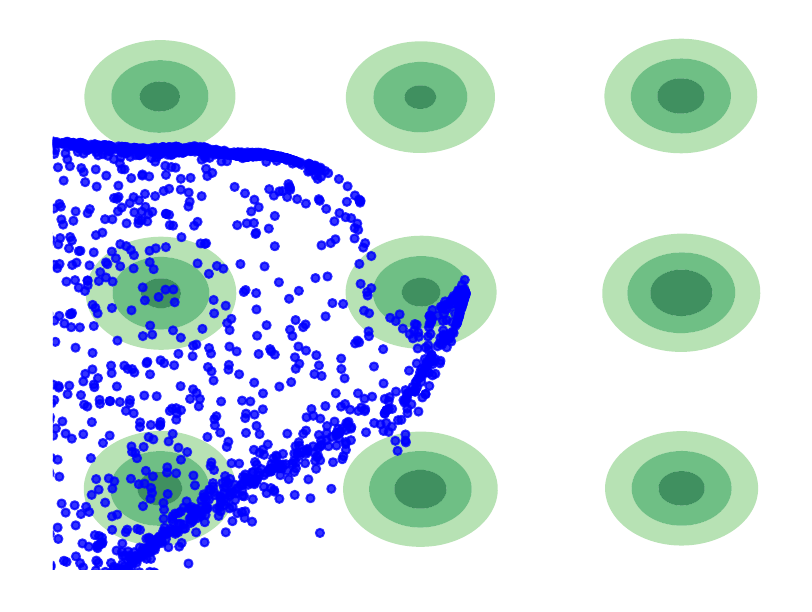}
    }\hspace{2cm}
    \subfloat[$p=3$ and $q=7$.\label{fig:D7G7}\label{fig:5b}]
    {
        \includegraphics[width=0.4\linewidth]{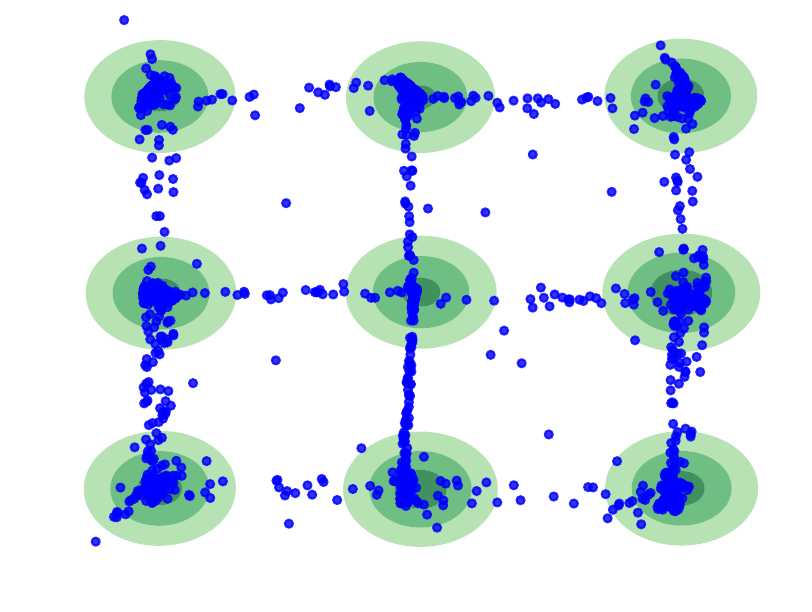}
    }
    \caption{True distribution $\mu^\star$ (mixture of $K=9$ bivariate Gaussian densities, green circles) and 2000 data points sampled from the generator $\mu_{\bar{\theta}}$ (blue dots).} \label{fig:examples_approximating_9mixtures_gaussians_densities}
\end{figure}
\subsection{Real-world experiments}
In this subsection, we further illustrate the impact of the generator's and the discriminator's capacities on two high-dimensional datasets, namely MNIST \citep{lecun98gradientbasedlearning} and Fashion-MNIST \citep{xiao2017online}. MNIST contains images in $\mathds{R}^{28\times 28}$ with 10 classes representing the digits. Fashion-MNIST is a 10-class dataset of images in $\mathds{R}^{28\times 28}$, with slightly more complex shapes than MNIST. Both datasets have a training set of 60,000 examples.

To measure the performance of WGANs when dealing with high-dimensional applications such as image generation, \citet{brock2018large} have advocated that embedding images into a feature space with a pre-trained convolutional classifier provides more meaningful information. Therefore, in order to assess the quality of the generator $\mu_{\hat{\theta}_n}$, we sample images both from the empirical measure $\mu_n$ and from the distribution $\mu_{\hat{\theta}_n}$. Then, instead of computing the Wasserstein distance directly between these two samples, we use as a substitute their embeddings output by an external classifier and compute the Wasserstein between the two new collections. Such a transformation is also done, for example, in \citet{kynkaanniemi2019improved}. Practically speaking, for any pair of images $(a,b)$, this operation amounts to using the Euclidean distance $\|\phi(a) - \phi(b)\|$ in the Wasserstein criterion, where $\phi$ is a pre-softmax layer of a supervised classifier, trained specifically on the datasets MNIST and Fashion-MNIST.

For these two datasets, as usual, we use generators of the form \eqref{eq:def_generators} and discriminators of the form \eqref{eq:def_discriminators}, and plot the performance of $\mu_{\hat{\theta}_n}$ as a function of both $p$ and $q$. The results of Figure \ref{fig:increasing_both_gen_and_disc_mnist_and_fashion_mnist_emd} confirm the fact that the worst results are achieved for generators with a large depth $p$ combined with discriminators with a small depth $q$. They also corroborate the previous observations that larger discriminators are preferred.

\begin{figure}[H]
    \centering
    \subfloat[$\sup_{\theta_n \in \hat{\Theta}_n} \  d_{\text{Lip}_1}(\mu_n, \mu_{{\theta}_n})$, MNIST dataset.]
    {
        \includegraphics[width=0.36\linewidth]{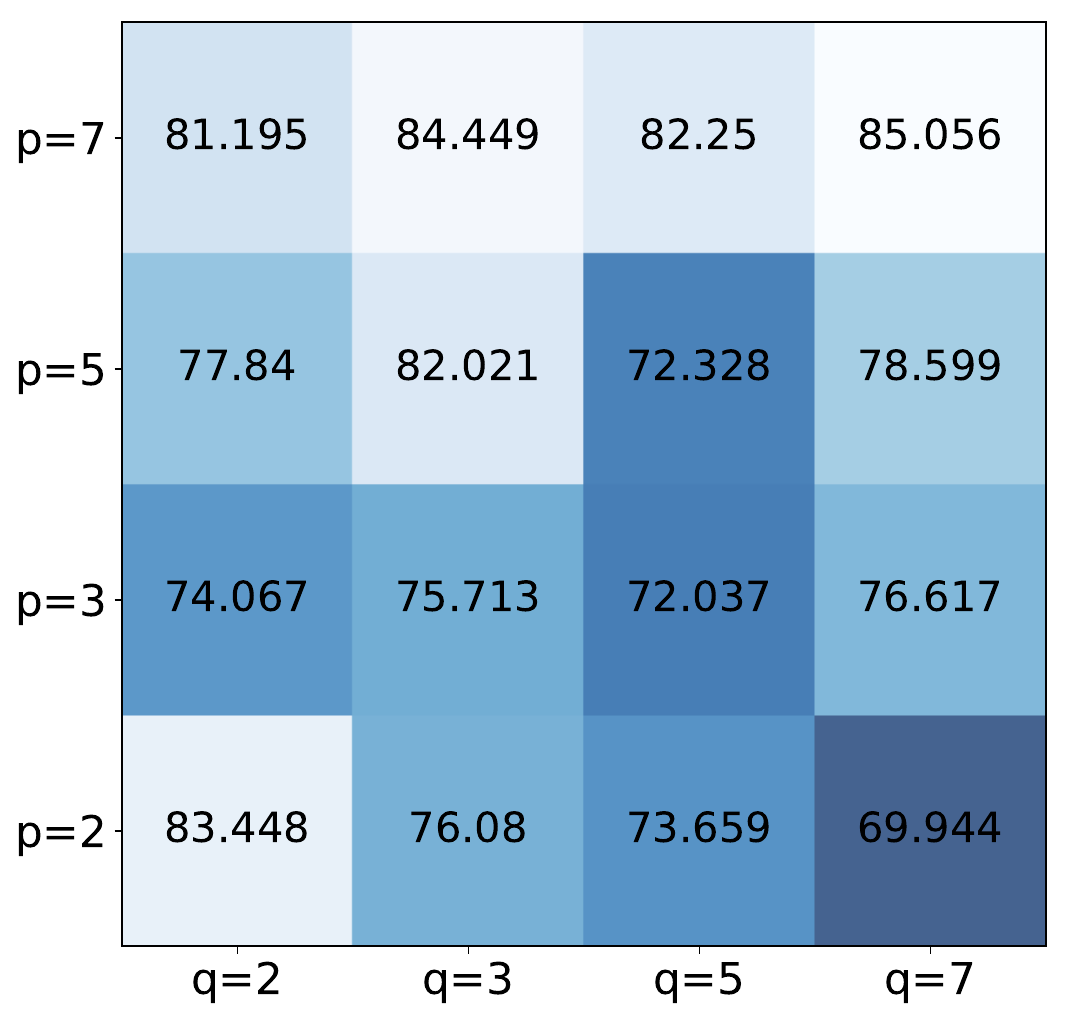}
    }
    \hspace{2.cm}
    \subfloat[$\sup_{\theta_n \in \hat{\Theta}_n} d_{\text{Lip}_1}(\mu_n, \mu_{{\theta}_n})$, FMNIST dataset.]
    {
        \includegraphics[width=0.36\linewidth]{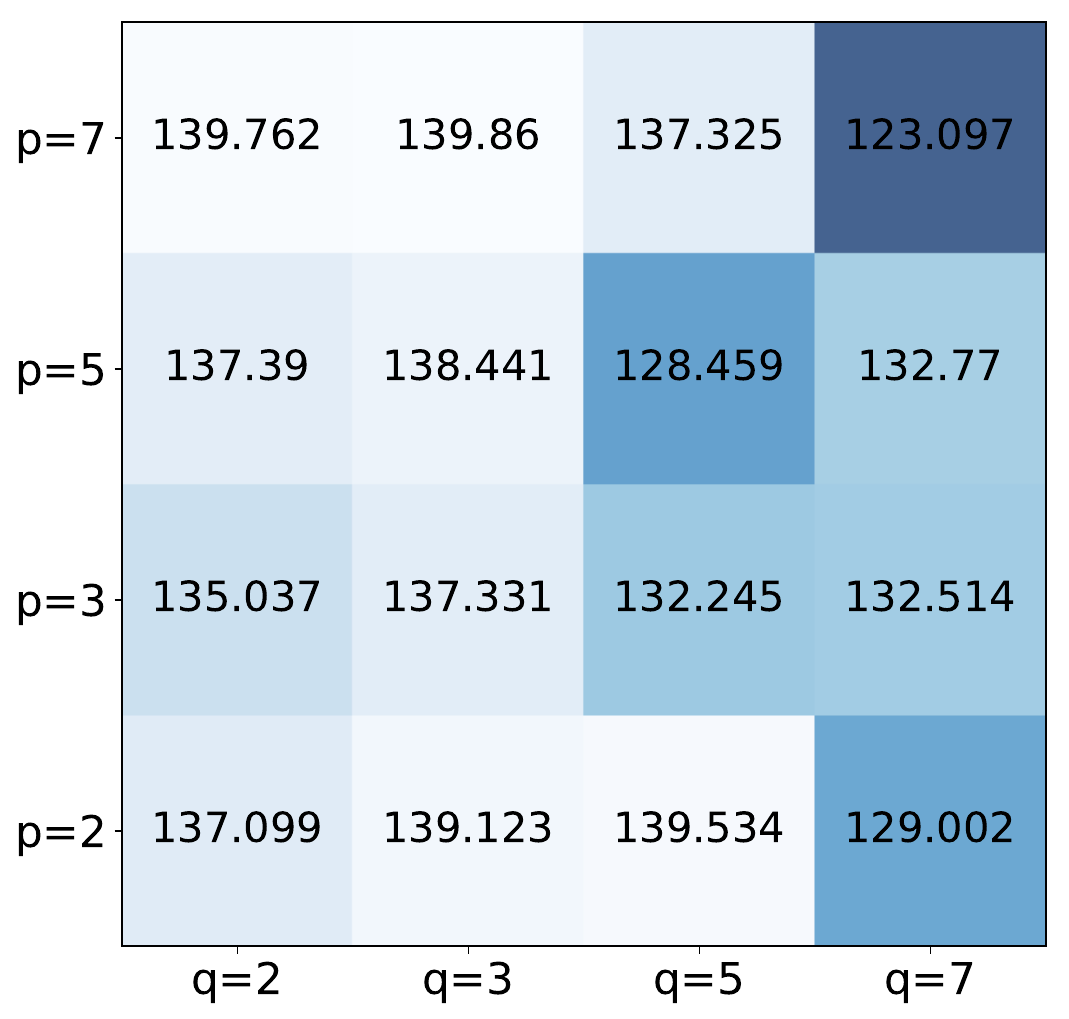}
    }
    \caption{Influence of the generator's depth $p$ and the discriminator's depth $q$ on the maximal Wasserstein distance $\sup_{\theta_n \in \hat{\Theta}_n} \ d_{\text{Lip}_1}(\mu_n, \mu_{{\theta}_n})$ for the MNIST and F-MNIST datasets.}
    \label{fig:increasing_both_gen_and_disc_mnist_and_fashion_mnist_emd}
\end{figure}
\newpage
\acks{We thank the referee and the Editor for their comments on a first version of the article. We also thank J\'er\'emie Bigot (Universit\'e de Bordeaux), Arnak Dalalyan (ENSAE), Flavian Vasile (Criteo AI Lab), and Cl\'ement Calauzenes (Criteo AI Lab) for stimulating discussions and insightful suggestions.}

\bibliography{adversarial_training}

\newpage
\appendix
\section{}
\label{sec:proofs}
\subsection{Proof of Lemma \ref{lem:uniformly_lipschitz_neural_nets}}
    \label{appendix:lem_uniformly_lipschitz_neural_nets}
    We know that for each $\theta \in \Theta$, $G_\theta$ is a feed-forward neural network of the form \eqref{eq:def_generators}, which maps inputs $z \in \mathds{R}^d$ into $E\subset \mathds R^D$. In particular, for $z\in \mathds R^d$, $G_\theta(z) = f_p \circ \cdots \circ f_1(z)$, where $f_i(x) = \sigma(U_ix + b_i)$ for $i=1, \hdots, p-1$ ($\sigma$ is applied componentwise), and $f_p(x) = U_px + b_p$.

    Recall that the notation $\| \cdot \|$ (respectively, $\|\cdot\|_{\infty}$) means the Euclidean (respectively, the supremum) norm, with no specific mention of the underlying space on which it acts. For $(z, z') \in (\mathds{R}^d)^2$, we have
    \begin{align*}
        \|f_1(z) - f_1(z') \| &\leqslant \| U_1 z +b_1 - U_1 z' - b_1 \| \\
        & \quad \mbox{(since $\sigma$ is $1$-Lipschitz)}\\
        &= \|U_1(z-z') \| \\
        & \leqslant \|U_1\|_2 \ \|z-z'\| \\
        & \leqslant K_1 \|z-z'\| \\
        & \quad \mbox{(by Assumption \ref{ass:compactness})}.
    \end{align*}
    Repeating this for $i = 2, \hdots, p$, we thus have, for all $(z,z') \in (\mathds R^{d})^2$, $\|G_\theta(z) - G_\theta(z') \| \leqslant K_1^p \|z-z'\|$. We conclude that, for each $\theta \in \Theta$, the function $G_\theta$ is $K_1^p$-Lipschitz on $\mathds R^{d}$.

    Let us now prove that $\mathscr D \subseteq \text{Lip}_1$. Fix $D_{\alpha} \in \mathscr D$, $\alpha \in \Lambda$. According to \eqref{eq:def_discriminators}, we have, for $x\in E$, $D_{\alpha}(x) = f_q \circ \cdots \circ f_1(x)$, where $f_i(t)= \Tilde\sigma(V_it + c_i)$ for $i=1, \hdots, q-1$ ($\Tilde \sigma$ is applied on pairs of components), and $f_{q}(t)=V_qt+c_q$.

    Consequently, for $(x,y) \in E^2$,
    \begin{align*}
        \|f_1(x) - f_1(y) \|_\infty & \leqslant \| V_1 x - V_1 y \|_\infty \\
        & \quad \mbox{(since $\Tilde{\sigma}$ is $1$-Lipschitz)}\\
        &= \|V_1(x-y) \|_{\infty} \\
        & \leqslant \|V_1\|_{2, \infty} \ \|x-y\| \\
        &\leqslant \|x-y\| \\
        & \quad \mbox{(by Assumption \ref{ass:compactness})}.
    \end{align*}
    Thus,
 \begin{align*}
        \|f_2 \circ f_1(x) - f_2 \circ f_1(y) \|_\infty &\leqslant \| V_2f_1(x)  - V_2 f_1(y) \|_\infty \\
        & \quad \mbox{(since $\Tilde{\sigma}$ is $1$-Lipschitz)}\\
        & \leqslant \|V_2\|_\infty \ \|f_1(x)-f_1(y)\|_\infty \\
        &\leqslant \|f_1(x)-f_1(y)\|_\infty \\
        & \quad \mbox{(by Assumption \ref{ass:compactness})}\\
        & \leqslant \|x-y\|.
    \end{align*}
 Repeating this, we conclude that, for each $\alpha \in \Lambda$ and all $(x, y) \in E^2$, $|D_\alpha(x) - D_\alpha(y)| \leqslant \|x-y\|$, which is the desired result.
    \subsection{Proof of Proposition \ref{prop:neural_nets_tightness}}
    We first prove that the function $\Theta \ni \theta \mapsto \mu_\theta$ is continuous with respect to the weak topology in $P_1(E)$. Let $G_{\theta}$ and $G_{\theta'}$ be two elements of $\mathscr G$, with $(\theta, \theta') \in \Theta^2$. Using \eqref{eq:def_generators}, we write $G_\theta(z) = f_p \circ \cdots \circ f_1(z)$ (respectively, $G_{\theta'}(z) = f'_p \circ \cdots \circ f'_1(z)$), where  $f_i(x) = \max(U_i x + b_i, 0)$ (respectively, $f'_i(x) = \max(U'_i x + b'_i, 0)$) for $i=1, \hdots, p-1$, and $f_p(x) = U_p x + b_p$ (respectively, $f'_p(x) = U'_p x + b'_p$).

    Clearly, for $z\in \mathds R^d$,
    \begin{align*}
            \|f_1(z) - f'_1(z) \| &\leqslant \|U_1 z +b_1 - U'_1 z - b'_1 \| \\
            &\leqslant \|(U_1 - U'_1) z\| + \|b_1 - b'_1\| \\
            &\leqslant \|U_1 - U'_1\|_2 \ \|z\| + \|b_1 - b'_1\|   \\
            &\leqslant (\|z\|+1) \|\theta - \theta'\|.
    \end{align*}
    Similarly, for any $i \in \{2, \hdots, p\}$ and any $x \in \mathds{R}^{u_i}$,
    \begin{equation*}
        \|f_i (x) - f'_i(x) \| \leqslant (\|x\| +1) \|\theta - \theta'\|.
    \end{equation*}
    Observe that
    \begin{align*}
        &\|G_\theta(z) - G_{\theta'}(z) \| \\
        & \ = \|f_p \circ \cdots \circ f_1(z) - f'_p \circ \cdots \circ f'_1(z)\| \\
        & \ \leqslant \|f_p \circ \cdots \circ f_1(z) - f_p \circ \cdots \circ f_2 \circ f'_1(z)\| + \cdots + \|f_p \circ f'_{p-1} \circ \cdots \circ f'_1(z) - f'_p \circ \cdots \circ f'_1(z)\|.
    \end{align*}
    As in the proof of Lemma \ref{lem:uniformly_lipschitz_neural_nets}, one shows that for any $i \in \{1,\hdots, p\}$, the function $f_p \circ \cdots \circ f_i$ is $K_1^{p-i+1}$-Lipschitz with respect to the Euclidean norm. Therefore,
    \begin{align*}
        &\|G_\theta(z) - G_{\theta'}(z) \| \\
        &\quad \leqslant K_1^{p-1} \|f_1(z)-f'_1(z)\| + \cdots + K_1^0 \|f_p \circ f'_{p-1} \circ \cdots \circ f'_1(z) - f'_p \circ \cdots \circ f'_1(z)\| \\
        &\quad \leqslant K_1^{p-1}(\|z\|+1) \|\theta-\theta'\| + \cdots + (\|f'_{p-1} \circ \cdots \circ f'_1(z)\|+1) \|\theta-\theta'\| \\
        &\quad \leqslant K_1^{p-1}(\|z\|+1) \|\theta-\theta'\| + \cdots + (K_1^{p-1} \|z\| + \|f'_{p-1} \circ \cdots \circ f'_1(0)\|+1) \|\theta-\theta'\|.
    \end{align*}
    Using the architecture of neural networks in \eqref{eq:def_generators}, a quick check shows that, for each $i \in \{1, \hdots, p\}$,
    \begin{equation*}
        \|f'_{i} \circ \cdots \circ f'_1(0)\| \leqslant \sum_{k=1}^i K_1^k.
    \end{equation*}
    We are led to
    \begin{equation}\label{eq:G_lipschitz_inequality}
    \|G_\theta(z) - G_{\theta'}(z) \| = (\ell_1 \|z\| + \ell_2) \|\theta-\theta'\|,
    \end{equation}
    where
    \begin{equation*}
        \ell_1 = pK_1^{p-1} \quad \mbox{and} \quad \ell_2 = \sum_{i=1}^{p-1} K_1^{p-(i+1)}\sum_{k=1}^i K_1^k + \sum_{i=0}^{p-1}K_1^i.
    \end{equation*}
    Denoting by $\nu$ the probability distribution of the sub-Gaussian random variable $Z$, we note that $\int_{\mathds{R}^d} (\ell_1 \|z\| + \ell_2) \nu({\rm d} z) < \infty$. Now, let $(\theta_k)$ be a sequence in $\Theta$ converging to $\theta \in \Theta$ with respect to the Euclidean norm. Clearly, for a given $z \in \mathds{R}^d$, by continuity of the function $\theta \mapsto G_\theta(z)$, we have $\underset{k \to \infty}{\lim} G_{\theta_k}(z) = G_\theta(z)$ and, for any $\varphi \in C_b(E)$, $\underset{k \to \infty}{\lim} \varphi(G_{\theta_k}(z)) = \varphi(G_\theta(z))$. Thus, by the dominated convergence theorem,
    \begin{equation}\label{abudhabi}
        \underset{k \to \infty}{\lim}\int_E \varphi(x) \mu_{\theta_k}({\rm d} x) = \underset{k \to \infty}{\lim} \int_{\mathds{R}^d} \varphi (G_{\theta_k}(z)) \nu({\rm d}z) = \int_{\mathds{R}^d} \varphi \big(G_{\theta}(z))\nu({\rm d}z) = \int_{E} \varphi(x) \mu_\theta({\rm d}x).
    \end{equation}
    This shows that the sequence $(\mu_{\theta_k})$ converges weakly to $\mu_\theta$. Besides, for an arbitrary $x_0$ in $E$, we have
    \begin{align*}
        &\underset{k \to \infty}{\limsup}\int_E \|x_0-x\|\mu_{\theta_k}({\rm d} x) \\
        & \quad = \underset{k \to \infty}{\limsup}\int_{\mathds{R}^d} \|x_0-G_{\theta_k}(z)\| \nu({\rm d}z) \\
        & \quad \leqslant \underset{k \to \infty}{\limsup}\int_{\mathds{R}^d} \big(\|G_{\theta_k}(z)-G_\theta(z)\| +  \|G_{\theta}(z)-x_0\|\big) \nu({\rm d} z) \\
        &\quad \leqslant \underset{k \to \infty}{\limsup}\int_{\mathds{R}^d} (\ell_1 \|z\| + \ell_2 ) \|\theta_k-\theta\| \nu({\rm d} z) + \int_{\mathds{R}^d} \|G_{\theta}(z)-x_0\| \nu({\rm d} z) \\
        &\qquad \mbox{(by inequality \eqref{eq:G_lipschitz_inequality})}.
              \end{align*}
    Consequently,
    \begin{equation*}
        \underset{k \to \infty}{\limsup}\int_E \|x_0-x\|\mu_{\theta_k}({\rm d} x) \leqslant \int_{\mathds{R}^d} \|G_{\theta}(z)-x_0\| \nu({\rm d} z) =\int_E \|x_0-x\|\mu_{\theta}({\rm d} x).
        \end{equation*}
    One proves with similar arguments that
    \begin{equation*}
         \underset{k \to \infty}{\liminf}\int_E \|x_0-x\|\mu_{\theta_k}({\rm d} x) \geqslant\int_E \|x_0-x\|\mu_{\theta}({\rm d} x).
        \end{equation*}
    Therefore, putting all the pieces together, we conclude that
    \begin{equation*}
    \underset{k \to \infty}{\lim}\int_E \|x_0-x\|\mu_{\theta_k}({\rm d} x)=\int_E \|x_0-x\|\mu_{\theta}({\rm d} x).
    \end{equation*}
    This, together with \eqref{abudhabi}, shows that the sequence $(\mu_{\theta_k})$ converges weakly to $\mu_\theta$ in $P_1(E)$, and, in turn, that the function $\Theta \ni \theta \mapsto \mu_\theta$ is continuous with respect to the weak topology in $P_1(E)$, as desired.

    The second assertion of the proposition follows upon noting that $\mathscr{P}$ is the image of the compact set $\Theta$ by a continuous function.
\subsection{Proof of Proposition \ref{cor:neural_distance}}\label{appendix:lem_neural_IPMs_are_distances}
To show the first statement, we are to exhibit a specific discriminator, say $\mathscr{D}_{\text{max}}$, such that, for all $(\mu, \nu) \in (\mathscr{P} \cup \{\mu^\star\})^2$, the identity $d_{\mathscr{D}_{\text{max}}}(\mu, \nu)=0$ implies $\mu =\nu$.

Let $\varepsilon >0$. According to Proposition \ref{prop:neural_nets_tightness}, under Assumption \ref{ass:compactness}, $\mathscr{P}$ is a compact subset of $P_1(E)$ with respect to the weak topology in $P_1(E)$. Let $x_0 \in E$ be arbitrary. For any $\mu \in \mathscr{P}$ there exists a compact $K_\mu \subseteq E$ such that $\int_{K_\mu^\complement} \|x_0-x\|\mu({\rm d}x) \leqslant \varepsilon/4$. Also, for any such $K_\mu$, the function $P_1(E)\ni \rho \mapsto \int_{K_\mu^\complement} \|x_0-x\| \rho({\rm d}x)$ is continuous. Therefore, there exists an open set $U_\mu \subseteq P_1(E)$ containing $\mu$ such that, for any $\rho \in U_\mu$, $\int_{K_\mu^\complement} \|x_0-x\| \rho({\rm d}x) \leqslant \varepsilon/2$.

The collection of open sets $\{ U_\mu : \mu \in \mathscr{P}\}$ forms an open cover of $\mathscr{P}$, from which we can extract, by compactness, a finite subcover $U_{\mu_1}, \dots, U_{\mu_n}$. Letting $K_1 = \cup_{i=1}^n K_{\mu_i}$, we deduce that, for all $\mu \in \mathscr{P}$, $\int_{K_1^\complement} \|x_0-x\| \mu({\rm d}x) \leqslant \varepsilon/2$. We conclude that there exists a compact $K \subseteq E$ and $x_0 \in K$ such that, for any $\mu \in \mathscr{P} \cup \{\mu^\star \}$,
\begin{equation*}
    \int_{K^\complement} ||x_0-x|| \mu({\rm d}x) \leqslant \varepsilon/2.
\end{equation*}

By Arzel\`a-Ascoli theorem, it is easy to see that $\text{Lip}_1(K)$, the set of $1$-Lipschitz real-valued functions on $K$, is compact with respect to the uniform norm $\|\cdot\|_\infty$ on $K$. Let $\{f_1, \hdots, f_{\mathscr{N}_\varepsilon}\}$ denote an $\varepsilon$-covering of $\text{Lip}_1(K)$. According to \citet[][Theorem 3]{Anil2018SortingOL}, for each $k=1, \hdots, \mathscr{N}_\varepsilon$ there exists under Assumption \ref{ass:compactness} a discriminator $\mathscr{D}_k$ of the form \eqref{eq:def_discriminators} such that
  \begin{equation*}
      \underset{g \in \mathscr{D}_k}{\inf} \ \|f_k- g \mathds{1}_K  \|_\infty \leqslant \varepsilon.
  \end{equation*}
Since the discriminative classes of functions use GroupSort activations, one can find a neural network of the form \eqref{eq:def_discriminators} satisfying Assumption \ref{ass:compactness}, say $\mathscr{D}_{\text{max}}$, such that, for all $k \in \{ 1, \hdots, \mathscr{N}_\varepsilon \}$, $\mathscr{D}_k \subseteq \mathscr{D}_{\text{max}}$. Consequently, for any $f \in \text{Lip}_1(K)$, letting $k_0 \in \underset{k \in \{1, \hdots, \mathscr{N}_\varepsilon\}}{\argmin} \ \|f - f_k\|_\infty$, we have
  \begin{equation*}
      \underset{g \in \mathscr{D}_{\text{max}}}{\inf} \ \|f - g \mathds{1}_K \|_\infty \leqslant \|f - f_{k_0} \|_\infty + \underset{g \in \mathscr{D}_{\text{max}}}{\inf} \ \|f_{k_0} - g \mathds{1}_K \|_\infty \leqslant 2 \varepsilon.
  \end{equation*}
Now, let $(\mu, \nu) \in (\mathscr{P} \bigcup \{\mu^\star\})^2$ be such that $d_{\mathscr{D}_{\text{max}}}(\mu, \nu) = 0$, i.e., $\sup_{f \in \mathscr{D}_{\text{max}}} \ |\mathds E_{\mu} f -\mathds E_{\nu}f| = 0$.
Let $f^\star$ be a function in $\text{Lip}_1$ such that $\mathds{E}_{\mu} f ^\star- \mathds{E}_{\nu} f^\star = d_{\text{Lip}_1}(\mu, \nu)$ (such a function exists according to \eqref{fstar}) and, without loss of generality, such that $f^\star(x_0) = 0$. Clearly,
\begin{align*}
    d_{\text{Lip}_1}(\mu, \nu) &= \mathds{E}_\mu f^\star -  \mathds{E}_\nu f^\star \\
    &\leqslant \Big| \int_K f^\star {\rm d}\mu - \int_K f^\star {\rm d}\nu \Big| + \Big| \int_{K^\complement} f^\star {\rm d}\mu - \int_{K^\complement} f^\star {\rm d}\nu \Big| \\
    & \leqslant \Big| \int_K f^\star {\rm d}\mu - \int_K f^\star {\rm d}\nu \Big| + \varepsilon.
\end{align*}
Letting $g_{f^\star} \in \mathscr{D}_{\text{max}}$ be such that
\begin{equation*}
    \|(f^\star-g_{f^\star}) \mathds{1}_K\|_{\infty} \leqslant \underset{g \in \mathscr{D}_{\text{max}}}{\inf}\ \|(f^\star-g)\mathds{1}_K\|_{\infty} + \varepsilon \leqslant 3\varepsilon,
\end{equation*}
we are thus led to
\begin{align*}
    d_{\text{Lip}_1}(\mu, \nu) \leqslant \Big| \int_K (f^\star-g_{f^\star}) {\rm d}\mu - \int_K (f^\star-g_{f^\star}) {\rm d}\nu  + \int_K g_{f^\star} {\rm d}\mu - \int_K g_{f^\star} {\rm d}\nu \Big| + \varepsilon.
\end{align*}
Observe, since $x_0 \in K$, that $|g_{f^\star}(x_0)| \leqslant 3\varepsilon$ and that, for any $x \in E$, $|g_{f^\star}(x)| \leqslant \|x_0-x\| + 3 \varepsilon$. Exploiting $\mathds E_{\mu} g_{f^\star} -\mathds E_{\nu}g_{f^\star} = 0$, we obtain
\begin{align*}
      d_{\text{Lip}_1}(\mu, \nu) & \leqslant 7\varepsilon + \Big| \int_{K^\complement} g_{f^\star} {\rm d}\mu - \int_{K^\complement} g_{f^\star} {\rm d}\nu \Big| \\
      &\leqslant 7 \varepsilon +  \int_{K^\complement} \|x_0-x\| \mu({\rm d}x) + \int_{K^\complement} \|x_0-x\| \nu({\rm d}x) + 6\varepsilon \\
      &\leqslant 14 \varepsilon.
  \end{align*}
Since $\varepsilon$ is arbitrary and $d_{\text{Lip}_1}$ is a metric on $P_1(E)$, we conclude that $\mu=\nu$, as desired.

To complete the proof, it remains to show that $d_{\mathscr{D}_{\text{max}}}$ metrizes weak convergence in $\mathscr{P} \cup \{\mu^\star\}$. To this aim, we let $(\mu_k)$ be a sequence in $\mathscr{P} \cup \{\mu^\star\}$ and $\mu$ be a probability measure in $\mathscr{P} \cup \{\mu^\star\}$.

If $(\mu_k)$ converges weakly to $\mu$ in $P_1(E)$, then $d_{\text{Lip}_1}(\mu, \mu_k) \to 0$ \citep[][Theorem 6.8]{villani2008optimal}, and, accordingly, $d_{\mathscr{D}_{\text{max}}}(\mu, \mu_k) \to 0$.

Suppose, on the other hand, that $d_{\mathscr{D}_{\text{max}}}(\mu, \mu_k) \to 0$, and fix $\varepsilon>0$. There exists $M>0$ such that, for all $k\geqslant M$,  $d_{\mathscr{D}_{\text{max}}}(\mu, \mu_k) \leqslant \varepsilon$. Using a similar reasoning as in the first part of the proof, it is easy to see that for any $k \geqslant M$, we have $d_{\text{Lip}_1}(\mu, \mu_k) \leqslant 15\varepsilon$. Since the Wasserstein distance metrizes weak convergence in $P_1(E)$ and $\varepsilon$ is arbitrary, we conclude that $(\mu_k)$ converges weakly to $\mu$ in $P_1(E)$.

\subsection{Proof of Lemma \ref{lem:d_star_not_empty}} \label{appendix:th_d_star_not_empty}
    Using a similar reasoning as in the proof of Proposition \ref{prop:neural_nets_tightness}, one easily checks that for all $(\alpha,\alpha') \in \Lambda^2$ and all $x \in E$,
    \begin{align*}
        |D_\alpha(x) - D_{\alpha'}(x)| &\leqslant Q^{1/2}\big(q \|x\| + K_2 \sum_{i=1}^{q-1} i + q \big) \|\alpha - \alpha' \| \\
        &\leqslant Q^{1/2}\big(q \|x\| + \frac{q (q-1)K_2}{2} + q \big) \|\alpha - \alpha' \|,
    \end{align*}
    where $q$ refers to the depth of the discriminator. Thus, since $\mathscr{D} \subset \text{Lip}_1$ (by Lemma  \ref{lem:uniformly_lipschitz_neural_nets}), we have, for all $\alpha \in \Lambda$, all $x \in E$, and any arbitrary $x_0 \in E$,
    \begin{align*}
        |D_\alpha(x)| &\leqslant |D_\alpha(x) - D_\alpha(x_0)| + |D_\alpha(x_0) | \\
        &\leqslant \|x_0-x\| + Q^{1/2}\big(q \|x_0\| + \frac{q (q-1)K_2}{2} + q \big) \|\alpha\| \\
        &\quad \mbox{(upon noting that $D_{0}(x_0)=0$)}\\
        &\leqslant \|x_0-x\| + Q^{1/2}\big(q \|x_0\| + \frac{q (q-1)K_2}{2} + q \big) Q^{1/2} \max(K_2,1),
    \end{align*}
    where $Q$ is the dimension of $\Lambda$. Thus, since $\mu^{\star}$ and the $\mu_{\theta}$'s belong to $P_1(E)$ (by Lemma \ref{lem:uniformly_lipschitz_neural_nets}), we deduce that all $D_\alpha \in \mathscr D$ are dominated by a function independent of $\alpha$ and integrable with respect to $\mu^\star$ and $\mu_\theta$. In addition, for all $x \in E$, the function $\alpha \mapsto D_\alpha(x)$ is continuous on $\Lambda$. Therefore, by the dominated convergence theorem, the function $\Lambda \ni \alpha \mapsto |\mathds{E}_{\mu^\star} D_\alpha - \mathds{E}_{\mu_\theta} D_\alpha|$ is continuous. The conclusion follows from the compactness of the set $\Lambda$ (Assumption \ref{ass:compactness}).
    \subsection{Proof of Theorem \ref{th:continuity}}
        Let $(\theta, \theta') \in \Theta^2$, and let $\gamma_Z$ be the joint distribution of the pair $(G_\theta(Z), G_{\theta'}(Z))$. We have
        \begin{align*}
            |\xi_{\text{Lip}_1}(\theta) - \xi_{\text{Lip}_1}(\theta')| &=
            |d_{\text{Lip}_1}(\mu^\star, \mu_\theta) - d_{\text{Lip}_1}(\mu^\star, \mu_{\theta'})| \\
            & \leqslant d_{\text{Lip}_1}(\mu_\theta, \mu_{\theta'}) \\
            &= \underset{\gamma \in \Pi(\mu_\theta, \mu_{\theta'})}{\inf} \int_{E^2} \|x-y\| \gamma({\rm d}x,{\rm d}y),
            \end{align*}
    where $\Pi(\mu_\theta, \mu_{\theta'})$ denotes the collection of all joint probability measures on $E \times E$ with marginals $\mu_{\theta}$ and $\mu_{\theta'}$.  Thus,
            \begin{align*}
             |\xi_{\text{Lip}_1}(\theta) - \xi_{\text{Lip}_1}(\theta')| & \leqslant \int_{E^2} \|x-y\| \gamma_Z({\rm d}x,{\rm d}y) \\
            &= \int_{\mathds{R}^d} \|G_\theta(z)- G_{\theta'}(z)\| \nu({\rm d} z) \\
            & \quad \mbox{(where $\nu$ is the distribution of $Z$)} \\
            &\leqslant \|\theta-\theta'\|\int_{\mathds{R}^d} (\ell_1 \|z\| + \ell_2) \nu({\rm d} z) \\
            & \quad \mbox{(by inequality \eqref{eq:G_lipschitz_inequality})}.
        \end{align*}
    This shows that the function $\theta \ni \Theta \mapsto \xi_{\text{Lip}_1}(\theta)$ is $L$-Lipschitz, with $L = \int_{\mathds{R}^d} (\ell_1 \|z\| + \ell_2) \nu({\rm d} z)$. For the second statement of the theorem, just note that
        \begin{align*}
            |\xi_{\mathscr{D}}(\theta) - \xi_{\mathscr{D}}(\theta')| &=
            |d_{\mathscr{D}}(\mu^\star, \mu_\theta) - d_{\mathscr{D}}(\mu^\star, \mu_{\theta'})| \\
            & \leqslant d_{\mathscr{D}}(\mu_\theta, \mu_{\theta'}) \\
            & \leqslant d_{\text{Lip}_1}(\mu_\theta, \mu_{\theta'}) \\
            & \quad \mbox{(since $\mathscr{D} \subseteq \text{Lip}_1$)} \\
            &\leqslant L \|\theta - \theta'\|.
        \end{align*}
    \subsection{Proof of Theorem \ref{th:approx_properties}}
    The proof is divided into two parts. First, we show that under Assumption 1, for all $\varepsilon>0$ and $\theta \in \Theta$, there exists a discriminator $\mathscr D$ (function of $\varepsilon$ and $\theta$) of the form \eqref{eq:def_discriminators} such that
        \begin{equation*}
            d_{\text{Lip}_1}(\mu^\star, \mu_\theta) - d_{\mathscr{D}}(\mu^\star, \mu_\theta) \leqslant  10 \varepsilon .
        \end{equation*}
    Let $f^\star$ be a function in $\text{Lip}_1$ such that $\mathds{E}_{\mu^\star} f ^\star- \mathds{E}_{\mu_\theta} f^\star = d_{\text{Lip}_1}(\mu^\star, \mu_\theta)$ (such a function exists according to \eqref{fstar}). 
    We may write
        \begin{align}
            d_{\text{Lip}_1}(\mu^\star, \mu_\theta) - d_{\mathscr{D}}(\mu^\star, \mu_\theta) &= \mathds{E}_{\mu^\star} f^\star - \mathds{E}_{\mu_\theta} f^\star - \underset{f \in \mathscr{D}}{\sup} \ |\mathds{E}_{\mu^\star} f - \mathds{E}_{\mu_\theta} f| \nonumber \\
            &= \mathds{E}_{\mu^\star} f^\star - \mathds{E}_{\mu_\theta} f^\star - \underset{f \in \mathscr{D}}{\sup} \ (\mathds{E}_{\mu^\star} f - \mathds{E}_{\mu_\theta} f) \nonumber\\
            & = \underset{f \in \mathscr{D}}{\inf}\ (\mathds{E}_{\mu^\star} f^\star - \mathds{E}_{\mu_\theta} f^\star - \mathds{E}_{\mu^\star} f + \mathds{E}_{\mu_\theta} f) \nonumber\\
            &= \underset{f \in \mathscr{D}}{\inf} \ ( \mathds{E}_{\mu^\star} (f^\star - f) - \ \mathds{E}_{\mu_\theta} (f^\star - f) ) \nonumber\\
            &\leqslant \underset{f \in \mathscr{D}}{\inf} \ (\mathds{E}_{\mu^\star} |f^\star - f| +  \mathds{E}_{\mu_\theta} |f^\star - f|). \label{14:02}
        \end{align}
    Next, for any $f \in \mathscr D$ and any compact $K \subseteq E$,
        \begin{align*}\label{eq:simple_conv_eq}
            \mathds{E}_{\mu^\star} |f^\star - f| &= \mathds{E}_{\mu^\star} |f^\star -f|{\mathds 1}_{K} + \mathds{E}_{\mu^\star} |f^\star-f|{\mathds 1}_{K^\complement} \nonumber\\
            &\leqslant \|(f^\star-f){\mathds 1}_{K}\|_{\infty}+ \mathds{E}_{\mu^\star} |f^\star|{\mathds 1}_{K^\complement} + \mathds{E}_{\mu^\star} |f|{\mathds 1}_{K^\complement}.
        \end{align*}

    For the rest of the proof, we will assume, without loss of generality, that $f^\star(0)=0$ and thus $|f^\star(x)| \leqslant |x|$. Therefore, there exists a compact set $K$ such that $0 \in K$ and 
    \begin{equation*}
        \max\big(\mathds{E}_{\mu^\star} |f^\star|{\mathds 1}_{K^\complement}, \ \mathds{E}_{\mu_\theta} |f^\star|{\mathds 1}_{K^\complement} \big) \leqslant \varepsilon. 
    \end{equation*}
    
    Besides, according to \citet[][Theorem 3]{Anil2018SortingOL}, under Assumption 1, for any compact $K$, we can find a discriminator of the form \eqref{eq:def_discriminators} such that ${\inf}_{f \in \mathscr{D}}\ \| (f^\star - f)\mathds 1_K \|_{\infty} \leqslant \varepsilon$. So, choose $f \in \mathscr{D}$ such that $\|(f^\star - f)\mathds 1_K \|_{\infty} \leqslant 2\varepsilon$. For such a choice of $f$, we have, for any $x \in E$, $|f(x)| \leqslant |f(x)-f(0)|+|f(0)| \leqslant |x| + 2\varepsilon$, and thus, recalling that $f^\star(0)=0$,
    \begin{equation*}
        \max\big(\mathds{E}_{\mu^\star} |f|{\mathds 1}_{K^\complement}, \ \mathds{E}_{\mu_\theta} |f|{\mathds 1}_{K^\complement}\big) \leqslant 3\varepsilon.
    \end{equation*}
    
    Consequently,
    \begin{equation*}
          \mathds{E}_{\mu^\star} |f^\star - f| \leqslant\|(f^\star-f){\mathds 1}_{K}\|_{\infty} +  4\varepsilon .
    \end{equation*}
    Similarly,
       \begin{equation*}
          \mathds{E}_{\mu_{\theta}} |f^\star - f| \leqslant\|(f^\star-f){\mathds 1}_{K}\|_{\infty} +  4\varepsilon .
      \end{equation*}
    Plugging the two inequalities above in \eqref{14:02}, we obtain
     \begin{equation*}
      d_{\text{Lip}_1}(\mu^\star, \mu_\theta) - d_{\mathscr{D}}(\mu^\star, \mu_\theta) \leqslant 2 \underset{f \in \mathscr{D}}{\inf} \ \|(f^\star-f){\mathds 1}_{K}\|_{\infty} + 8\varepsilon .
     \end{equation*}
    We conclude that, for this choice of $\mathscr D$ (function of $\varepsilon$ and $\theta$),
    \begin{equation}\label{eq:33}
        d_{\text{Lip}_1}(\mu^\star, \mu_\theta) - d_{\mathscr{D}}(\mu^\star, \mu_\theta) \leqslant 10\varepsilon ,
    \end{equation}
    as desired. 

    For the second part of the proof, we fix $\varepsilon >0$ and let, for each $\theta \in \Theta$ and each discriminator of the form \eqref{eq:def_discriminators},
    \begin{align*}
        \hat{\xi}_{\mathscr{D}}(\theta) = d_{\text{Lip}_1}(\mu^\star, \mu_\theta) - d_{\mathscr{D}}(\mu^\star, \mu_\theta).
    \end{align*}
    Arguing as in the proof of Theorem \ref{th:continuity}, we see that $\hat{\xi}_{\mathscr{D}}(\theta)$ is $2L$-Lipschitz in $\theta$, where $L = \int_{\mathds{R}^d} (\ell_1\|z\| + \ell_2) \nu({\rm d}z)$ and $\nu$ is the probability distribution of $Z$.

    Now, let $\{\theta_1, \hdots, \theta_{\mathscr{N}_\varepsilon}\}$ be an $\varepsilon$-covering of the compact set $\Theta$, i.e., for each $\theta \in \Theta$, there exists $k \in \{1,\hdots, \mathscr{N}_\varepsilon\}$ such that $\|\theta - \theta_k \| \leqslant \varepsilon$. According to \eqref{eq:33}, for each such $k$, there exists a discriminator $\mathscr{D}_k$ such that $\hat{\xi}_{\mathscr{D}_k}(\theta_k) \leqslant 6\varepsilon$. Since the discriminative classes of functions use GroupSort activation functions, one can find a neural network of the form \eqref{eq:def_discriminators} satisfying Assumption \ref{ass:compactness}, say $\mathscr{D}_{\text{max}}$, such that, for all $k \in \{ 1, \hdots, \mathscr{N}_\varepsilon \}$, $\mathscr{D}_k \subseteq \mathscr{D}_{\text{max}}$. Clearly, $\hat{\xi}_{\mathscr{D}_{\text{max}}}(\theta)$ is $2L$-Lipschitz, and, for all $k \in \{1,\hdots, \mathscr{N}_\varepsilon\}$, $\hat{\xi}_{\mathscr{D}_{\text{max}}}(\theta_k) \leqslant 6\varepsilon$. Hence, for all $\theta \in \Theta$, letting
    \begin{equation*}
         \hat{k} \in \underset{k \in \{1, \hdots, \mathscr{N}_\varepsilon\}}{\argmin} \ \| \theta - \theta_k \|,
    \end{equation*}
    we have
    \begin{equation*}
        \hat{\xi}_{\mathscr{D}_{\text{max}}}(\theta) \leqslant \big|\hat{\xi}_{\mathscr{D}_{\text{max}}}(\theta) - \hat{\xi}_{\mathscr{D}_{\text{max}}}(\theta_{\hat{k}})\big| + \hat{\xi}_{\mathscr{D}_{\text{max}}}(\theta_{\hat{k}}) \leqslant (2L+6)\varepsilon.
    \end{equation*}
    Therefore,
    \begin{equation*}
        T_{\mathscr{P}}(\text{Lip}_1, \mathscr{D}_{\text{max}}) = \underset{\theta \in \Theta}{\sup} \big[d_{\text{Lip}_1}(\mu^\star, \mu_\theta) - d_{\mathscr{D}_{\text{max}}}(\mu^\star, \mu_\theta)\big] = \underset{\theta \in \Theta}{\sup} \ \hat{\xi}_{\mathscr{D}_{\text{max}}}(\theta) \leqslant (2L+6)\varepsilon.
    \end{equation*}
    We have just proved that, for all $\varepsilon >0$, there exists a discriminator $\mathscr{D}_{\text{max}}$ of the form \eqref{eq:def_discriminators} and a positive constant $c$ (independent of $\varepsilon$) such that
    \begin{align*}
        T_{\mathscr{P}}(\text{Lip}_1, \mathscr{D}_{\text{max}}) \leqslant c \varepsilon.
    \end{align*}
    This is the desired result.
    \subsection{Proof of Proposition \ref{prop:substitution}}\label{appendix:proposition_two}
    Let us assume that the statement is not true. If so, there exists $\varepsilon>0$ such that, for all $\delta>0$, there exists $\theta \in \mathscr{M}_{d_\mathscr D}(\mu^\star, \delta)$ satisfying $d(\theta, \bar{\Theta}) > \varepsilon$. Consider $\delta_n = {1}/{n}$, and choose a sequence of parameters $(\theta_n)$ such that
    \begin{equation*}\label{eq:hypothesis_proof_proposition2}
        \theta_n \in \mathscr{M}_{d_\mathscr D}\Big(\mu^\star, \frac{1}{n}\Big) \quad \text{and} \quad d(\theta_n, \bar{\Theta}) > \varepsilon.
    \end{equation*}
    Since $\Theta$ is compact by Assumption \ref{ass:compactness}, we can find a subsequence $(\theta_{\varphi_n})$ that converges to some $\theta_{\text{acc}} \in \Theta$. Thus, for all $n \geqslant 1$, we have
    \begin{equation*}
        d_{\mathscr{D}}(\mu^\star, \mu_{\theta_{\varphi_n}}) \leqslant \underset{\theta \in \Theta}{\inf} \  d_{\mathscr{D}}(\mu^\star, \mu_\theta) + \frac{1}{n},
    \end{equation*}
    and, by continuity of the function $\Theta \ni \theta \mapsto d_{\mathscr{D}}(\mu^\star, \mu_{\theta})$ (Theorem  \ref{th:continuity}),
    \begin{equation*}
        d_{\mathscr{D}}(\mu^\star, \theta_{\text{acc}}) \leqslant \underset{\theta \in \Theta}{\inf} \ d_{\mathscr{D}}(\mu^\star, \mu_\theta).
    \end{equation*}
    We conclude that $\theta_{\text{acc}}$ belongs to $\bar{\Theta}$. This contradicts the fact that $d(\theta_{\text{acc}}, \bar{\Theta}) \geqslant \varepsilon$.

    \subsection{Proof of Lemma \ref{lem:monotonous_equivalence_a_equal_b}} \label{appendix:prop_monotonous_equivalence_a_equal_b}
        Since $a=b$, according to Definition \ref{def:monotonous_equivalence}, there exists a continuously differentiable, strictly increasing function $f:\mathds R^+\to \mathds R^+$ such that, for all $\mu \in \mathscr{P}$,
        \begin{align*}
            d_{\text{Lip}_1}(\mu^\star, \mu) = f(d_{\mathscr{D}}(\mu^\star, \mu)).
        \end{align*}
    For $(\theta, \theta') \in \Theta^2$ we have, as $f$ is strictly increasing,
        \begin{equation*}
            d_{\mathscr{D}}(\mu^\star, \mu_{\theta}) \leqslant  d_{\mathscr{D}}(\mu^\star, \mu_{\theta'}) \iff
             f (d_{\mathscr{D}}(\mu^\star, \mu_{\theta})) \leqslant f (d_{\mathscr{D}}(\mu^\star, \mu_{\theta'})).
              \end{equation*}
    Therefore,
     \begin{equation*}
             d_{\mathscr{D}}(\mu^\star, \mu_{\theta}) \leqslant d_{\mathscr{D}}(\mu^\star, \mu_{\theta'}) \iff d_{\text{Lip}_1}(\mu^\star, \mu_{\theta}) \leqslant  d_{\text{Lip}_1}(\mu^\star, \mu_{\theta'}).
              \end{equation*}
    This proves the first statement of the lemma.

    Let us now show that $d_{\text{Lip}_1}$ can be fully substituted by $d_{\mathscr{D}}$. Let $\varepsilon >0$. Then, for $\delta>0$ (function of $\varepsilon$, to be chosen later) and $\theta \in \mathscr{M}_{d_{\mathscr{D}}}(\mu^\star, \delta)$, we have
    \begin{align*}
        d_{\text{Lip}_1}(\mu^\star, \mu_\theta) - \underset{\theta \in \Theta}{\inf}\ d_{\text{Lip}_1}(\mu^\star, \mu_{\theta})
        &=  f (d_{\mathscr{D}}(\mu^\star, \mu_\theta)) -\underset{\theta \in \Theta}{\inf}\ f(d_{\mathscr D}(\mu^\star, \mu_{\theta})) \\
        &=  f (d_{\mathscr{D}}(\mu^\star, \mu_\theta)) - f ( \underset{\theta \in \Theta}{\inf} \ d_{\mathscr{D}}(\mu^\star, \mu_{{\theta}}) )\\
        &\leqslant \underset{\theta \in \mathscr{M}_{d_{\mathscr{D}}}(\mu^\star, \delta)}{\sup} \big|f (d_{\mathscr{D}}(\mu^\star, \mu_\theta)) - f ( \underset{\theta \in \Theta}{\inf} \ d_{\mathscr{D}}(\mu^\star, \mu_{{\theta}}) )\big|.
    \end{align*}
    According to Theorem \ref{th:continuity}, there exists a nonnegative constant $c$ such that for any $\theta \in \Theta$, $d_{\mathscr{D}}(\mu^\star, \mu_\theta) \leqslant c$. Therefore, using the definition of $\mathscr{M}_{d_{\mathscr{D}}}(\mu^\star, \delta)$ and the fact that $f$ is continuously differentiable, we are led to
    \begin{equation*}
        d_{\text{Lip}_1}(\mu^\star, \mu_\theta) - \underset{\theta \in \Theta}{\inf} \ d_{\text{Lip}_1}(\mu^\star, \mu_{\theta})  \leqslant \delta \underset{x \in [0, c]}{\sup} \ \Big|\frac{\partial f(x)}{\partial x}\Big|.
    \end{equation*}
    The conclusion follows by choosing $\delta$ such that $\delta \underset{x \in [0,c]}{\sup} |\frac{\partial f(x)}{\partial x}| \leqslant \varepsilon$.
    \subsection{Proof of Proposition \ref{prop:monotonous_equivalence}} \label{appendix:prop_monotonous_equivalence}
    Let $\delta \in (0,1)$ and $\theta \in \mathscr{M}_{d_\mathscr{D}}(\mu^\star, \delta)$, i.e., $d_{\mathscr{D}}(\mu^\star, \mu_\theta) -{\inf}_{\theta \in \Theta} \ d_{\mathscr{D}}(\mu^\star, \mu_{{\theta}}) \leqslant \delta$. As $d_{\text{Lip}_1}$ is monotonously equivalent to $d_{\mathscr{D}}$, there exists a continuously differentiable, strictly increasing function $f: \mathds{R}^+ \to \mathds{R}^+$ and $(a, b) \in (\mathds{R}^{\star}_+)^2$ such that
    \begin{equation*}
            \forall \mu \in \mathscr{P}, \ a  f(d_{\mathscr{D}}(\mu^\star, \mu)) \leqslant d_{\text{Lip}_1}(\mu^\star, \mu) \leqslant b  f(d_{\mathscr{D}}(\mu^\star, \mu)).
        \end{equation*}
    So,
    \begin{align*}
        d_{\text{Lip}_1}(\mu^\star, \mu_\theta) &\leqslant bf ( \underset{\theta \in \Theta}{\inf} \ d_{\mathscr{D}}(\mu^\star, \mu_{{\theta}}) + \delta ) \\
        &\leqslant bf (\underset{\theta \in \Theta}{\inf} \ d_{\mathscr{D}}(\mu^\star, \mu_{{\theta}}) ) + O(\delta).
    \end{align*}
    Also,
    \begin{equation*}
        \underset{\theta \in \Theta}{\inf} \ d_{\text{Lip}_1}(\mu^\star, \mu_{\theta}) \geqslant a f (\underset{\theta \in \Theta}{\inf} \ d_{\mathscr{D}}(\mu^\star, \mu_{{\theta}})).
    \end{equation*}
    Therefore,
    \begin{equation*}
        d_{\text{Lip}_1}(\mu^\star, \mu_\theta) - \underset{\theta \in \Theta}{\inf} \ d_{\text{Lip}_1}(\mu^\star, \mu_{\theta}) \leqslant (b-a) f ( \underset{\theta \in \Theta}{\inf} \ d_{\mathscr{D}}(\mu^\star, \mu_{{\theta}})) + O(\delta).
    \end{equation*}
    \subsection{Proof of Lemma \ref{lem:ReLU_net_and_affine_functions}}
    Let $f: \mathds{R}^D \to \mathds{R}$ be in $\text{AFF} \cap \text{Lip}_1$. It is of the form $f(x) = x \cdot u + b$, where $u = (u_1, \hdots, u_D)$, $b \in \mathds{R}$, and $\|u\| \leqslant 1$. Our objective is to prove that there exists a discriminator of the form \eqref{eq:def_discriminators} with $q=2$ and $v_1=2$ that contains the function $f$. To see this, define $V_1 \in \mathscr{M}_{(2, D)}$ and the offset vector $c_1 \in \mathscr{M}_{(2, 1)}$ as
    \begin{equation*}
        V_1 = \begin{bmatrix}
            u_1& \cdots & u_D \\
            u_1& \cdots & u_D
        \end{bmatrix}
        \quad  \mbox{and} \quad c_1 = \begin{bmatrix} 0 \\ \vdots \\0 \end{bmatrix}.
        \end{equation*}
    Letting $V_2 \in \mathscr{M}_{(1, 2)}$, $c_2 \in \mathscr{M}_{(1, 1)}$ be
    \begin{equation*}
        V_2 =
        \begin{bmatrix}
            1 & 0
        \end{bmatrix},
        \quad  c_2 = \begin{bmatrix} b \end{bmatrix},
    \end{equation*}
    we readily obtain $V_2 \Tilde{\sigma} (V_1 x + c_1 )+c_2 = f(x)$. Besides, it is easy to verify that $\|V_1\|_{2,\infty} \leqslant 1$.
    \subsection{Proof of Lemma \ref{lem:wasserstein_distance_1d}}
    Let $\mu$ and $\nu$ be two probability measures in $P_1(E)$ with supports $S_\mu$ and $S_\nu$ satisfying the conditions of the lemma. Let $\pi$ be an optimal coupling between $\mu$ and $\nu$, and let $(X,Y)$ be a random pair with distribution $\pi$ such that
        \begin{equation*}
            d_{\text{Lip}_1}(\mu, \nu) = \mathds{E} \|X-Y\|.
        \end{equation*}
    Clearly, any function $f_0\in \text{Lip}_1$ satisfying $f_0(X) - f_0(Y) = \|X-Y\|$ almost surely will be such that
    \begin{equation*}
     d_{\text{Lip}_1}(\mu, \nu)=| \mathds{E}_{\mu} f_0 - \mathds{E}_{\nu} f_0 | .
    \end{equation*}
    The proof will be achieved if we show that such a function $f_0$ exists and that it may be chosen linear. Since $S_\mu$ and $S_\nu$ are disjoint and convex, we can find a unit vector $u$ of $\mathds{R}^D$ included in the line containing both $S_\mu$ and $S_\nu$ such that $(x_0-y_0) \cdot u > 0 $, where $(x_0, y_0)$ is an arbitrary pair of $S_{\mu}\times S_{\nu}$. Letting $f_0(x)= x \cdot u$ ($x\in E$), we have, for all $(x,y) \in S_{\mu} \times S_{\nu}$, $f_0(x) - f_0(y) = (x-y) \cdot u = \|x-y\|$. Since $f_0$ is a linear and $1$-Lipschitz function on $E$, this concludes the proof.

    \subsection{Proof of Lemma \ref{lem:gaussian_case}} \label{appendix:lem_gaussian_case}
    For any pair of probability measures $(\mu, \nu)$ on $E$ with finite moment of order $2$, we let $W_2(\mu,\nu)$ be the Wasserstein distance of order $2$ between $\mu$ and $\nu$. Recall  \citep[][Definition 6.1]{villani2008optimal} that
    \begin{equation*}
        W_2(\mu, \nu)=\Big(\inf_{\pi \in \Pi (\mu, \nu)}\int_{E \times E} \|x-y\|^2\pi({\rm d}x,{\rm d}y)\Big)^{1/2},
    \end{equation*}
    where $\Pi(\mu, \nu)$ denotes the collection of all joint probability measures on $E \times E$ with marginals $\mu$ and $\nu$. By Jensen's inequality,
    \begin{equation*}
        d_{\text{Lip}_1}(\mu, \nu) = W_1(\mu, \nu) \leqslant W_2(\mu, \nu).
    \end{equation*}
    Let $\Sigma \in \mathscr{M}_{(D,D)}$ be a positive semi-definite matrix, and let $\mu$ be Gaussian $\mathscr{N}(m_1, \Sigma)$ and $\nu$ be Gaussian $\mathscr{N}(m_2, \Sigma)$. Denoting by $(X,Y)$ a random pair with marginal distributions $\mu$ and $\nu$ such that
    \begin{equation*}
        \mathds{E}\|X-Y\|  = W_1 (\mu, \nu),
    \end{equation*}
    we have
    \begin{equation*}
           \|m_1 - m_2\| = \|\mathds{E} (X - Y) \| \leqslant \mathds{E}\|X-Y\| = W_1 (\mu, \nu) \leqslant W_2 (\mu, \nu) = \|m_1 - m_2\|,
    \end{equation*}
    where the last equality follows from \citet[][Proposition 7]{givens1984class}. Thus, $d_{\text{Lip}_1}(\mu, \nu) = \|m_1 - m_2\|$. The proof will be finished if we show that
    \begin{equation*}
        d_{\text{AFF} \cap \text{Lip}_1} (\mu, \nu) \geqslant \|m_1-m_2\|.
        \end{equation*}
    To see this, consider the linear and $1$-Lipschitz function $f: E \ni x \mapsto x \cdot \frac{(m_1-m_2)}{\|m_1-m_2\|}$ (with the convention $0\times \infty =0$), and note that
    \begin{align*}
        d_{\text{AFF} \cap \text{Lip}_1}(\mu, \nu)
        &\geqslant \Big| \int_{E} x \cdot \frac{(m_1 - m_2)}{\|m_1 - m_2\|} \mu({\rm d}x) - \int_{E} y \cdot \frac{(m_1 - m_2)}{\|m_1 - m_2\|} \nu({\rm d}y)  \Big| \\
        &= \Big| \int_{E} x \cdot \frac{(m_1 - m_2)}{\|m_1 - m_2\|} \mu({\rm d}x) - \int_{E} (x-m_1+m_2) \cdot \frac{(m_1 - m_2)}{\|m_1 - m_2\|} \mu({\rm d}x) \Big|  \\
        &= \|m_1 - m_2\|.
    \end{align*}
    \subsection{Proof of Proposition \ref{prop:cluster_wasserstein}}
    Let $\varepsilon >0$, and let $\mu$ and $\nu$ be two probability measures in $P_1(E)$ with compact supports $S_\mu$ and $S_\nu$ such that $\max( \text{diam}(S_\mu), \text{diam}(S_\nu)) \leqslant \varepsilon d(S_\mu, S_\nu)$. Throughout the proof, it is assumed that $d(S_\mu, S_\nu)>0$, otherwise the result is immediate. Let $\pi$ be an optimal coupling between $\mu$ and $\nu$, and let $(X,Y)$ be a random pair with distribution $\pi$ such that
        \begin{equation*}
            d_{\text{Lip}_1}(\mu, \nu) = \mathds{E} \|X-Y\|.
        \end{equation*}
    Any function $f_0\in \text{Lip}_1$ satisfying $\|X-Y\| \leqslant (1+2\varepsilon)(f_0(X) - f_0(Y))$ almost surely will be such that
    \begin{equation*}
     d_{\text{Lip}_1}(\mu, \nu) \leqslant (1+2\varepsilon) | \mathds{E}_{\mu} f_0 - \mathds{E}_{\nu} f_0 | .
    \end{equation*}
    Thus, the proof will be completed if we show that such a function $f_0$ exists and that it may be chosen affine.

    Since $S_\mu$ and $S_\nu$ are compact, there exists $(x^\star, y^\star) \in S_\mu \times S_\nu$ such that $\|x^\star - y^\star\| = d(S_\mu,S_\nu)$.
    By the hyperplane separation theorem, there exists a hyperplane $\mathscr{H}$ orthogonal to the unit vector $ u = \frac{x^\star - y^\star}{\|x^\star-y^\star\|}$ such that $d(x^\star, \mathscr{H})=d(y^\star, \mathscr{H}) = \frac{\|x^\star-y^\star\|}{2}$. For any $x \in E$, we denote by $p_\mathscr{H}(x)$ the projection of $x$ onto $\mathscr{H}$. We thus have $d(x,\mathscr{H})=\|x-p_\mathscr{H}(x) \|$, and $\frac{x^\star+y^\star}{2} = p_\mathscr{H}(\frac{x^\star+y^\star}{2}) = p_\mathscr{H}(x^\star) = p_\mathscr{H}(y^\star)$. In addition, by convexity of $S_\mu$ and $S_\nu$, for any $x \in S_\mu$, $\|x-p_\mathscr{H}(x)\| \geqslant \|x^\star-p_\mathscr{H}(x^\star)\|$. Similarly, for any $y \in S_\nu$, $\|y-p_\mathscr{H}(y)\| \geqslant \|y^\star-p_\mathscr{H}(y^\star)\|$.

    Let the affine function $f_0$ be defined for any $x \in E$ by
    \begin{equation*}
    f_0(x)= (x - p_\mathscr{H}(x)) \cdot u.
    \end{equation*}
    Observe that $f_0(x)=f_0(x+\frac{x^{\star}+y^{\star}}{2})$. Clearly, for any $(x,y) \in E^2$, one has
    \begin{align*}
        |f_0(x) - f_0(y)| &= \big|f_0\big(x-y+\frac{x^{\star}+y^{\star}}{2}\big)\big|\\
        &= \big| \big( \big(x-y+\frac{x^{\star}+y^{\star}}{2}\big) - p_\mathscr{H}\big(x-y+\frac{x^{\star}+y^{\star}}{2}\big) \big).u \big| \\
        &\leqslant \big\| \big(x-y+\frac{x^{\star}+y^{\star}}{2}) - p_\mathscr{H}\big(x-y+\frac{x^{\star}+y^{\star}}{2}\big) \big\| \\
        &\leqslant \big\|x-y+\frac{x^{\star}+y^{\star}}{2}- \frac{x^{\star}+y^{\star}}{2} \big\|\\
        & \quad \mbox{(since $\frac{x^{\star}+y^{\star}}{2} \in \mathscr H$)}\\
        &= \|x-y\|.
    \end{align*}
    Thus, $f_0$ belongs to $\text{Lip}_1$. Besides, for any $(x,y) \in S_\mu \times S_\nu$, we have
    \begin{align*}
        \|x-y\| & \leqslant \|x - p_\mathscr{H}(x)\| + \|p_\mathscr{H}(x) - p_\mathscr{H}(y)\| + \|p_\mathscr{H}(y) -y\| \\
        &\leqslant (x - p_\mathscr{H}(x)) \cdot u - (y - p_\mathscr{H}(y)) \cdot u +  \big\|p_\mathscr{H}(x) - \frac{x^\star + y^\star}{2}\big\| + \big\|p_\mathscr{H}(y) - \frac{x^\star + y^\star}{2} \big\| \\
        &= (x - p_\mathscr{H}(x)) \cdot u - (y - p_\mathscr{H}(y)) \cdot u +  \|p_\mathscr{H}(x) - p_\mathscr{H}(x^\star)\| + \|p_\mathscr{H}(y) - p_\mathscr{H}(y^\star) \|.
        \end{align*}
    Thus,
        \begin{align*}
        \|x-y\| & \leqslant (x - p_\mathscr{H}(x)) \cdot u - (y - p_\mathscr{H}(y)) \cdot u + 2 \max(\text{diam}(S_\mu), \text{diam}(S_\nu)) \\
        &\leqslant f_0(x) - f_0(y) + 2 \varepsilon d(S_\mu, S_\nu) \\
        &= f_0(x) - f_0(y) + 2 \varepsilon (f_0(x^\star) - f_0(y^\star)) \\
        &= f_0(x) - f_0(y) + 2 \varepsilon (f_0(x^\star) - f_0(x) + f_0(x) - f_0(y) + f_0(y) - f_0(y^\star) ) \\
        & \leqslant (1+2\varepsilon)(f_0(x) - f_0(y))\\
        & \quad \mbox{(using the fact that $f_0(x^\star) - f_0(x) \leqslant0$ and $f_0(y^\star) - f_0(y) \geqslant0$)}.
    \end{align*}
    Since $f_0 \in \text{Lip}_1$, we conclude that, for any $(x,y) \in S_\mu \times S_\nu$,
    \begin{equation*}
        |f_0(x) - f_0(y)| \leqslant \|x-y\| \leqslant (1+2\varepsilon) (f_0(x) - f_0(y)).
    \end{equation*}
    \subsection{Proof of Lemma \ref{10032020}}
    Using \citet[][Theorem 11.4.1]{dudley_2002} and the strong law of large numbers, the sequence of empirical measures $(\mu_n)$ almost surely converges weakly in $P_1(E)$ to $\mu^\star$. Thus, we have $\underset{n \to \infty}{\lim} d_{\text{Lip}_1}(\mu^\star, \mu_n) = 0$  almost surely, and so $\underset{n \to \infty}{\lim} d_{\mathscr{D}}(\mu^\star, \mu_n) = 0$ almost surely. Hence, recalling inequality \eqref{eq:epsilon_optim_bound}, we conclude that
        \begin{equation}\label{eq:thetan_converge_almost_surely}
            \underset{\theta_n \in \hat{\Theta}_n}{\sup} \  d_{\mathscr{D}}(\mu^\star, \mu_{{\theta}_n}) - \underset{\theta \in \Theta}{\inf} \ d_{\mathscr{D}}(\mu^\star, \mu_{{\theta}}) \to 0 \quad \text{almost surely.}
        \end{equation}
    Now, fix $\varepsilon>0$ and recall that, by our Theorem \ref{th:continuity}, the function $\Theta \ni \theta \mapsto d_{\text{Lip}_1}(\mu^\star, \mu_\theta)$ is $L$-Lipschitz, for some $L>0$. According to \eqref{eq:thetan_converge_almost_surely} and Proposition \ref{prop:substitution}, almost surely, there exists an integer $N>0$ such that, for all $n \geqslant N$, for all $\theta_n \in \hat{\Theta}_n$, the companion $\bar{\theta}_n \in \bar{\Theta}$ is such that $\|\theta_n-\bar{\theta}_n\| \leqslant \frac{\varepsilon}{L}$. We conclude by observing that $|\varepsilon_{\text{estim}}| \leqslant \sup_{\theta_n \in \hat{\Theta}_n}\ |d_{\text{Lip}_1}(\mu^\star, \mu_{{\theta}_n}) - d_{\text{Lip}_1}(\mu^\star, \mu_{\bar{\theta}_n})| \leqslant L \times \frac{\varepsilon}{L}$.
    \subsection{Proof of Proposition \ref{prop:generalization_bounds_for_dD}}\label{appendix:lem_generalization_bounds_for_dD}
    Let $\mu_n$ be the empirical measure based on $n$ i.i.d.~samples $X_1, \hdots, X_n$ distributed according to $\mu^\star$. Recall (equation \eqref{eq:IPMs}) that
    \begin{equation*}
        d_{\mathscr{D}}(\mu^\star, \mu_n) = \underset{\alpha \in \Lambda}{\sup}\ |\mathds{E}_{\mu^\star} D_\alpha - \mathds{E}_{\mu_n} D_\alpha|
        = \underset{\alpha \in \Lambda}{\sup} \ \Big|\mathds{E}_{\mu^\star} D_\alpha - \frac{1}{n} \sum_{i=1}^n D_\alpha(X_i) \Big|.
    \end{equation*}
    Let $g$ be the real-valued function defined on $E^n$ by
    \begin{equation*}
        g(x_1, \hdots, x_n) = \underset{\alpha \in \Lambda}{\sup}\ \Big|\mathds{E}_{\mu^\star} D_\alpha - \frac{1}{n} \sum_{i=1}^n D_\alpha(x_i) \Big|.
    \end{equation*}
    Observe that, for $(x_1, \hdots, x_n) \in E^n$ and $(x'_1, \hdots, x'_n) \in E^n$,
    \begin{align}
        |g(x_1, \hdots, x_n) - g(x'_1, \hdots, x'_n)| &\leqslant \underset{\alpha \in \Lambda}{\sup} \ \Big |\frac{1}{n} \sum_{i=1}^n D_\alpha(x_i) - \frac{1}{n} \sum_{i=1}^n D_\alpha(x'_i)  \Big| \nonumber \\
        &\leqslant \frac{1}{n} \underset{\alpha \in \Lambda}{\sup} \ \sum_{i=1}^n |D_\alpha(x_i) - D_\alpha(x'_i)|  \nonumber\\
        &\leqslant \frac{1}{n} \sum_{i=1}^n \ \| x_i - x'_i \|.\label{REF}
    \end{align}
    We start by examining statement $(i)$, where $\mu^\star$ has compact support with diameter $B$. In this case, letting $X'_i$ be an independent copy of $X_i$, we have, almost surely,
    \begin{equation*}
        |g(X_1, \hdots, X_n) - g(X_1, \hdots, X'_i, \hdots, X_n)| \leqslant \frac{B}{n}.
    \end{equation*}
    An application of McDiarmid's inequality \citep{mcdiarmid_1989} shows that for any $\eta \in (0,1)$,  with probability at least $1-\eta$,
    \begin{equation}\label{eq:25}
        d_{\mathscr{D}}(\mu^\star, \mu_n) \leqslant \mathds{E} d_{\mathscr{D}}(\mu^\star, \mu_n) + B \sqrt{ \frac{\log(1/\eta)}{2n}}.
    \end{equation}
    Next, for each $\alpha \in \Lambda$, let $Y_\alpha$ denote the random variable defined by \begin{equation*}
        Y_\alpha = \mathds{E}_{\mu^\star} D_\alpha - \frac{1}{n} \sum_{i=1}^n D_\alpha(X_i).
    \end{equation*}
    Using a similar reasoning as in the proof of Proposition \ref{prop:neural_nets_tightness}, one shows that for any $(\alpha, \alpha') \in \Lambda^2$ and any $x \in E$,
    \begin{equation*}
        |D_\alpha(x) - D_{\alpha'}(x) | \leqslant Q^{1/2} \big(q \|x\| + \frac{q (q-1)K_2}{2} + q \big) \|\alpha-\alpha'\|,
    \end{equation*}
    where we recall that $q$ is the depth of the discriminator. Since $\mu^\star$ has compact support,
    \begin{equation*}
        \ell = \int_{E} Q^{1/2} \big(q \|x\| + \frac{q (q-1)K_2}{2} + q \big) \mu^\star({\rm d} x) < \infty.
    \end{equation*}
    Observe that
    \begin{equation*}
        |Y_\alpha - Y_{\alpha'}| \leqslant \frac{1}{n} \|\alpha-\alpha'\| \ |\xi(n)|,
    \end{equation*}
    where
    \begin{equation*}
        \xi_n = \sum_{i=1}^n Q^{1/2} \big(\ell + q \|X_i\| + \frac{q (q-1)K_2}{2} + q \big).
    \end{equation*}
    Thus, using \citet[][Proposition 2.5.2]{vershynin2018high}, there exists a positive constant $c=O(qQ^{1/2}(D^{1/2}+q))$ such that, for all $\lambda \in \mathds R$,
    \begin{equation*}
        \mathds{E} e^{\lambda (Y_\alpha - Y_{\alpha'})} \leqslant \mathds{E} e^{\lambda \|\alpha-\alpha'\| \ |\frac{\xi_n}{n}|} \leqslant e^{c^2\frac{1}{n} \|\alpha-\alpha'\|^2 \lambda^2}.
    \end{equation*}
    We conclude that the process $(Y_\alpha)$ is sub-Gaussian \citep[][Definition 5.20]{vanhandel2016probability} for the distance $d(\alpha, \alpha') = \frac{c \|\alpha-\alpha'\|}{\sqrt{n}}$. Therefore, using \citet[][Corollary 5.25]{vanhandel2016probability}, we have
    \begin{equation*}
        \mathds{E} d_{\mathscr{D}}(\mu^\star, \mu_n)=\mathds{E} \underset{\alpha \in \Lambda}{\sup} \ \Big|\mathds{E}_{\mu^\star} D_\alpha - \frac{1}{n} \sum_{i=1}^n D_\alpha(X_i) \Big| \leqslant \frac{12 c}{\sqrt{n}} \int_0^\infty \sqrt{\log \mathscr{N}(\Lambda, \|\cdot\|, u)} {\rm d} u,
    \end{equation*}
    where $\mathscr{N}(\Lambda, \|\cdot\|, u)$ is the $u$-covering number of $\Lambda$ for the norm $\|\cdot\|$. Since $\Lambda$ is bounded, there exists $r>0$ such that $\mathscr{N}(\Lambda, \|\cdot\|, u) = 1$ for $u \geqslant rQ^{1/2}$ and
    \begin{equation*}
        \mathscr{N}(\Lambda, \|\cdot\|, u) \leq  \bigg( \frac{rQ^{1/2}}{u} \bigg)^Q \quad \mbox{for }u < rQ^{1/2}.
    \end{equation*}
    Thus,
    \begin{equation*}\label{eq:35}
        \mathds{E} d_{\mathscr{D}}(\mu^\star, \mu_n) \leqslant \frac{c_1}{\sqrt{n}}
    \end{equation*}
    for some positive constant $c_1=O(qQ^{3/2}(D^{1/2}+q))$. Combining this inequality with \eqref{eq:25} shows the first statement of the lemma.

    We now turn to the more general situation (statement $(ii)$) where $\mu^{\star}$ is $\gamma$ sub-Gaussian. According to inequality \eqref{REF}, the function $g$ is $\frac{1}{n}$-Lipschitz with respect to the $1$-norm on $E^n$. Therefore, by combining  \citet[][Theorem 1]{kontorovich2014concentration} and \citet[][Proposition 2.5.2]{vershynin2018high}, we have that for any $\eta \in (0,1)$, with probability at least $1-\eta$,
    \begin{equation}\label{eq:26}
        d_{\mathscr{D}}(\mu^\star, \mu_n) \leqslant \mathds{E} d_{\mathscr{D}}(\mu^\star, \mu_n) +  8 \gamma \sqrt{eD} \sqrt{\frac{\log(1/\eta)}{n}}.
    \end{equation}
    As in the first part of the proof, we let
    \begin{equation*}
        Y_\alpha = \mathds{E}_{\mu^\star} D_\alpha - \frac{1}{n} \sum_{i=1}^n D_\alpha(X_i),
    \end{equation*}
    and recall that for any $(\alpha, \alpha') \in \Lambda^2$ and any $x \in E$,
    \begin{equation*}
        |D_\alpha(x) - D_{\alpha'}(x) | \leqslant Q^{1/2} \big(q \|x\| + \frac{q (q-1)K_2}{2} + q\big) \|\alpha-\alpha'\|.
    \end{equation*}
    Since $\mu^\star$ is sub-Gaussian, we have \citep[see, e.g.,][Lemma 1]{jin2019short},
    \begin{equation*}
    \label{eq:local_lipschitzness}
        \ell = \int_{E} Q^{1/2} \big(q \|x\| + \frac{q (q-1)K_2}{2} + q\big) \mu^\star({\rm d} x) < \infty.
    \end{equation*}
    Thus,
    \begin{equation*}
        |Y_\alpha - Y_{\alpha'}| \leqslant \frac{1}{n} \|\alpha-\alpha'\| \ |\xi(n)|,
    \end{equation*}
    where
    \begin{equation*}
        \xi_n = \sum_{i=1}^n Q^{1/2} \big(\ell + q \|X_i\| + \frac{q (q-1)K_2}{2} + q \big).
    \end{equation*}
    According to \citet[][Lemma 1]{jin2019short}, the real-valued random variable $\xi_n$ is sub-Gaussian. We obtain that, for some positive constant $c_2=O(qQ^{3/2}(D^{1/2}+q))$,
    \begin{equation*}
        \mathds{E} d_{\mathscr{D}}(\mu^\star, \mu_n) \leqslant \frac{c_2}{\sqrt{n}},
    \end{equation*}
    and the conclusion follows by combining this inequality with \eqref{eq:26}.
    \subsection{Proof of Theorem \ref{theorem:asymptotic_approximation}}\label{appendix:theorem_asymptotic_approximation}
    Let $\varepsilon>0$ and $\eta \in (0,1)$. According to Theorem \ref{th:approx_properties}, there exists a discriminator $\mathscr{D}$ of the form \eqref{eq:def_discriminators} (i.e., a collection of neural networks) such that
    \begin{equation*}
        T_{\mathscr{P}}(\text{Lip}_1, \mathscr{D}) \leqslant \varepsilon.
    \end{equation*}
    We only prove statement $(i)$ since both proofs are similar. In this case, according to Proposition \ref{prop:generalization_bounds_for_dD}, there exists a constant $c_1>0$ such that, with probability at least $1-\eta$,
    \begin{equation*}
        d_{\mathscr{D}}(\mu^\star, \mu_n) \leqslant \frac{c_1}{\sqrt{n}} + B\sqrt{ \frac{\log(1/\eta)}{2n}}.
    \end{equation*}
    Therefore, using inequality \eqref{eq:eps_estim_and_esp_optim}, we have, with probability at least $1-\eta$,
    \begin{equation*}
        0 \leqslant \varepsilon_{\text{estim}} + \varepsilon_{\text{optim}} \leqslant 2\varepsilon + \frac{2c_1}{\sqrt{n}} + 2B\sqrt{ \frac{\log(1/\eta)}{2n}}.
    \end{equation*}

    \subsection{Proof of Proposition \ref{prop:asymptotic_optimization_properties} } \label{appendix:prop_asymptotic_optimization_properties}
    Observe that, for $\theta \in \Theta$,
    \begin{align*}
        0 &\leqslant d_{\mathscr{D}}(\mu^\star, \mu_\theta) - \underset{\theta \in \Theta}{\inf} \ d_{\mathscr{D}}(\mu^\star, \mu_{{\theta}}) \\
        & = d_{\mathscr{D}}(\mu^\star, \mu_\theta) - d_{\mathscr{D}}(\mu_n, \mu_\theta)+d_{\mathscr{D}}(\mu_n, \mu_\theta) - \underset{\theta \in \Theta}{\inf} \  d_{\mathscr{D}}(\mu_n, \mu_{{\theta}})\\
        & \quad +\underset{\theta \in \Theta}{\inf} \ d_{\mathscr{D}}(\mu_n, \mu_{{\theta}}) - \underset{\theta \in \Theta}{\inf} \ d_{\mathscr{D}}(\mu^\star, \mu_{{\theta}})\\
        & \leqslant d_{\mathscr{D}}(\mu^\star, \mu_n)+ d_{\mathscr{D}}(\mu_n, \mu_\theta) - \underset{\theta \in \Theta}{\inf} \  d_{\mathscr{D}}(\mu_n, \mu_{{\theta}}) + d_{\mathscr{D}}(\mu^\star, \mu_n)\\
        & =2d_{\mathscr{D}}(\mu^\star, \mu_n)+ d_{\mathscr{D}}(\mu_n, \mu_\theta) - \underset{\theta \in \Theta}{\inf} \  d_{\mathscr{D}}(\mu_n, \mu_{{\theta}}),
    \end{align*}
    where we used respectively the triangle inequality and
    \begin{equation*}
    |\underset{\theta \in \Theta}{\inf} \ d_{\mathscr{D}}(\mu_n, \mu_{{\theta}}) - \underset{\theta \in \Theta}{\inf} \ d_{\mathscr{D}}(\mu^\star, \mu_{{\theta}})| \leqslant\underset{\theta \in \Theta}{\sup} \ | d_{\mathscr{D}}(\mu^\star, \mu_{{\theta}}) - d_{\mathscr{D}}(\mu_n, \mu_{{\theta}})| \leqslant d_{\mathscr{D}}(\mu^\star, \mu_n).
    \end{equation*}
    Thus, assuming that $T_{\mathscr{P}}(\text{Lip}_1, \mathscr{D}) \leqslant \varepsilon$, we have
        \begin{align}
            0 &\leqslant d_{\text{Lip}_1}(\mu^\star, \mu_\theta) -  \underset{\theta \in \Theta}{\inf} \ d_{\text{Lip}_1}(\mu^\star, \mu_{{\theta}})   \nonumber \\
            & \leqslant d_{\text{Lip}_1}(\mu^\star, \mu_\theta) - d_{\mathscr{D}}(\mu^\star, \mu_\theta) + d_{\mathscr{D}}(\mu^\star, \mu_\theta) - \underset{\theta \in \Theta}{\inf} \ d_{\mathscr{D}}(\mu^\star, \mu_{{\theta}}) \nonumber\\
            & \leqslant T_{\mathscr{P}}(\text{Lip}_1, \mathscr{D}) + d_{\mathscr{D}}(\mu^\star, \mu_\theta) - \underset{\theta \in \Theta}{\inf} \ d_{\mathscr{D}}(\mu^\star, \mu_{{\theta}}) \nonumber\\
            & \leqslant \varepsilon + 2d_{\mathscr{D}}(\mu^\star, \mu_n) + d_{\mathscr{D}}(\mu_n, \mu_\theta) - \underset{\theta \in \Theta}{\inf} \  d_{\mathscr{D}}(\mu_n, \mu_{{\theta}}).\label{121019}
        \end{align}
        Let $\delta>0$ and $\theta \in \mathscr{M}_{d_{\mathscr{D}}}(\mu_n, {\delta}/{2})$, that is,
        \begin{equation*}
        d_{\mathscr{D}}(\mu_n, \mu_\theta) - \underset{\theta \in \Theta}{\inf} \ d_{\mathscr{D}}(\mu_n, \mu_{{\theta}})\leqslant \delta/2.
        \end{equation*}
    For $\eta \in (0,1)$, we know from the second statement of Proposition \ref{prop:generalization_bounds_for_dD} that there exists $N \in \mathds{N}^{\star}$ such that, for all $n \geqslant N$, $2d_{\mathscr{D}}(\mu^\star, \mu_n) \leqslant {\delta}/{2}$ with probability at least $1-\eta$. Therefore, we conclude from \eqref{121019} that for $n \geqslant N$, with probability at least $1-\eta$,
    \begin{equation*}
        d_{\text{Lip}_1}(\mu^\star, \mu_{\theta}) -  \underset{\theta \in \Theta}{\inf} \ d_{\text{Lip}_1}(\mu^\star, \mu_{{\theta}}) \leqslant \varepsilon + \delta.
    \end{equation*}
    
\end{document}